\documentclass{article}
\usepackage{amssymb}
\usepackage{amsmath,amsfonts,amsthm,amssymb}
\usepackage{multicol,lipsum}
 \usepackage{relsize}

\usepackage{soul,color}
\usepackage{float}
\usepackage{caption}
\usepackage{fullpage}

\usepackage{graphicx,float,wrapfig}
\PassOptionsToPackage{numbers}{natbib}
\pdfoutput=1
\linespread{1.1}

\def\showkeys{0}
\def\showdraftbox{1}
\def\showcolorlinks{1}
\def\usemicrotype{1}
\def\showfixme{0}

\usepackage{natbib}
\usepackage{bbm,amsmath,amsthm,amsfonts, amsopn, amssymb}
\usepackage{color}
\usepackage{graphicx, subcaption, epstopdf}
\usepackage{hyperref}
\usepackage{mathtools}
\usepackage[shortlabels]{enumitem}

\newenvironment{customthm}[1]
{\innercustomthm}
{\endinnercustomthm}

\newtheorem{theorem}{Theorem}[section]

\newtheorem{assumption}{Assumption}

\newtheorem{definition}[theorem]{Definition}
\newtheorem{lemma}[theorem]{Lemma}
\newtheorem{proposition}[theorem]{Proposition}

\newcommand{\KKT}{optimality~}

\newcommand{\colspan}{\textup{col-span}}
\newcommand{\smin}{\sigma_{\min}}
\newcommand{\smax}{\sigma_{\max}}
\newcommand{\smaxz}{\smax(Z)}
\newcommand{\sminz}{\smin(Z)}

\newcommand{\BALD}{\begin{aligned}}
\newcommand{\EALD}{\end{aligned}}
\newcommand{\BALDS}{\begin{aligned*}}
\newcommand{\EALDS}{\end{aligned*}}
\newcommand{\BCAS}{\begin{cases}}
\newcommand{\ECAS}{\end{cases}}
\newcommand{\BEAS}{\begin{eqnarray*}}
\newcommand{\EEAS}{\end{eqnarray*}}
\newcommand{\BEQ}{\begin{equation}}
\newcommand{\EEQ}{\end{equation}}
\newcommand{\BIT}{\begin{itemize}}
\newcommand{\EIT}{\end{itemize}}
\newcommand{\BMAT}{\begin{bmatrix}}
\newcommand{\EMAT}{\end{bmatrix}}
\newcommand{\BNUM}{\begin{enumerate}}
\newcommand{\ENUM}{\end{enumerate}}

\newcommand{\BA}{\begin{array}}
\newcommand{\EA}{\end{array}}



\newcommand{\diag}{\mathop{\mathbf{diag}}}

\DeclareMathOperator{\trace}{tr}

\newcommand{\inner}[1]{\langle #1\rangle }

\usepackage{etex}


\usepackage[l2tabu, orthodox]{nag}


\usepackage{xspace,enumerate}

\usepackage[dvipsnames]{xcolor}

\usepackage[full]{textcomp}

\usepackage[american]{babel}


\usepackage{mathtools}




\usepackage{amsthm}

\usepackage{algorithm}
\usepackage{algorithmic}

\newtheorem*{fact*}{Fact}

\newtheorem*{hypothesis*}{Hypothesis}

\theoremstyle{definition}

\theoremstyle{remark}
\newtheorem{claim}[theorem]{Claim}
\newtheorem*{claim*}{Claim}

\newtheorem*{remark*}{Remark}

\newtheorem*{observation*}{Observation}

%


\usepackage{textcomp} 
\usepackage[scr=rsfso]{mathalfa}
\usepackage{bm} 


\ifnum\showkeys=1
\usepackage[color]{showkeys}
\fi


\ifnum\showcolorlinks=0
\usepackage[
pagebackref,
colorlinks=false,
pdfborder={0 0 0}
]{hyperref}
\fi

\usepackage{prettyref}


\newcommand{\savehyperref}[2]{\texorpdfstring{\hyperref[#1]{#2}}{#2}}

\newrefformat{eq}{\savehyperref{#1}{\textup{(\ref*{#1})}}}
\newrefformat{eqn}{\savehyperref{#1}{\textup{(\ref*{#1})}}}
\newrefformat{lem}{\savehyperref{#1}{Lemma~\ref*{#1}}}
\newrefformat{def}{\savehyperref{#1}{Definition~\ref*{#1}}}
\newrefformat{thm}{\savehyperref{#1}{Theorem~\ref*{#1}}}
\newrefformat{cor}{\savehyperref{#1}{Corollary~\ref*{#1}}}
\newrefformat{cha}{\savehyperref{#1}{Chapter~\ref*{#1}}}
\newrefformat{sec}{\savehyperref{#1}{Section~\ref*{#1}}}
\newrefformat{app}{\savehyperref{#1}{Appendix~\ref*{#1}}}
\newrefformat{tab}{\savehyperref{#1}{Table~\ref*{#1}}}
\newrefformat{fig}{\savehyperref{#1}{Figure~\ref*{#1}}}
\newrefformat{hyp}{\savehyperref{#1}{Hypothesis~\ref*{#1}}}
\newrefformat{alg}{\savehyperref{#1}{Algorithm~\ref*{#1}}}
\newrefformat{rem}{\savehyperref{#1}{Remark~\ref*{#1}}}
\newrefformat{item}{\savehyperref{#1}{Item~\ref*{#1}}}
\newrefformat{step}{\savehyperref{#1}{step~\ref*{#1}}}
\newrefformat{conj}{\savehyperref{#1}{Conjecture~\ref*{#1}}}
\newrefformat{fact}{\savehyperref{#1}{Fact~\ref*{#1}}}
\newrefformat{prop}{\savehyperref{#1}{Proposition~\ref*{#1}}}
\newrefformat{prob}{\savehyperref{#1}{Problem~\ref*{#1}}}
\newrefformat{claim}{\savehyperref{#1}{Claim~\ref*{#1}}}
\newrefformat{relax}{\savehyperref{#1}{Relaxation~\ref*{#1}}}
\newrefformat{red}{\savehyperref{#1}{Reduction~\ref*{#1}}}
\newrefformat{part}{\savehyperref{#1}{Part~\ref*{#1}}}


\newcommand{\Sref}[1]{\hyperref[#1]{\S\ref*{#1}}}

\usepackage{nicefrac}



\ifnum\usemicrotype=1
\usepackage{microtype}
\fi

\definecolor{forestgreen(traditional)}{rgb}{0.0, 0.27, 0.13}
\definecolor{brightgreen}{rgb}{0.0, 0.87, 0.13}

%
%
%




\ifnum\showfixme=0

\fi

\usepackage{boxedminipage}


\newcommand{\Paren}[1]{\left(#1\right)}





\newcommand{\Set}[1]{\left\{#1\right\}}

\newcommand{\norm}[1]{\lVert#1\rVert}
\newcommand{\Norm}[1]{\left\lVert#1\right\rVert}







\newcommand{\Esymb}{\mathbb{E}}
\newcommand{\Psymb}{\mathbb{P}}
\newcommand{\Vsymb}{\mathbb{V}}

\DeclareMathOperator*{\E}{\Esymb}
\DeclareMathOperator*{\Var}{\Vsymb}
\DeclareMathOperator*{\ProbOp}{\Psymb}

\renewcommand{\Pr}{\ProbOp}










\newcommand{\textparen}[1]{\text{(#1)}}

\ifx\because\undefined
\newcommand{\because}[1]{\textparen{because #1}}
\else
\renewcommand{\because}[1]{\textparen{because #1}}
\fi





















\newcommand{\mper}{\,.}
\newcommand{\mcom}{\,,}

\newcommand\bdot\bullet


\ifx\mathds\undefined 
\DeclareMathOperator{\Ind}{\mathbb{I}}
\else
\DeclareMathOperator{\Ind}{\mathds 1}}
\fi






\DeclareMathOperator{\poly}{poly}

\DeclareMathOperator{\argmax}{argmax}

\DeclareMathOperator{\sign}{sign}









\newcommand{\R}{\mathbb R}







\renewcommand{\le}{\leqslant}

\renewcommand{\ge}{\geqslant}

\ifnum\showdraftbox=1

\else

\fi


\let\epsilon=\varepsilon

\numberwithin{equation}{section}



\newcommand\MYcurrentlabel{xxx}

\newcommand{\MYstore}[2]{%
  \global\expandafter \def \csname MYMEMORY #1 \endcsname{#2}%
}

\newcommand{\MYload}[1]{%
  \csname MYMEMORY #1 \endcsname%
}

\newcommand{\MYnewlabel}[1]{%
  \renewcommand\MYcurrentlabel{#1}%
  \MYoldlabel{#1}%
}

\newcommand{\MYdummylabel}[1]{}

\newcommand{\torestate}[1]{%
  \let\MYoldlabel\label%
  \let\label\MYnewlabel%
  #1%
  \MYstore{\MYcurrentlabel}{#1}%
  \let\label\MYoldlabel%
}

\newcommand{\restatetheorem}[1]{%
  \let\MYoldlabel\label
  \let\label\MYdummylabel
  \begin{theorem*}[Restatement of \prettyref{#1}]
    \MYload{#1}
  \end{theorem*}
  \let\label\MYoldlabel
}

\newcommand{\restatelemma}[1]{%
  \let\MYoldlabel\label
  \let\label\MYdummylabel
  \begin{lemma*}[Restatement of \prettyref{#1}]
    \MYload{#1}
  \end{lemma*}
  \let\label\MYoldlabel
}

\newcommand{\restateprop}[1]{%
  \let\MYoldlabel\label
  \let\label\MYdummylabel
  \begin{proposition*}[Restatement of \prettyref{#1}]
    \MYload{#1}
  \end{proposition*}
  \let\label\MYoldlabel
}

\newcommand{\restatefact}[1]{%
  \let\MYoldlabel\label
  \let\label\MYdummylabel
  \begin{fact*}[Restatement of \prettyref{#1}]
    \MYload{#1}
  \end{fact*}
  \let\label\MYoldlabel
}

\newcommand{\restate}[1]{%
  \let\MYoldlabel\label
  \let\label\MYdummylabel
  \MYload{#1}
  \let\label\MYoldlabel
}




\newcommand{\eps}{\epsilon}


\let\origparagraph\paragraph
\renewcommand{\paragraph}[1]{\origparagraph{#1.}}


\allowdisplaybreaks


\sloppy



\usepackage{paralist}


\usepackage{comment}


\usepackage{times}
\linespread{1}

\newcommand{\Po}{P_{\Omega}}
\def\bar{\overline}

\begin{document}
\title{Matrix Completion has No Spurious Local Minimum}
\author{Rong Ge\thanks{Duke University, rongge@cs.duke.edu. }\and Jason D. Lee\thanks{University of Southern California, jasonlee@marshall.usc.edu. }\and Tengyu Ma\thanks{Princeton University, tengyu@cs.princeton.edu. Supported in part by Simons Award in Theoretical Computer Science and IBM PhD Fellowship.}}
\maketitle

\begin{abstract}
Matrix completion is a basic machine learning problem that has wide applications, especially in collaborative filtering and recommender systems. Simple non-convex optimization algorithms are popular and effective in practice. Despite recent progress in proving various non-convex algorithms converge from a good initial point, it remains unclear why random or arbitrary initialization suffices in practice. 
We prove that the commonly used non-convex objective function for \textit{positive semidefinite} matrix completion has no spurious local minima \--- all local minima must also be global. Therefore, many popular optimization algorithms such as (stochastic) gradient descent can provably solve positive semidefinite matrix completion with \textit{arbitrary} initialization in polynomial time. The result can be generalized to the setting when the observed entries contain noise. We believe that our main proof strategy can be useful for understanding geometric properties of other statistical problems involving partial or noisy observations. 

\end{abstract}

\section{Introduction}

Matrix completion is the problem of recovering a low rank matrix from partially observed entries. It has been widely used in collaborative filtering and recommender systems~\cite{koren2009bellkor,rennie2005fast}, dimension reduction \cite{candes2011robust} and multi-class learning \cite{amit2007uncovering}. 
There has been extensive work on designing efficient algorithms for matrix completion with guarantees. One earlier line of results (see~\cite{recht2011simpler,candes2010power,candes2009exact} and the references therein) rely on convex relaxations.  These algorithms achieve strong statistical guarantees, 
but are quite computationally expensive in practice.

More recently,
there has been growing interest in analyzing non-convex algorithms for matrix completion \cite{keshavan2010matrix,keshavan2010matrixnoisy,jain2013low,hardt2014understanding, hardt2014fast,sun2015guaranteed, zhao2015nonconvex,chen2015fast,DBLP:conf/icml/SaRO15,chen2015fast}. Let $M \in \R^{d\times d}$ be the target matrix with rank $r \ll d$ that we aim to recover, and let $\Omega = \{(i,j): M_{i,j}\mbox{ is observed}\}$ be the set of observed entries. These methods are instantiations of optimization algorithms applied to the objective\footnote{In this paper, we focus on the symmetric case when the true $M$ has a symmetric decomposition $M = ZZ^T$. Some of previous papers work on the asymmetric case when $M = ZW^T$, which is harder than the symmetric case. }, 
\begin{equation} \label{eq:obj}
f(X) = \frac{1}{2} \sum_{(i,j)\in \Omega} \left[M_{i,j} - (XX^{\top})_{i,j}\right]^2,
\end{equation}
These algorithms are much faster than the convex relaxation algorithms, which is crucial for their empirical success in large-scale collaborative filtering applications \cite{koren2009bellkor}.

Most of the theoretical analysis of the nonconvex procedures require careful initialization schemes: the initial point should already be close to optimum\footnote{The work of De Sa et al.~\cite{DBLP:conf/icml/SaRO15} is an exception, which gives an algorithm that uses fresh samples at every iteration to solve matrix completion (and other matrix problems) approximately.}.  In fact, Sun and Luo~\cite{sun2015guaranteed} showed that after this initialization the problem is effectively strongly-convex, hence many different optimization procedures can be analyzed by standard techniques from convex optimization. 

However, in practice people typically use a random initialization, which still leads to robust and fast convergence. Why can these practical algorithms find the optimal solution in spite of the non-convexity?
In this work we investigate this question and show that the matrix completion objective has {\em no spurious} local minima. More precisely, we show that any local minimum $X$ of objective function $f(\cdot)$ is also a global minimum with $f(X) = 0$, and recovers the correct low rank matrix $M$.

Our characterization of the structure in the objective function implies that (stochastic) gradient descent from arbitrary starting point converge to a global minimum. This is because gradient descent converges to a local minimum~\cite{ge2015escaping,lee2016gradient}, and every local minimum is also a global minimum.

\subsection{Main results}

Assume the target matrix $M$ is symmetric and each entry of $M$ is observed with probability $p$ independently \footnote{The entries $(i,j)$ and $(j,i)$ are the same. With probability $p$ we observe both entries and otherwise we observe neither.}. We assume $M = ZZ^{\top}$ for some matrix $Z \in \R^{d\times r}$. 

There are two known issues with matrix completion. First, the choice of $Z$ is not unique since $M = (ZR)(ZR)^{\top}$ for any orthonormal matrix $Z$. Our goal is to find one of these equivalent solutions.

Another issue is that matrix completion is impossible when $M$ is ``aligned'' with standard basis. For example, when $M$ is the identity matrix in its first $r\times r$ block, we will very likely be observing only 0 entries. To address this issue, we make the following standard assumption:

\begin{assumption} \label{assump:incoherence}
	For any row $Z_i$ of $Z$, we have
	$$
	\|Z_i\|\le \mu/\sqrt{d}\cdot \|Z\|_F.
	$$
	Moreover, $Z$ has a bounded condition number $\smax(Z)/\smin(Z) = \kappa$.
\end{assumption}
Throughout this paper we think of $\mu$ and $\kappa$ as small constants, and the sample complexity depends polynomially on these two parameters. Also note that this assumption is independent of the choice of $Z$: all $Z$ such that $ZZ^T = M$ have the same row norms and Frobenius norm.

This assumption is similar to the ``incoherence'' assumption \cite{candes2009exact}. Our assumption is the same as the one used in analyzing non-convex algorithms \cite{keshavan2010matrix,keshavan2010matrixnoisy,sun2015guaranteed}.

We enforce $X$ to also satisfy this assumption by a regularizer
\begin{equation}
f(X) = \frac{1}{2} \sum_{(i,j)\in \Omega} \left[M_{i,j} - (XX^{\top})_{i,j}\right]^2 + R(X),
\label{eq:reg-obj}
\end{equation}
where $R(X)$ is a function that penalizes $X$ when one of its rows is too large. See Section~\ref{sec:rank1} and Section~\ref{sec:rankr} for the precise definition.
Our main result shows that in this setting, the regularized objective function has no spurious local minimum:

\begin{theorem}\label{thm:main} [Informal] All local minimum of the regularized objective (\ref{eq:reg-obj}) satisfy $XX^T = ZZ^T = M$ when $p \ge \mbox{poly}(\kappa,r,\mu,\log d)/d$. 
\end{theorem}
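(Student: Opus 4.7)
My plan is to first prove the analogous no-spurious-local-minima statement for the fully-observed (population) objective $g(X) = \tfrac{1}{2}\|M - XX^\top\|_F^2 + R(X)$ and then transfer it to the sampled objective $f$ via a concentration argument. For the population step, I would take any second-order critical point $X$ of $g$, form the orthogonal alignment $U = \argmin_{V \in O(r)} \|X - ZV\|_F$, and use the test direction $\Delta = X - ZU$. The key algebraic identity $XX^\top - ZZ^\top = X\Delta^\top + \Delta (ZU)^\top$, together with the Procrustes symmetry $(ZU)^\top \Delta = \Delta^\top (ZU)$, allows one to combine the first-order condition $(XX^\top - ZZ^\top)X = 0$ with the Hessian inequality $\nabla^2 g(X)[\Delta,\Delta] \ge 0$ to force $\|XX^\top - ZZ^\top\|_F^2 = 0$, modulo lower-order regularizer terms. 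The rank-$1$ case is a clean warm-up in which the alignment reduces to a single sign; the only new ingredient in general rank is this Procrustes step.

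To lift the argument to the sampled objective, I would observe that both $\langle \nabla f(X),\Delta\rangle$ and $\nabla^2 f(X)[\Delta,\Delta]$ are weighted sums of the form $\sum_{(i,j)\in \Omega} A_{ij} B_{ij}$, where $A$ and $B$ range over $XX^\top - ZZ^\top$ and $X\Delta^\top + \Delta X^\top$; their expectations over $\Omega$ are exactly $p\langle A,B\rangle$. The goal is to prove a concentration estimate of the shape $|\sum_\Omega A_{ij}B_{ij} - p\langle A,B\rangle| \le \varepsilon\, p\, \|XX^\top - ZZ^\top\|_F^2$ for every incoherent $X$, so that after dividing by $p$ the second-order condition for $f$ still yields an approximate version of the population inequality and we conclude $XX^\top = ZZ^\top = M$. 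The regularizer $R$ is essential here: it guarantees that each row of $X$ (and hence of $\Delta = X - ZU$) at a local minimum has norm at most $O(\mu \|X\|_F/\sqrt d)$, which is precisely the delocalization needed for Bernstein-type tail bounds on these entrywise sums.

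The main obstacle I anticipate is that the test direction $\Delta$ depends on both the random sample $\Omega$ and on $X$, so a fixed-direction concentration bound is insufficient. I would address this by proving a uniform concentration inequality---essentially a restricted-isometry-type statement for the sampling operator $P_\Omega$ on the set of pairs of incoherent, bounded-rank matrices---via an $\varepsilon$-net over the incoherent, bounded domain combined with matrix Bernstein, and the sample complexity $p \ge \mathrm{poly}(\kappa, r, \mu, \log d)/d$ will come out of balancing the net size against the Bernstein error. A secondary subtlety is that $R$ contributes to both $\nabla f$ and $\nabla^2 f$, so it must be chosen weak enough not to spoil the negative-curvature certificate from the population step yet strong enough to enforce incoherence at every local minimum; working out the rank-$1$ case first should expose the precise quantitative form of $R$ needed, and the same regularizer template (penalizing large row norms with a smooth bumper) should carry over to general rank $r$ with only polynomial loss in $\kappa$ and $r$.
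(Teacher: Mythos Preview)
Your proposal is correct in spirit and would work, but it takes a genuinely different route from the paper. The paper does \emph{not} use the Procrustes direction $\Delta = X - ZU$. Instead it builds up a sequence of scalar consequences of the optimality conditions using simpler test directions: it plugs $v=z$ (or a truncated $z_J$) into the second-order condition to lower-bound $\sigma_{\min}(X)$, reads the first-order condition as $ZZ^\top X \approx XX^\top X + \gamma\nabla R(X)$, then argues by a sign computation that the $\nabla R(X)$ term only helps and can be dropped, and finally combines $\sigma_{\min}(X)\gtrsim\sigma_{\min}(Z)$ with $\|ZZ^\top X-XX^\top X\|_F$ small to deduce $\|XX^\top-ZZ^\top\|_F$ small. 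Exact recovery is then obtained by invoking a separate local result (the Sun--Luo local strong-convexity lemma) in a neighborhood of the optimum. Your Procrustes approach collapses these steps into a single inequality of the form $\nabla^2 f(X)[\Delta,\Delta]-c\langle\nabla f(X),\Delta\rangle \le -c'\|XX^\top-ZZ^\top\|_F^2 + \text{error}$; this is the line taken in the parallel matrix-sensing work of Bhojanapalli et al.\ and the later unified analysis of Ge--Jin--Zheng, and is arguably cleaner.

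Two places where you should be careful. First, to make the concentration error genuinely multiplicative in $\|XX^\top-ZZ^\top\|_F^2$ (so that you get \emph{exact} recovery rather than only closeness), you need the uniform bound $\|\Delta\|_F\lesssim \|XX^\top-ZZ^\top\|_F/\sigma_{\min}(Z)$ together with row-wise incoherence of $\Delta$; the former holds unconditionally for the Procrustes minimizer, but the latter has to come from the regularizer bound on $\|X_i\|$ \emph{and} the incoherence of $Z$ simultaneously. Second, the regularizer does not vanish at a local minimum (the first-order condition only confines $X$ to a slightly larger ball $\mathcal{B}'$ where $R$ may be nonzero), so $\nabla R(X)$ and $\nabla^2 R(X)[\Delta,\Delta]$ enter your key inequality with nontrivial size; the paper disposes of these terms by a dedicated sign argument (rows where $R$ is active satisfy $\|X_i\|\ge\alpha\gg\|Z_i\|$, which makes the regularizer contribution point the right way), and you will need an analogous device---this is the step most likely to cost you extra factors of $\kappa$ and $r$.
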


Combined with the results in \cite{ge2015escaping, lee2016gradient} (see Theorem~\ref{thm:optimizer}) \footnote{Theorem~\ref{thm:main}, as state informally above, doesn't guarantee that $f$ satisfies the condition in Theorem~\ref{thm:optimizer}. See its technical version, Theorem~\ref{thm:main-rank-r}, which shows that the condition of Theorem~\ref{thm:optimizer} is satisfied.} , we have, 

\begin{theorem} [Informal] With high probability, stochastic gradient descent on the regularized objective (\ref{eq:reg-obj}) will converge to a solution $X$ such that $XX^T = ZZ^T = M$ in polynomial time from {\em any} starting point. Gradient descent will converge to such a point with probability 1 from a random starting point.
\end{theorem}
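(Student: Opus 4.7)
The plan is to combine the geometric characterization from \prettyref{thm:main} with the generic ``strict saddle'' convergence machinery of \cite{ge2015escaping} (for noisy/stochastic gradient descent) and \cite{lee2016gradient} (for deterministic gradient descent from a random start). The core observation is that the informal statement of \prettyref{thm:main}, once unpacked, actually supplies more than the conclusion ``every local min is global'': its proof provides a quantitative \emph{strict saddle} decomposition of the regularized objective, namely that at every point $X$ at least one of the following holds, for polynomial quantities $\alpha,\beta,\gamma>0$: (a) $\|\nabla f(X)\|$ is large, (b) $\lambda_{\min}(\nabla^2 f(X)) \le -\gamma$, or (c) $X$ is close to a global optimum $ZR$ for some orthogonal $R$. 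I would first confirm, by inspecting the argument leading to \prettyref{thm:main}, that this strong version holds with high probability over the choice of $\Omega$, and that on the sublevel set $\{X : f(X) \le f(X_0)\}$ the regularizer $R(X)$ forces a uniform bound on $\|X\|$, so that $f$ is $L$-smooth and $\rho$-Hessian-Lipschitz with $L,\rho = \mathrm{poly}(d,\kappa,\mu,r)$.

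Given those ingredients, the first half of the theorem (stochastic gradient descent, polynomial time, arbitrary start) follows by directly invoking the main theorem of \cite{ge2015escaping}: noisy SGD on an $L$-smooth, $\rho$-Hessian-Lipschitz, strict-saddle function reaches, in $\mathrm{poly}(d,1/\epsilon,L,\rho,1/\gamma)$ iterations, a point $X$ with $\|\nabla f(X)\| \le \epsilon$ and $\lambda_{\min}(\nabla^2 f(X)) \ge -\sqrt{\rho\epsilon}$. Choosing $\epsilon$ small enough (inverse polynomial in the problem parameters) eliminates cases (a) and (b) of the strict saddle decomposition, forcing case (c); a short robustness argument, essentially the same local analysis that proves case (c) implies $X = ZR$ exactly at genuine local minima, converts an approximate second-order critical point into the bound $\|XX^\top - M\|_F \le \epsilon'$ for arbitrarily small $\epsilon'$. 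Since $\epsilon'$ can be driven to zero in polynomial time, and since any exact local minimum is global by \prettyref{thm:main}, this yields the stated guarantee.

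For the second half (deterministic gradient descent from a random initialization, probability $1$), I would appeal to the stable manifold theorem of \cite{lee2016gradient}: for an $L$-smooth function, gradient descent with step size $\eta < 1/L$ avoids any strict saddle point from Lebesgue-almost every initialization, because the set of initial points whose trajectory converges to a given strict saddle is contained in the stable manifold of that saddle under the gradient map, which has positive codimension. Since the set of critical points is a closed algebraic set and the strict saddles form a measure-zero sink in terms of initializations, a random start (say, Gaussian) converges almost surely to a second-order critical point that is not a strict saddle, hence to a local minimum, hence to a global minimum by \prettyref{thm:main}.

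The only step I expect to require care is the first: pinning down the strict saddle property with explicit polynomial parameters, and, crucially, ensuring that iterates remain in a region on which $f$ is smooth. The regularizer $R(X)$ is included in \eqref{eq:reg-obj} precisely to handle this, but one must verify that on the sublevel set containing the iterates the row norms of $X$ stay bounded so that $\nabla^2 f$ is Lipschitz with a $\mathrm{poly}(d)$ constant; otherwise the $\mathrm{poly}$-time guarantee of \cite{ge2015escaping} does not directly apply. Once that smoothness/boundedness bookkeeping is done, both halves of the theorem are immediate consequences of the cited convergence results and \prettyref{thm:main}.
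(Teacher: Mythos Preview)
Your proposal is correct and matches the paper's approach: the paper itself offers no standalone proof of this theorem, merely stating that it follows by combining \prettyref{thm:main} with the convergence results of \cite{ge2015escaping} and \cite{lee2016gradient}, exactly as you outline. The one small difference in packaging is that the paper does not re-derive a three-case strict saddle decomposition; instead, its precise versions of \prettyref{thm:main} (Theorems~\ref{thm:rank1_main} and~\ref{thm:main-rank-r}) are already stated for the $\tau$-relaxed second order condition, which plugs directly into the black-box theorem in Section~2.2, so your ``first step I expect to require care'' is in fact handled upstream rather than as a separate verification.
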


Our results are also robust to noise. Even if each entry is corrupted with Gaussian noise of standard deviation $\mu^2\|Z\|_F^2/d$ (comparable to the magnitude of the entry itself!), we can still guarantee that all the local minima satisfy $\|XX^T - ZZ^T\|_F \le \epsilon$ when $p$ is large enough. See the discussion in Appendix~\ref{sec:noise} for results on noisy matrix completion.

Our main technique is to show that every point that satisfies the first and second order necessary conditions for optimality must be a desired solution. To achieve this we use new ideas to analyze the effect of the regularizer and show how it is useful in modifying the first and second order conditions to exclude any spurious local minimum.

\subsection{Related Work}\label{subsec:related_work}

\paragraph{Matrix Completion} The earlier theoretical works on matrix completion analyzed the nuclear norm minimization \cite{srebro2005rank,recht2011simpler,candes2010power,candes2009exact, negahban2012restricted}. This line of work has the cleanest and strongest theoretical guarantees; \cite{candes2010power,recht2011simpler} showed that if $|\Omega| \gtrsim dr  \mu^2  \log^2 d$ the nuclear norm convex relaxation recovers the exact underlying low rank matrix. The solution can be computed via the solving a convex program in polynomial time. However the primary disadvantage of nuclear norm methods is their computational and memory requirements --- the fastest known provable algorithms require $O(d ^2)$ memory and thus at least $O(d^2)$ running time, which could be both prohibitive for moderate to large values of $d$. Many algorithms have been proposed to improve the runtime (either theoretically or empirically) (see, for examples, ~\cite{srebro2004maximum,mazumder2010spectral,hastie2014matrix}, and the reference therein). 
Burer and Monteiro~\cite{burer2003nonlinear} proposed factorizing the optimization variable $\widehat M = XX^T$, and optimizing over $X \in \mathbb{R}^{d \times r}$  instead of $\widehat M \in \mathbb{R}^{d \times d}$. This approach only requires $O(dr)$ memory, and a single gradient iteration takes time $O(|\Omega|)$, so has much lower memory requirement and computational complexity than the nuclear norm relaxation. On the other hand, the factorization causes the optimization problem to be non-convex in $X$, which leads to theoretical difficulties in analyzing algorithms. 
Keshavan et al.~\cite{keshavan2010matrix,keshavan2010matrixnoisy} showed that well-initialized gradient descent recovers $M$. The works
\cite{ hardt2014fast,hardt2014understanding,jain2013low,chen2015fast} showed that well-initialized alternating least squares, block coordinate descent, and gradient descent converges $M$. Jain and Netrapalli~\cite{jain2015fast} showed a fast algorithm by iteratively doing gradient descent in the relaxed space and projecting to the set of low-rank matrices. 
The work~\cite{DBLP:conf/icml/SaRO15} analyzes stochastic gradient descent with fresh samples at each iteration from random initialization and shows that it approximately converge to the optimal solution.~\cite{sun2015guaranteed, zhao2015nonconvex,zheng2016convergence,tu2015low} provided a more unified analysis by showing that with careful initialization many algorithms, including gradient descent and alternating least squares, succeed. \cite{sun2015guaranteed, zheng2016convergence} accomplished this by showing an analog of strong convexity in the neighborhood of the solution $M$. 

\paragraph{Non-convex Optimization} Recently, a line of work analyzes non-convex optimization by separating the problem into two aspects: the geometric aspect which shows the function has no spurious local minimum and the algorithmic aspect which designs efficient algorithms can converge to local minimum that satisfy first and (relaxed versions) of second order necessary conditions.

Our result is the first that explains the geometry of the matrix completion objective. Similar geometric results are only known for a few problems: SVD/PCA phase retrieval/synchronization, orthogonal tensor decomposition, dictionary learning \cite{baldi1989neural,srebro2003weighted, ge2015escaping,sun2015nonconvex, bandeira2016low}. The matrix completion objective requires different tools due to the sampling of the observed entries, as well as carefully managing the regularizer to restrict the geometry. Parallel to our work Bhojanapalli et al.\cite{bhojanapalli2016personal} showed similar results for matrix sensing, which is closely related to matrix completion. Loh and Wainwright~\cite{DBLP:journals/jmlr/LohW15} showed that for many statistical settings that involve missing/noisy data and non-convex regularizers, any stationary point of the non-convex objective is close to global optima; furthermore, there is a unique stationary point that is the global minimum under stronger assumptions \cite{loh2014support}.

On the algorithmic side, it is known that second order algorithms like cubic regularization \cite{nesterov2006cubic} and trust-region \cite{sun2015nonconvex} algorithms converge to local minima that approximately satisfy first and second order conditions. Gradient descent is also known to converge to local minima \cite{ lee2016gradient} from a random starting point. Stochastic gradient descent can converge to a local minimum in polynomial time from any starting point \cite{pemantle1990nonconvergence,ge2015escaping}. All of these results can be applied to our setting, implying various heuristics used in practice are guaranteed to solve matrix completion.

\section{Preliminaries}

\subsection{Notations}

For $\Omega\subset [d]\times[d]$, let $P_{\Omega}$ be the linear operator that maps a matrix $A$ to $P_{\Omega}(A)$, where $P_{\Omega}(A)$ has the same values as $A$ on $\Omega$, and $0$ outside of $\Omega$.

We will use the following matrix norms: $\|\cdot\|_F$ the frobenius norm, $\|\cdot\|$ spectral norm, $|A|_{\infty}$ elementwise infinity norm, and $|A|_{p \rightarrow q }= \max_{\|x\|_p =1} \|A\|_q$. We use the shorthand $\|A\|_{\Omega} = \|P_{\Omega}A\|_F$. The trace inner product of two matrices is $\inner{A,B}=\trace(A^{\top}B)$, and $\sigma_{\min} (X)$, $ \sigma_{\max}(X)$ are the smallest and largest singular values of $X$. We also use $X_i$ to denote the $i$-th row of a matrix $X$.

\subsection{Necessary conditions for Optimality}

Given an objective function $f(x):\R^n\to \R$, we use $\nabla f(x)$ to denote the gradient of the function, and $\nabla^2 f(x)$ to denote the Hessian of the function ($\nabla^2 f(x)$ is an $n\times n$ matrix where $[\nabla^2 f(x)]_{i,j} = \frac{\partial^2}{\partial x_i \partial x_j} f(x)$). It is well known that local minima of the function $f(x)$ must satisfy some necessary conditions:

\begin{definition} A point $x$ satisfies the first order necessary condition for optimality (later abbreviated as first order \KKT condition) if $\nabla f(x) = 0$. A point $x$ satisfies the second order necessary condition for optimality (later abbreviated as second order \KKT condition)if $\nabla^2 f(x) \succeq 0$.
\end{definition}

These conditions are necessary for a local minimum because otherwise it is easy to find a direction where the function value decreases.
We will also consider a relaxed second order necessary condition, where we only require the smallest eigenvalue of the Hessian $\nabla^2 f(x)$ to be not very negative:

\begin{definition} For $\tau \ge 0$, a point $x$ satisfies the $\tau$-relaxed second order \KKT condition, if $\nabla^2 f(x) \succeq -\tau \cdot I$.
\end{definition}

This relaxation to the second order condition makes the conditions more robust, and allows for efficient algorithms.

\begin{theorem} \cite{nesterov2006cubic, sun2015nonconvex, ge2015escaping} \label{thm:optimizer}
	Let $f$ be twice-differentiable function form $\R^d$ to $\R$. Suppose there exist $\epsilon_0, \tau_0 > 0$ and a universal constant $c> 0$ such that if a point  $x$ satisfies $\norm{\nabla f(x)}\le \epsilon\le \epsilon_0$ and $\nabla^2 f(x)\succeq -\tau_0\cdot I$, then $x$ is $\epsilon^c$-close to a global minimum of $f$. Then, many optimization algorithms including cubic regularization, trust-region, and stochastic gradient descent,  can find a global minimum of $f$ up to $\delta$ error in $\ell_2$ norm in domain in time $\poly(1/\delta,1/\tau_0, d)$.
\end{theorem}

\section{Proof Strategy: ``simple'' proofs are more generalizable}
\label{sec:intuition}

In this section, we demonstrate the key ideas behind our analysis using the rank $r=1$ case. In particular, we first give a ``simple'' proof for the fully observed case. Then we show this simple proof can be easily generalized to the random observation case. We believe that this proof strategy is applicable to other statistical problems involving partial/noisy observations. The proof sketches in this section are only meant to be illustrative and may not be fully rigorous in various places. We refer the readers to Section~\ref{sec:rank1} and Section~\ref{sec:rankr} for the complete proofs. 

In the rank $r=1$ case,  we assume $M= zz^{\top}$, where $\|z\|= 1$, and $\|z\|_{\infty}\le \frac{\mu}{\sqrt{d}}$. Let $\epsilon \ll 1$ be the target accuracy that we aim to achieve in this section and let  $p = \mbox{poly}(\mu,\log d)/(d\epsilon)$. 

For simplicity, we focus on the following domain $\mathcal{B}$ of incoherent vectors where the regularizer $R(x)$ vanishes,  \begin{align}
\mathcal{B} & = \Set{x: \|x\|_{\infty}  < \frac{2\mu}{\sqrt{d}}} \mper\label{eqn:x_incoherent_0}
\end{align}

Inside this domain $\mathcal{B}$, we can restrict our attention to the objective function without the regularizer, defined as, 
\begin{align}
\tilde{g}(x) = \frac12\cdot \norm{\Po( M-xx^{\top})}_F ^2 \mper
\end{align}

The global minima of $\tilde{g}(\cdot)$ are $z$ and $-z$ with function value 0. Our goal of this section is to (informally) prove that all the local minima of $\tilde{g}(\cdot)$ are $O(\sqrt{\epsilon})$-close to $\pm z$. In later section we will formally prove that the only local minima are $\pm z$.

\begin{lemma}[Partial observation case, informally stated]\label{lem:partial}
	Under the setting of this section, in the 
	domain $\mathcal{B}$, all local mimina of the function $\tilde{g}(\cdot)$ are  $O(\sqrt{\epsilon})$-close to  $\pm z$. 
\end{lemma}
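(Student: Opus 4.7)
The plan is to mirror, in the partial-observation setting, the clean two-step argument that works for the fully observed objective $g(x) = \frac12\|zz^\top - xx^\top\|_F^2 = \frac12(1 + \|x\|^4 - 2\iprod{x,z}^2)$. For $g$ one has $\nabla g(x) = 2(\|x\|^2 x - \iprod{x,z}z)$ and $\nabla^2 g(x) = 2\|x\|^2 I + 4xx^\top - 2zz^\top$. Stationarity forces $\|x\|^2 x = \iprod{x,z}z$, so any nonzero critical point is parallel to $z$; writing $x = \alpha z$ gives $\alpha \in \{0,\pm 1\}$. Evaluating the Hessian along $u = z$ yields $\nabla^2 g(x)[z,z] = 2(\|x\|^2 + 2\iprod{x,z}^2 - 1)$, which is strictly negative at $x = 0$, leaving only $x = \pm z$ satisfying both \KKT conditions.

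For the partial objective $\tilde g$ a direct calculation gives
\[
\nabla \tilde g(x) = 2P_\Omega(xx^\top - M)\,x,\qquad
\nabla^2 \tilde g(x)[u,u] = 2\iprod{P_\Omega(xx^\top - M),\, uu^\top} + \|P_\Omega(xu^\top + ux^\top)\|_F^2.
\]
I would test the first-order condition against $x$ and against $z$, and the second-order condition in the direction $u = z$. The aim is to show that for every $x \in \mathcal B$ each of these three scalars equals $p$ times its fully observed counterpart up to additive error $O(\epsilon p)$. Granted this, the two first-order relations become $\|x\|^4 - \iprod{x,z}^2 = O(\epsilon)$ and $\iprod{x,z}(\|x\|^2 - 1) = O(\epsilon)$, while the second-order relation becomes $\|x\|^2 + 2\iprod{x,z}^2 \ge 1 - O(\epsilon)$. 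Chasing these three inequalities in sequence --- first using the last one and the relation $\|x\|^4 = \iprod{x,z}^2 + O(\epsilon)$ to exclude the regime of small $|\iprod{x,z}|$, thereby pinning $|\iprod{x,z}|$ below by a constant; then using $\iprod{x,z}(\|x\|^2 - 1) = O(\epsilon)$ to pin $\|x\|^2 = 1 + O(\epsilon)$; then using $\|x\|^4 - \iprod{x,z}^2 = O(\epsilon)$ to pin $\iprod{x,z}^2 = 1 + O(\epsilon)$ --- and finally applying a robust Cauchy--Schwarz gives $x = \pm z + O(\sqrt\epsilon)$.

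The bulk of the work is in establishing those three approximate identities uniformly over $x \in \mathcal B$. Each has the form $\iprod{P_\Omega(A(x)),\, B(x)} \approx p\iprod{A(x), B(x)}$ or $\|P_\Omega(A(x))\|_F^2 \approx p\|A(x)\|_F^2$, where $A(x), B(x)$ are rank-$O(1)$ matrices built from $x$ and $z$ whose entries are $O(\mu^2/d)$ by incoherence. For a fixed $x$, Bernstein's inequality applied to the sum over the independently sampled entries of $\Omega$, combined with $p \ge \poly(\mu, \log d)/(d\epsilon)$, yields the required tail. The main technical obstacle is to make this bound hold simultaneously over $x \in \mathcal B$: each quantity is a polynomial in $x$ of degree at most four with $\poly(d)$ Lipschitz constant on the bounded domain $\mathcal B$, so an $\epsilon$-net of size $\exp(O(d))$ followed by a union bound suffices, with the $\log d$ factors absorbed into the sample complexity. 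Arranging the Bernstein/net argument so that the per-instance error is $O(\epsilon)$ times the natural scale, rather than merely a constant-factor bound, is where the real work lies and where the explicit dependence on $\mu$, $\kappa$ and $\log d$ in $p$ gets determined.
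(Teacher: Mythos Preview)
Your proposal is correct and follows essentially the same approach as the paper: you test the first-order condition against both $x$ and $z$ and the second-order condition in the direction $z$, obtain the same three approximate identities $\|x\|^4-\iprod{x,z}^2=O(\epsilon)$, $\iprod{x,z}(\|x\|^2-1)=O(\epsilon)$, and $\|x\|^2+2\iprod{x,z}^2\ge 1-O(\epsilon)$, and chain them in the same order the paper does (Claims~1p, 2p, and the final step of the proof of Lemma~\ref{lem:partial}). Your uniform concentration argument via Bernstein plus an $\epsilon$-net over $\mathcal B$ is likewise exactly what the paper formalizes in Theorem~\ref{thm:rank1_incohenrence_concentration}.
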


It turns out to be insightful to consider the full observation case when $\Omega = [d]\times [d]$. The corresponding objective is 
\begin{align}
g(x) = \frac12\cdot \norm{ M-xx^{\top}}_F ^2 \mper
\end{align}

Observe that $\tilde{g}(x)$ is a sampled version of the  $g(x)$, and therefore we expect that they share the same geometric properties. In particular, if $g(x)$ does not have spurious local minima then neither does $\tilde{g}(x)$. 
\begin{lemma}[Full observation case, informally stated]\label{lem:full}
	Under the setting of this section, in the domain $\mathcal{B}$, the function $g(\cdot)$ has only two local minima $\{\pm z\}$ . 
\end{lemma}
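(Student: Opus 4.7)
The plan is to characterize every point of $\mathcal{B}$ satisfying the first-order and second-order necessary conditions for $g$. Since $\mathcal{B}$ is open and $g$ is smooth, any local minimum in $\mathcal{B}$ must satisfy these unconstrained conditions. The starting point is the explicit formulas, valid for any direction $v$ using $M = zz^{\top}$ and $\|z\|=1$:
\begin{align*}
\nabla g(x) &= 2\bigl(\|x\|^2 x - (z^{\top} x)\,z\bigr),\\
v^{\top} \nabla^2 g(x)\, v &= 2\bigl(\|v\|^2 \|x\|^2 + 2(v^{\top} x)^2 - (v^{\top} z)^2\bigr).
\end{align*}

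Next, I would use the first-order condition to enumerate critical points. Setting $\nabla g(x) = 0$ gives the rigid equation $\|x\|^2 x = (z^{\top} x)\,z$, which already shows $x$ is a scalar multiple of $z$ whenever $z^{\top} x \ne 0$. Taking the inner product of this equation with $z$ collapses it to the scalar identity $(z^{\top} x)(\|x\|^2 - 1) = 0$, so either $z^{\top} x = 0$ or $\|x\|^2 = 1$. In the first sub-case the rigid equation then forces $x = 0$; in the second, it reads $x = (z^{\top} x)\,z$, and taking norms gives $|z^{\top} x| = 1$, whence $x = \pm z$. Thus the set of critical points is exactly $\{0, \pm z\}$. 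The second-order condition rules out the origin: at $x=0$ the Hessian quadratic form becomes $v^{\top} \nabla^2 g(0)\,v = -2(v^{\top} z)^2$, which is strictly negative in the direction $v = z$. At $x = \pm z$ one checks $v^{\top} \nabla^2 g(\pm z)\,v = 2(\|v\|^2 + (v^{\top} z)^2) \ge 0$ and the objective value $g(\pm z) = 0$ is the global minimum, so $\pm z$ are indeed the only local minima.

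The feature that makes this argument ``simple'' in the sense the authors emphasize, and the reason it should survive under partial observations, is that it only uses two scalar consequences of the gradient, namely $\langle \nabla g(x), x\rangle = 2(\|x\|^4 - (z^{\top} x)^2)$ and $\langle \nabla g(x), z\rangle = 2(z^{\top} x)(\|x\|^2 - 1)$, together with a Hessian evaluation at the single test direction $v = z$. The main obstacle in porting the proof to $\tilde g$ in \prettyref{lem:partial} is that each of these identities only holds up to a sampling error of order $\epsilon$, so the clean dichotomy $(z^{\top} x = 0)\vee(\|x\|^2 = 1)$ becomes approximate and the conclusion $x = \pm z$ weakens to $O(\sqrt{\epsilon})$ closeness; controlling those sampling errors via concentration of $\Po$ is what the remainder of \prettyref{sec:rank1} is for, while the present full-observation calculation is the algebraic skeleton on which it hangs.
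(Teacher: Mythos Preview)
Your proof is correct and lands on exactly the three scalar ingredients the paper's ``simple'' proof uses: $\langle \nabla g(x),x\rangle$, $\langle \nabla g(x),z\rangle$, and the Hessian quadratic form at $v=z$. The only structural difference is the order of operations. The paper first invokes the second-order condition (Claim~2f) to obtain the quantitative bound $\|x\|^2\ge 1/3$, and then feeds this into the first-order identities; you instead fully enumerate the critical set $\{0,\pm z\}$ from the gradient equation and use the Hessian only to discard $x=0$. Both arguments are equally valid for the full-observation lemma.

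The distinction matters only for the port to Lemma~\ref{lem:partial}. Your main proof body leans on the full vector equation $\|x\|^2 x=(z^\top x)z$ (to read off ``$x$ is a scalar multiple of $z$'' and ``$z^\top x=0\Rightarrow x=0$''), whereas the paper is careful to touch the gradient only through its projections onto $x$ and $z$. Under partial observations the vector equation $\nabla\tilde g(x)=0$ no longer collapses so cleanly, and the replacement for ``$x=0$ is ruled out'' has to be the quantitative bound $\|x\|^2\gtrsim 1/3$ coming from the Hessian test at $z$ (Claim~2p). Your final paragraph already anticipates this, so the discrepancy is cosmetic: if you rewrite the case analysis using only $\|x\|^4=(z^\top x)^2$ and $(z^\top x)(\|x\|^2-1)=0$ plus Cauchy--Schwarz equality, you recover precisely the paper's proof.
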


Before introducing the ``simple'' proof, let us first look at a delicate proof that does not generalize well.

\paragraph{Difficult to Generalize Proof of Lemma~\ref{lem:full}} We compute the gradient and Hessian of $g(x)$, 
\begin{align}
\nabla g(x) = Mx - \|x\|^2 x, \textup{     and      } 
\nabla^2 g(x) =  2xx^{\top} -  M + \|x\|^2 \cdot I\mper
\end{align}

Therefore, a critical point $x$ satisfies $\nabla g(x) = Mx - \|x\|^2 x = 0$, and thus it must be an eigenvector of $M$ and $\|x\|^2$ is the corresponding eigenvalue. Next, we prove that the hessian is only positive definite at the top eigenvector . Let $x$ be an eigenvector with eigenvalue $\lambda = \|x\|^2$, and $\lambda$ is strictly less than the top eigenvalue $\lambda^*$.  Let $z$ be the top eigenvector.  We have that $\inner{z,\nabla^2 g(x)z} = -\inner{z,Mz} + \|x\|^2 = -\lambda^* + \lambda  < 0$, which shows that $x$ is not a local minimum. Thus only $z$ can be a local minimizer, and it is easily verified that $\nabla^2 g(z)$ is indeed positive definite.

The difficulty of generalizing the proof above to the partial observation case is that it uses the {\em properties of eigenvectors} heavily. Suppose we want to imitate the proof above for the partial observation case, the first difficulty is how to solve the equation $\tilde{g}(x) = P_{\Omega}(M-xx^{\top})x= 0$. Moreover, even if we could have a reasonable approximation for the critical points (the solution of $\nabla \tilde{g}(x) = 0$), it would be difficult to examine the Hessian of these critical points  without having the orthogonality of the eigenvectors. 

\paragraph{``Simple'' and Generalizable proof}

The lessons from the subsection above suggest us find an alternative proof for the full observation case which is generalizable. 
The alternative proof will be simple in the sense that it doesn't use the notion of eigenvectors and eigenvalues.
\vspace{.05in}
\noindent Concretely, the key observation behind most of the analysis in this paper is the following, 

\vspace{.05in}
 \textit{Proofs that consist of inequalities that are linear in $\mathbf{1}_{\Omega}$ are often easily generalizable to partial observation case.} 
\vspace{.05in}

\noindent 
Here statements that are linear in $\mathbf{1}_{\Omega}$ mean the statements of the form $\sum_{ij} 1_{(i,j) \in \Omega} T_{ij} \le a$. We will call these kinds of proofs ``simple'' proofs in this section. 
Roughly speaking, the observation follows from the law of large numbers --- Suppose $T_{ij}, (i,j)\in [d]\times [d]$ is a sequence of bounded real numbers, then the sampled sum $\sum_{(i,j)\in \Omega} T_{ij} = \sum_{i,j} \mathbf{1}_{(i,j)\in \Omega}T_{ij}$ is an accurate estimate of the sum $p\sum_{i,j}T_{ij}$, when the sampling probability $p$ is relatively large. Then, the mathematical implications of $p\sum T_{ij} \le a$ are expected to be similar to the implications of $\sum_{(i,j)\in \Omega} T_{ij} \le a$, up to some small error introduced by the approximation.
To make this concrete, we give below informal proofs for Lemma~\ref{lem:full} and Lemma~\ref{lem:partial} that only consists of statements that are linear in $\mathbf{1}_{\Omega}$. Readers will see that due to the linearity, the proof for the partial observation case (shown on the right column) is a direct generalization of the proof for the full observation case (shown on the left column) via concentration inequalities (which will be discussed more at the end of the section). \vspace{.1in}

\noindent
\begin{minipage}{\textwidth}
\begin{multicols}{2}
	
{\bf A ``simple'' proof for Lemma~\ref{lem:full}.}
\vspace{0.1in}
\begin{customthm}{1f}\label{claim:full1}
	Suppose $x\in \mathcal{B}$ satisfies $\nabla g(x) = 0$, then $\inner{x,z}^2 = \|x\|^4$.
\end{customthm}
\begin{proof} We have, 
\begin{align}
& \nabla g(x) = (zz^{\top}-xx^{\top})x= 0  \nonumber\\
\Rightarrow ~~& \inner{x,\nabla g(x)} = \inner{x,(zz^{\top}-xx^{\top})x} = 0 \quad\quad \quad \label{eqn:91}\\
\Rightarrow ~~& \inner{x,z}^2 =\|x\|^4 \nonumber
\end{align}
Intuitively, this proof says that the norm of a critical point $x$ is controlled by its correlation with $z$. {\color{white}Here at the lasa sampling version of the f}
{\color{white} aaaaaaaaaaaaaaaaaaaaaaaaaaaaaaaaaaaaaaaaaaaaaaaaaaaaaaaaaaaaaaaa}
\end{proof}

\paragraph{Generalization to Lemma~\ref{lem:partial}}
\begin{customthm}{1p}\label{claim:partial1}
 Suppose $x\in \mathcal{B}$ satisfies $\nabla \tilde{g}(x) = 0$, then $\inner{x,z}^2 = \|x\|^4 -\epsilon$.
\end{customthm}
\begin{proof} Imitating the proof on the left, we have
\begin{align}
& \nabla \tilde{g}(x) = P_{\Omega}(zz^{\top}-xx^{\top})x= 0  \nonumber\\
\Rightarrow ~~& \inner{x,\nabla \tilde{g}(x)} = \inner{x,P_{\Omega}(zz^{\top}-xx^{\top})x} = 0\quad\quad  \label{eqn:92}\\
\Rightarrow ~~& \inner{x,z}^2 \ge \|x\|^4 -\epsilon\nonumber\end{align}
The last step uses the fact that equation~\eqref{eqn:91} and~\eqref{eqn:92} are approximately equal up to scaling factor $p$ for any $x\in \mathcal{B}$, since ~\eqref{eqn:92} is a sampled version of~\eqref{eqn:91}. 
\end{proof}

\end{multicols}
\end{minipage}
\noindent
\begin{minipage}{\textwidth}
\begin{multicols}{2}
	\begin{customthm}{2f}\label{claim:full2}
		If $x\in \mathcal{B}$ has positive Hessian $\nabla^2 g(x)\succeq 0$, then $\|x\|^2 \ge 1/3$.
	\end{customthm}
	
	\begin{proof}
		By the assumption on $x$, we have that $\inner{z,\nabla^2 g(x)z} \ge 0$. Calculating the quadratic form of the Hessian (see Proposition~\ref{prop:kktrank1} for details), 
		\begin{align}
		& \inner{z,\nabla^2 g(x)z} \nonumber\\
		& = \norm{zx^{\top}+xz^{\top}}^2 - 2z^{\top}(zz^{\top}-xx^{\top})z\ge 0 {\color{white} aaaaaa}\\
		& \Rightarrow \norm{x}^2 + 2\inner{z,x}^2 \ge 1 \nonumber\\
		& \Rightarrow \norm{x}^2\ge 1/3\tag{since $\inner{z,x}^2\le \norm{x}^2$}
		\end{align} 
		{\color{white} aaaaaaaaaaaaaaaaaaaaaaaaaaaaaaaaaaaaaaaaaaaaaaaaaaaaaaaaaaaaaaaa}
	\end{proof}

	\begin{customthm}{2p}\label{claim:partial2}
		If $x\in \mathcal{B}$ has positive Hessian $\nabla^2 \tilde{g}(x)\succeq 0$, then $\|x\|^2 \ge 1/3-\epsilon$.
	\end{customthm}
	\begin{proof}
		Imitating the proof on the left, calculating the quadratic form over the Hessian at $z$ (see Proposition~\ref{prop:kktrank1}) , we  have {\color{white} aaaaaaaaaaaaaaaaaaaaaaaaaaaaaaaa}
		\begin{align}
		& \inner{z,\nabla^2 \tilde{g}(x)z} \nonumber\\
		& = \norm{P_{\Omega}(zx^{\top}+xz^{\top})}^2 - 2z^{\top}P_{\Omega}(zz^{\top}-xx^{\top})z\ge 0 \\
		& \Rightarrow \cdots\cdots \tag{same step as the left}\nonumber\\
		& \Rightarrow \norm{x}^2\ge 1/3-\epsilon\nonumber
		\end{align} 
	Here we use the fact that $\inner{z,\nabla^2 \tilde{g}(x)z}\approx p\inner{z,\nabla^2 g(x)z}$ for any $x\in \mathcal{B}$. 
	\end{proof}
	
\end{multicols}
\end{minipage}
\vspace{.1in}

With these two claims, we are ready to prove Lemma~\ref{lem:full} and~\ref{lem:partial} by using another step that is linear in $\mathbf{1}_{\Omega}$. 

\vspace{.1in}
\noindent
\begin{minipage}{\textwidth}
\begin{multicols}{2}
		\begin{proof}[Proof of Lemma~\ref{lem:full}]
			By Claim~\ref{claim:full1} and~\ref{claim:full2}, we have $x$ satisfies $\inner{x,z}^2\ge \|x\|^4\ge 1/9$.
		Moreover, we have that $\nabla g(x) =  0 $ implies 
			\begin{align}
		& \inner{z,\nabla g(x)} = \inner{z,(zz^{\top}-xx^{\top})x} = 0\label{eqn:93}\\
			\Rightarrow ~~& \inner{x,z} (1 - \|x\|^2 ) = 0\nonumber\\
			\Rightarrow ~~& \|x\|^2  = 1 \tag{by $\inner{x,z}^2\ge 1/9$} 
						\end{align}
			Then by Claim~\ref{claim:full1} again we obtain $\inner{x,z}^2 = 1$, and therefore $x = \pm z$. 	{\color{white} aaaaaaaaaaaaaaaaaaaaaaaaaaaaaaaaaaaaaaaaaaaaaaaaaaaaaaaaaaaaaaaa}	{\color{white} aaaaaaaaaaaaaaaaaaaaaaaaaaaaaaaaaaaaaaaaaaaaaaaaaaaaaaaaaaaaaaaa}	{\color{white} aaaaaaaaaaaaaaaaaaaaaaaaaaaaaaaaaaaaaaaaaaaaaaaaaaaaaaaaaaaaaaaaaaaaaa}
		\end{proof}
		
		\begin{proof}[Proof of Lemma~\ref{lem:partial}]
			By Claim~\ref{claim:partial1} and~\ref{claim:partial2}, we have $x$ satisfies $\inner{x,z}^2\ge \|x\|^4\ge 1/9-O(\epsilon)$.
			Moreover, we have that $\nabla \tilde{g}(x) =  0 $ implies 
			\begin{align}
			& \inner{z,\nabla \tilde{g}(x)} = \inner{z,P_{\Omega}(zz^{\top}-xx^{\top})x} = 0\label{eqn:94}\\
			\Rightarrow ~~& \cdots\cdots \tag{same step as the left} \\		\Rightarrow ~~& \|x\|^2  = 1 \pm O(\epsilon)  \tag{same step as the left}
			\end{align}
			Since ~\eqref{eqn:94} is the sampled version of equation~\eqref{eqn:93}, we expect they lead to the same conclusion up to some approximation. 
			Then by Claim~\ref{claim:partial1} again we obtain $\inner{x,z}^2 = 1\pm O(\epsilon)$, and therefore $x$ is $O(\sqrt{\epsilon})$-close to either of $\pm z$. 
		\end{proof}
	\end{multicols}
\end{minipage}

\paragraph{Subtleties regarding uniform convergence} In the proof sketches above, our key idea is to use concentration inequalities to link the full observation objective $g(x)$ with the partial observation counterpart. However, we require a uniform convergence result. For example, we need a statement like ``w.h.p over the choice of $\Omega$, equation~\eqref{eqn:91} and~\eqref{eqn:92} are similar to each other up to scaling". This type of statement is often only true for $x$ inside the incoherent ball $\mathcal{B}$. 
The fix to this is the regularizer. For non-incoherent $x$, we will use a different argument that uses the property of the regularizer. This is besides the main proof strategy of this section and will be discussed in subsequent sections.

\section{Warm-up: Rank-1 Case}\label{sec:rank1}

In this section, using the general proof strategy described in previous section, we provide a formal proof for the rank-1 case. In subsection~\ref{subsec:incoherent}, we formally work out the proof sketches of Section~\ref{sec:intuition}. In subsection~\ref{subsec:nonincoherent}, we prove that due to the effect of the regularizer, outside incoherent ball $\mathcal{B}$, the objective function doesn't have any local minimum. 

In the rank-1 case, the objective function simplifies to,  \begin{equation}
f(x) = \frac12 \norm{\Po( M-xx^{\top})}_F ^2 + \lambda R(x) \mper\label{eqn:rank-1-objective}
\end{equation}
Here we use the the regularization $R(x)$ 
\begin{equation}
R(x) = \sum_{i=1}^{d} h(x_i), \textup{ and } h(t) =  (|t|-\alpha)^4\Ind_{t \ge \alpha} \mper \nonumber
\end{equation}
The parameters $\lambda$ and $\alpha$ will be chosen later as in Theorem~\ref{thm:rank1_main}. We will choose $\alpha > 10\mu/\sqrt{d}$ so that $R(x) = 0$ for incoherent $x$, and thus it only penalizes coherent $x$. Moreover, we note $R(x)$ has Lipschitz second order derivative. \footnote{This is the main reason for us to choose $4$-th power instead of $2$-nd power. }

We first state the \KKT conditions, whose proof is deferred to Appendix~\ref{sec:rank1proof}. 

\begin{proposition} \label{prop:kktrank1}
The first order \KKT condition of objective~\eqref{eqn:rank-1-objective} is, 
\begin{align}
2\Po (M -xx^{\top})x &=\lambda \nabla R(x)\label{eq:kkt1}\mcom
\end{align}
and the second order \KKT condition requires:
\begin{align}
\forall v\in \R^d, ~\norm{\Po(vx^{\top} +xv^{\top})}_F ^2 + \lambda v^{\top}\nabla^2 R(x)v&\ge 2v^{\top} \Po (M-xx^{\top})v \mper
\label{eq:kkt2}
\end{align}
Moreover, The $\tau$-relaxed second order \KKT condition requires
\begin{align}
\forall v\in \R^d, ~\norm{\Po(vx^{\top} +xv^{\top})}_F ^2 + \lambda v^{\top}\nabla^2 R(x)v&\ge 2v^{\top} \Po (M-xx^{\top})v - \tau\|v\|^2 \mper
\label{eq:kkt2relaxed}
\end{align}
\end{proposition}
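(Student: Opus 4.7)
The statement is purely a calculus / first-principles computation: having defined $f(x)$ as a sum of a least-squares data-fitting term and $\lambda R(x)$, I just need to compute $\nabla f(x)$ and the quadratic form $v^{\top}\nabla^{2}f(x)v$ in closed form, and then read off what $\nabla f(x)=0$ and $\nabla^{2}f(x)\succeq 0$ (resp.\ $\succeq -\tau I$) say. So the proof is really two derivative calculations followed by an invocation of the standard first- and second-order necessary conditions.

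First I would compute the gradient of $g(x) \defeq \tfrac12\|P_\Omega(M-xx^\top)\|_F^2 = \tfrac12\sum_{(i,j)\in\Omega}(M_{ij}-x_ix_j)^2$ by differentiating entry-wise in $x_k$. Since $\Omega$ is symmetric ($(i,j)\in\Omega \iff (j,i)\in\Omega$) and $M$ is symmetric, the two terms that arise ($\partial(x_ix_j)/\partial x_k$ picking up an $i=k$ and a $j=k$ contribution) combine to give $\nabla g(x) = -2P_\Omega(M-xx^\top)x$. Adding the regularizer contribution $\lambda\nabla R(x)$ and setting the result to zero immediately yields~\eqref{eq:kkt1}.

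Next I would compute $v^{\top}\nabla^{2}g(x)v$ via the one-variable trick $\frac{d}{dt}\big|_{t=0}\nabla g(x+tv)$. Differentiating $-2P_\Omega\bigl(M-(x+tv)(x+tv)^\top\bigr)(x+tv)$ in $t$ and evaluating at $t=0$ gives $\nabla^{2}g(x)v = 2P_\Omega(vx^\top+xv^\top)x - 2P_\Omega(M-xx^\top)v$. The key simplification, which I would spell out in one line, is that for the symmetric $S = vx^\top+xv^\top$ one has
\begin{equation*}
2v^\top P_\Omega(vx^\top+xv^\top)x \;=\; 2\!\!\sum_{(i,j)\in\Omega}\!\!\bigl(v_i^2x_j^2 + v_ix_iv_jx_j\bigr) \;=\; \|P_\Omega(vx^\top+xv^\top)\|_F^2,
\end{equation*}
where in the last equality I again use that $\Omega$ is symmetric so that $\sum_{(i,j)\in\Omega}v_i^2x_j^2 = \sum_{(i,j)\in\Omega}x_i^2v_j^2$. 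Thus $v^\top\nabla^2 g(x)v = \|P_\Omega(vx^\top+xv^\top)\|_F^2 - 2v^\top P_\Omega(M-xx^\top)v$. Adding $\lambda v^\top\nabla^2 R(x)v$ and imposing $v^\top\nabla^2 f(x)v\ge 0$ (respectively $\ge -\tau\|v\|^2$) yields~\eqref{eq:kkt2} and~\eqref{eq:kkt2relaxed}.

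I do not expect any genuine obstacle. The only point where one could slip is the middle step: naively one might write $v^\top P_\Omega(vx^\top+xv^\top)x$ as $\langle vx^\top, P_\Omega(vx^\top+xv^\top)\rangle$ and worry about whether the $xv^\top$ piece contributes symmetrically; using the symmetry of $\Omega$ (which is built into the model, since entries $(i,j)$ and $(j,i)$ are observed jointly) is what makes the factor of $2$ work out cleanly to produce the Frobenius-norm form on the right-hand side of the second-order condition. Once that identity is in hand, the rest is bookkeeping.
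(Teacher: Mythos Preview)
Your proposal is correct and essentially the same computation as the paper's, just organized slightly differently. The paper expands $f(x+\delta)$ directly as a Taylor series in $\delta$ and reads off the linear and quadratic terms, so the Frobenius-norm form $\tfrac12\|P_\Omega(x\delta^\top+\delta x^\top)\|_F^2$ appears automatically from expanding the square; your route of first computing $\nabla^2 g(x)v$ and then collapsing $2v^\top P_\Omega(vx^\top+xv^\top)x$ into $\|P_\Omega(vx^\top+xv^\top)\|_F^2$ via the symmetry of $\Omega$ is equivalent and equally valid.
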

We give the precise version of Theorem~\ref{thm:main} for the rank-$1$ case. 

\begin{theorem} \label{thm:rank1_main}
		For $p\ge \frac{c\mu^6\log^{1.5}d}{d}$ where $c$ is a large enough absolute constant, set $\alpha = 10\mu\sqrt{1/d}$ and $\lambda \ge \mu^2 p/\alpha^2$.Then, with high probability over the randomness of $\Omega$,  the only points in $\R^d$ that satisfy both first and  second order \KKT  conditions (or $\tau$-relaxed \KKT conditions with $\tau < 0.1p$) are $z$ and $-z$. \end{theorem}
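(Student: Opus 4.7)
I would split the analysis into two regimes depending on whether a candidate critical point $x$ lies in the incoherent region $\mathcal{B}=\{x:\|x\|_\infty<2\mu/\sqrt{d}\}$ or not. Inside $\mathcal{B}$ the regularizer $R$ and its derivatives vanish, so the KKT conditions of Proposition~\ref{prop:kktrank1} reduce to statements about $P_\Omega(M-xx^\top)$, and I can follow the ``simple proof'' template of Section~\ref{sec:intuition}. Outside $\mathcal{B}$ the idea is reversed: the regularizer's gradient is so large that the first-order KKT condition cannot be satisfied at all, so no spurious critical points can exist there.

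\paragraph{Incoherent regime}
For $x\in\mathcal{B}$, I would prove three sharpened versions of the claims in Section~\ref{sec:intuition}. First, by taking the inner product of the first-order condition \eqref{eq:kkt1} with $x$ and then with $z$, I obtain two scalar identities that, via a uniform concentration bound of the form $|\langle u,P_\Omega(ab^\top)v\rangle - p\langle u,ab^\top v\rangle|\le \epsilon p\|u\|\|v\|\|a\|\|b\|$ for all $u,v,a,b$ in $\mathcal{B}$ (which holds w.h.p.\ when $p\gtrsim \mu^6\log^{1.5}d/d$), give $\langle x,z\rangle^2=\|x\|^4\pm O(\epsilon)$ and $\langle x,z\rangle(1-\|x\|^2)=\pm O(\epsilon)$. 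Second, plugging $v=z$ into the second-order condition \eqref{eq:kkt2} and using the same concentration estimate yields $\|x\|^2\ge 1/3-O(\epsilon)$. Combining the three estimates forces $\|x\|^2=1\pm O(\epsilon)$ and $\langle x,z\rangle^2=1\pm O(\epsilon)$, so $x$ is $O(\sqrt{\epsilon})$-close to $\pm z$. To upgrade from ``close'' to ``equal'', I would then substitute $v=x\mp z$ into the quadratic form on both sides of \eqref{eq:kkt1}--\eqref{eq:kkt2} and use the bound on $\|x\mp z\|$ together with the restricted isometry of $P_\Omega$ on rank-$O(1)$ matrices supported in $\mathcal{B}$; this gives $\|x\mp z\|^2\le C\,p^{-1}\|P_\Omega(xx^\top-zz^\top)\|_F^2$ on one hand, while the KKT identities show the right-hand side is~$0$. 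The same argument works with $\tau$-relaxed second-order KKT as long as $\tau<0.1p$, because the slack only enters the $\|x\|^2\ge 1/3$ estimate with a mild $\tau/p$ degradation.

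\paragraph{Non-incoherent regime}
If $x\notin\mathcal{B}$, some coordinate $x_i$ satisfies $|x_i|\ge 2\mu/\sqrt{d}=\alpha/5$, so $h'(|x_i|)=4(|x_i|-\alpha)^3$ contributes a term of order $|x_i|^3$ to $\nabla R(x)$ in the direction $e_i$. Testing the first-order condition \eqref{eq:kkt1} against $e_i$, the regularizer side is bounded below by $c\lambda |x_i|^3$ while the observation side $2e_i^\top P_\Omega(M-xx^\top)x$ can be bounded above by a concentration argument of the form $|e_i^\top P_\Omega(M-xx^\top)x|\le O(p\mu^2/d)\cdot\|x\|\cdot(\|M\|_F+\|x\|^2)$ plus a suitable tail bound, provided $x$ is not too large in norm (a separate easy argument rules out $\|x\|\gg 1$ using the first-order condition tested against $x$). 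With our choice $\lambda\gtrsim \mu^2 p/\alpha^2$, the regularizer term dominates, contradicting \eqref{eq:kkt1}.

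\paragraph{Main obstacle}
The main technical difficulty is establishing the uniform concentration statement that replaces every inequality of the ``simple proofs'' with its sampled analogue, with a multiplicative error $\epsilon$ small enough to close the argument and uniformly over all $x\in\mathcal{B}$. A single application of Bernstein/Chernoff gives pointwise concentration; passing to a uniform bound requires either an $\epsilon$-net on $\mathcal{B}$ combined with Lipschitzness of the relevant quadratic/cubic forms, or a matrix-Bernstein style statement about $P_\Omega$ acting on low-rank matrices whose factors are incoherent. I expect this concentration lemma—essentially that $p^{-1}\langle A,P_\Omega B\rangle \approx \langle A,B\rangle$ uniformly over rank-$O(1)$ incoherent $A,B$—to be the workhorse underlying every step of the incoherent case, and also the place where the sample-complexity factor $\mu^6\log^{1.5}d$ in the theorem originates.
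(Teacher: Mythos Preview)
Your incoherent-regime analysis and your identification of the uniform concentration lemma as the workhorse are essentially what the paper does (Lemmas~\ref{lem:warmup_second-orderkkt}--\ref{lem:warmup-close} together with Theorems~\ref{thm:rank1_incohenrence_concentration} and~\ref{thm:concentration_2}), and the ``upgrade from close to exact'' is handled in the paper by invoking the local strong-convexity result of Sun--Luo (Lemma~\ref{lem:exact}) rather than your restricted-isometry sketch.

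There is, however, a genuine gap in your non-incoherent regime. Your dichotomy is too clean. Since $\alpha=10\mu/\sqrt{d}$, the regularizer $h(t)=(|t|-\alpha)^4\Ind_{|t|\ge\alpha}$ vanishes on all of $\{\|x\|_\infty\le\alpha\}$, which strictly contains your $\mathcal{B}=\{\|x\|_\infty<2\mu/\sqrt{d}=\alpha/5\}$. Hence for any $x$ with $\|x\|_\infty\in[2\mu/\sqrt{d},\alpha]$ neither branch of your argument applies: the regularizer is still identically zero, yet $x$ has already left $\mathcal{B}$. Moreover, even once $|x_i|>\alpha$, the derivative $h'(x_i)=4(|x_i|-\alpha)^3$ is \emph{not} of order $|x_i|^3$ when $|x_i|$ is near $\alpha$; it can be arbitrarily small, so the ``regularizer term dominates the observation term'' comparison fails near the activation boundary.

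The paper's remedy is not to \emph{exclude} such points but to \emph{contain} them: Lemma~\ref{lem:almost_incoherence} shows that the first-order condition forces $\|x\|_\infty\le 4\alpha$, so every critical point lies in the enlarged ball $\mathcal{B}'=\{\|x\|_\infty\le 4\alpha\}$. Inside $\mathcal{B}'$ the regularizer may be active on some coordinates, and two additional ideas are needed. For the second-order condition (Lemma~\ref{lem:general_x_2norm}) one plugs in $v=z_J$ with $J=\{i:|x_i|\le\alpha\}$, so that $\nabla^2 R(x)v=0$ and the argument of Lemma~\ref{lem:warmup_second-orderkkt} goes through with $z_J$ in place of $z$. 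For the first-order condition (Lemmas~\ref{lem:first-order-general-basic}--\ref{lem:rank-1-reduction}) one keeps the term $\gamma\nabla R(x)$ in the approximate identity $\langle z,x\rangle z-\|x\|^2x-\gamma\nabla R(x)\approx 0$ and then removes it via a sign argument, using that $(\nabla R(x))_i$ has the same sign as $x_i$ and that $|x_i|\ge\alpha\gg|z_i|$ on the active coordinates. Your proposal is missing this intermediate regime and the mechanism for coping with an active-but-small regularizer there.
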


In the rest of this section, we will first prove that when $x$ is constrained to be incoherent (and hence the regularizer is 0 and concentration is straightforward) and satisfies the optimality conditions, then $x$ has to be $z$ or $-z$. Then we go on to explain how the regularizer helps us to change the geometry of those points that are far away from $z$ so that we can rule out them from being local minimum.
For simplicity, we will focus on the part that shows a local minimum $x$ must be close enough to $z$. 
\begin{lemma} \label{thm:warmup-close}
In the setting of Theorem~\ref{thm:rank1_main}, suppose $x$ satisfies the first-order and second-order \KKT condition~\eqref{eq:kkt1} and~\eqref{eq:kkt2}. Then when $p$ is defined as in Theorem~\ref{thm:rank1_main},  
	\begin{equation}
	\Norm{xx^{\top} - zz^{\top}}_F^2\le O(\epsilon) \mper \nonumber
		\end{equation}
	where $\epsilon = \mu^3(pd)^{-1/2}$. 
\end{lemma}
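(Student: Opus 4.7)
The plan is to carry out rigorously the four-step argument sketched in Section~\ref{sec:intuition} for the partial-observation case, with the regularizer playing a supporting role. I first want to reduce to the case that $x$ is (nearly) incoherent, so that $R(x)$, $\nabla R(x)$, and $\nabla^2 R(x)$ contribute only lower-order terms and, crucially, sampling concentration of the form $\inner{a,P_\Omega(B)c}\approx p\inner{a,Bc}$ is available. Given the statement of the lemma, I will assume (from a preliminary step, justified elsewhere for points far from $\pm z$) that $\|x\|_\infty \lesssim \alpha$; more precisely, combining the first-order KKT condition~\eqref{eq:kkt1} entrywise with the choice $\lambda\ge \mu^2 p/\alpha^2$ will force any coordinate $|x_i|$ substantially exceeding $\alpha$ to create an unbalanced term in $\lambda\nabla R(x)$, which gives a quantitative bound $\|x\|_\infty \le O(\mu/\sqrt d)$. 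This places $x$ (up to a negligible slack) in the incoherent ball $\mathcal B$, and makes $|\inner{x,\nabla R(x)}|$ and $|\inner{z,\nabla R(x)}|$ small enough to ignore relative to $\epsilon = \mu^3(pd)^{-1/2}$.

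The four main steps are then direct analogs of Claims~\ref{claim:partial1} and~\ref{claim:partial2} plus two more:
\begin{itemize}
\item \textbf{Step A.} Take the inner product of~\eqref{eq:kkt1} with $x$. Using uniform concentration over incoherent rank-$1$ perturbations to replace $\inner{x,P_\Omega(M-xx^\top)x}$ by $p\inner{x,(zz^\top-xx^\top)x}$ up to error $O(p\epsilon)$, and absorbing $\lambda\inner{x,\nabla R(x)}$ into the same error, I obtain $\inner{x,z}^2 = \|x\|^4 \pm O(\epsilon)$.
\item \textbf{Step B.} Plug $v=z$ into the second-order condition~\eqref{eq:kkt2}. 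Concentration gives $\|P_\Omega(zx^\top+xz^\top)\|_F^2 \approx p\|zx^\top+xz^\top\|_F^2 = p(\|x\|^2+\inner{x,z}^2)\cdot 2$ and $z^\top P_\Omega(M-xx^\top)z \approx p(1-\inner{x,z}^2)$. The regularizer contributes $\lambda z^\top\nabla^2 R(x)z \ge 0$. Dividing by $p$ and using $\inner{x,z}^2\le\|x\|^2$ yields $\|x\|^2\ge 1/3 - O(\epsilon)$.
\item \textbf{Step C.} Take the inner product of~\eqref{eq:kkt1} with $z$. Concentration gives $\inner{z,P_\Omega(M-xx^\top)x}\approx p\inner{x,z}(1-\|x\|^2)$, and $\lambda\inner{z,\nabla R(x)}$ is again $O(p\epsilon)$ by incoherence of $z$. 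Combined with Steps A--B (which force $|\inner{x,z}|\ge 1/3 - O(\epsilon)$), this yields $\|x\|^2 = 1 \pm O(\epsilon)$.
\item \textbf{Step D.} Re-feed $\|x\|^2 = 1\pm O(\epsilon)$ into Step A to conclude $\inner{x,z}^2 = 1\pm O(\epsilon)$. The identity $\|xx^\top-zz^\top\|_F^2 = \|x\|^4 - 2\inner{x,z}^2 + \|z\|^4$ then gives the bound $O(\epsilon)$.
\end{itemize}

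The main obstacle is the sampling concentration at the required rate. What I need is a uniform bound of the form $|\inner{a,P_\Omega(B)c} - p\inner{a,Bc}| \le O(p\epsilon)\cdot(\text{norms of }a,c,B)$ simultaneously for all incoherent $a,c\in\R^d$ and all rank-at-most-$2$ matrices $B$ of the form $zz^\top - xx^\top$ and $zx^\top+xz^\top$, with $\epsilon = \mu^3(pd)^{-1/2}$. This should follow from a Bernstein-type inequality combined with a union bound (or an operator norm bound on $p^{-1}P_\Omega - \mathrm{id}$ restricted to incoherent low-rank matrices), and the hypothesis $p\ge c\mu^6\log^{1.5}d/d$ is exactly the threshold at which the additive error matches $\epsilon$. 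A secondary subtlety is ensuring the regularizer terms are genuinely $O(p\epsilon)$ and not larger; this is where the preliminary incoherence reduction pays off, since $\nabla R(x)$ vanishes coordinate-wise whenever $|x_i|\le\alpha$. Once these two ingredients are in place, Steps A--D are essentially algebraic and follow the rank-$1$ template line for line.
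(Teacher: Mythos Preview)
Your outline follows the informal Section~\ref{sec:intuition} sketch (scalar projections of the first-order condition onto $x$ and onto $z$), but the paper's formal proof takes a different route precisely because the regularizer cannot be dismissed the way you propose. There is a genuine gap in Steps~B and~C.

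In Step~B you write ``the regularizer contributes $\lambda z^{\top}\nabla^2 R(x)z\ge 0$'' and then drop it. But in the second-order inequality~\eqref{eq:kkt2} that term sits on the \emph{left}, so its nonnegativity works against the lower bound on $\|x\|^2$, not for it. The preliminary step only gives $\|x\|_\infty\le 4\alpha$ (Lemma~\ref{lem:almost_incoherence}), which places $x$ in $\mathcal B'$, not in $\mathcal B$; up to $O(d/\mu^2)$ coordinates can lie in $(\alpha,4\alpha]$, and a direct estimate gives $\lambda z^{\top}\nabla^2 R(x)z/(2p)=O(\mu^2)$, a constant, not $O(\epsilon)$. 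The paper fixes this (Lemma~\ref{lem:general_x_2norm}) by testing with $v=z_J$, the restriction of $z$ to $J=\{i:|x_i|\le\alpha\}$, which annihilates $\nabla^2 R(x)$.

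Step~C has the same problem in sharper form: $\inner{z,\nabla R(x)}=\sum_i \gamma_i z_i x_i$ with $\gamma_i\ge 0$ has no definite sign, and the same counting gives $\lambda\inner{z,\nabla R(x)}/(2p)=O(\mu^2)$, so you can only conclude $\|x\|^2=1\pm O(\mu^2)$, not $1\pm O(\epsilon)$. Incoherence of $z$ alone does not save you, because $\lambda/p$ scales like $d$. The paper instead keeps the full \emph{vector} consequence of~\eqref{eq:kkt1},
\[
\bigl\|\inner{z,x}z-\|x\|^2 x-\gamma\nabla R(x)\bigr\|\le O(\epsilon)
\]
(Lemma~\ref{lem:first-order-general-basic}), and then removes $\gamma\nabla R(x)$ by a coordinate-wise sign argument (Lemma~\ref{lem:rank-1-reduction}): on the support of $\nabla R(x)$ one has $|x_i|\ge\alpha\ge 10|z_i|$, which forces $(\nabla R(x))_i$ and $\bigl(\|x\|^2 x-\inner{z,x}z\bigr)_i$ to have the same sign. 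This yields $\|\inner{z,x}z-\|x\|^2 x\|\le O(\epsilon)$ directly, after which the purely algebraic Lemma~\ref{lem:warmup-close} (write $z=ux+v$ with $v\perp x$) gives $\|xx^\top-zz^\top\|_F^2\le O(\epsilon)$. Your Step~A survives (since $\inner{x,\nabla R(x)}\ge 0$ points the right way), but Steps~B and~C need the paper's substitutions to go through.
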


This turns out to be the main challenge. Once we proved $x$ is close, we can apply the result of Sun and Luo \cite{sun2015guaranteed} (see Lemma~\ref{lem:exact}), and obtain Theorem~\ref{thm:rank1_main}.

\subsection{Handling incoherent $x$}\label{subsec:incoherent}

To demonstrate the key idea, in this section we restrict our attention to the subset of $\R^d$ which contains incoherent $x$ with $\ell_2$ norm bounded by 1, that is, we consider, 
\begin{align}
\mathcal{B} & = \Set{x: \|x\|_{\infty}  \le \frac{2\mu}{\sqrt{d}},  \|x\|\le 1} \mper\label{eqn:x_incoherent}
\end{align}

Note that the desired solution $z$ is in $\mathcal{B}$, and
the regularization $R(x)$ vanishes inside $\mathcal{B}$.

The following lemmas assume $x$ satisfies the first and second order \KKT conditions, and deduce a sequence of properties that $x$ must satisfy.

\begin{lemma}\label{lem:warmup_second-orderkkt}
			Under the setting of Theorem~\ref{thm:rank1_main}
		, with high probability over the choice of $\Omega$, 
for	any $x\in \mathcal{B}$ that satisfies second-order \KKT condition~\eqref{eq:kkt2}  we have, 		$$\|x\|^2 \ge 1/4.$$
The same is true if $x\in \mathcal{B}$ only satisfies $\tau$-relaxed second order \KKT condition for $\tau \le 0.1p$.
\end{lemma}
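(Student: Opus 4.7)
\textbf{Proof proposal for Lemma~\ref{lem:warmup_second-orderkkt}.}

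The plan is to execute the ``simple'' proof strategy outlined in Section~\ref{sec:intuition}, namely Claim~\ref{claim:partial2}, in full rigor. First I observe that inside $\mathcal{B}$ we have $|x_i| \le 2\mu/\sqrt{d} < \alpha$, so $R(x) = 0$, $\nabla R(x) = 0$, and $\nabla^2 R(x) = 0$. Consequently the second-order \KKT inequality \eqref{eq:kkt2} reduces, for the test vector $v = z$, to
\begin{equation}
\Norm{P_\Omega(zx^{\top} + xz^{\top})}_F^2 \;\ge\; 2\, z^{\top} P_\Omega(M - xx^{\top}) z. \label{eq:prop-kkt-z}
\end{equation}
Note both sides are linear in $\mathbf{1}_\Omega$, so this is exactly the kind of inequality that the strategy of Section~\ref{sec:intuition} is designed to control.

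Next I would use a concentration argument to compare both sides of \eqref{eq:prop-kkt-z} against their fully-observed counterparts scaled by $p$. Concretely, I will establish that with high probability over $\Omega$, uniformly over all $x \in \mathcal{B}$,
\begin{align*}
\Bigl|\,\Norm{P_\Omega(zx^{\top} + xz^{\top})}_F^2 - p\Norm{zx^{\top} + xz^{\top}}_F^2\Bigr| &\le \epsilon\, p, \\
\bigl|\,z^{\top} P_\Omega(M - xx^{\top}) z - p\, z^{\top}(M-xx^{\top})z\bigr| &\le \epsilon\, p,
\end{align*}
where $\epsilon = O(\mu^3 (pd)^{-1/2})$ is small. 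Both quantities are sums of bounded independent variables indexed by entries of $\Omega$ (using $\|z\|_\infty \le \mu/\sqrt{d}$ and $\|x\|_\infty \le 2\mu/\sqrt{d}$), so Bernstein's inequality gives concentration at any fixed $x$; uniformity over $\mathcal{B}$ follows by a standard $\epsilon$-net argument, since $\mathcal{B}$ is a bounded subset of $\R^d$ and both functionals are Lipschitz in $x$. I will invoke off-the-shelf concentration lemmas of this form (of the type already needed elsewhere in the paper).

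Given the two approximations, plug them into \eqref{eq:prop-kkt-z} and expand the full-observation versions directly: $\|zx^{\top} + xz^{\top}\|_F^2 = 2\|x\|^2\|z\|^2 + 2\iprod{x,z}^2 = 2\|x\|^2 + 2\iprod{x,z}^2$ and $z^{\top}(M-xx^{\top})z = \|z\|^4 - \iprod{x,z}^2 = 1 - \iprod{x,z}^2$. Dividing through by $p$ and absorbing the additive $\epsilon$ error yields
\begin{equation*}
\|x\|^2 + \iprod{x,z}^2 \;\ge\; 1 - O(\epsilon).
\end{equation*}
Finally, using $\iprod{x,z}^2 \le \|x\|^2\|z\|^2 = \|x\|^2$, I conclude $3\|x\|^2 \ge 1 - O(\epsilon)$, which for $p$ large enough as in Theorem~\ref{thm:rank1_main} gives $\|x\|^2 \ge 1/3 - o(1) \ge 1/4$. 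For the $\tau$-relaxed version \eqref{eq:kkt2relaxed}, the same argument with $v=z$ picks up an extra $-\tau\|z\|^2 = -\tau$ on the right-hand side; since $\tau \le 0.1p$, after dividing by $p$ this is an additional $-0.1$ slack, still comfortably leaving $\|x\|^2 \ge 1/4$.

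The main obstacle I anticipate is establishing the \emph{uniform} concentration bound over all $x \in \mathcal{B}$ rather than at a single point: the naive union bound over a net must be balanced against the Lipschitz constant of $x \mapsto \|P_\Omega(zx^{\top}+xz^{\top})\|_F^2$ and $x \mapsto z^{\top}P_\Omega xx^{\top}z$, and it is exactly the incoherence constraint $\|x\|_\infty \le 2\mu/\sqrt{d}$ that keeps the relevant entrywise variances small enough for Bernstein to give the stated $\epsilon$. Once this uniform concentration lemma is in hand, the rest of the proof is the routine manipulation above.
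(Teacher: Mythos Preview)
Your proposal is correct and follows essentially the same argument as the paper's proof: plug $v=z$ into the second-order condition, invoke the concentration inequality for inner products of low-rank matrices (the paper's Theorem~\ref{thm:rank1_incohenrence_concentration}, which is precisely the Bernstein-plus-net lemma you describe) to replace $P_\Omega$-quantities by $p$ times their full-observation counterparts, then use $\langle x,z\rangle^2 \le \|x\|^2$ to conclude. One small slip: after dividing by $p$ the inequality should read $\|x\|^2 + 2\langle x,z\rangle^2 \ge 1 - O(\epsilon)$ (you dropped a factor of $2$), but this is exactly what yields your stated $3\|x\|^2 \ge 1 - O(\epsilon)$, so the argument is intact.
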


\begin{proof}
	We plug in $v = z$ in the second-order \KKT condition~\eqref{eq:kkt2},  and obtain that 
	\begin{equation}
	\Norm{\Po(zx^{\top} +xz^{\top})}_F ^2 \ge 2z^{\top} \Po (M-xx^{\top})z \mper\label{eqn:10}
	\end{equation}
	
	Intuitively, when restricted to $\Omega$, the squared Frobenius on the LHS and the quadratic form on the RHS should both be approximately a $p$ fraction of the unrestricted case. In fact, both LHS and RHS can be written as the sum of terms of the form $\inner{\Po(uv^T),\Po(st^T)}$, because 
\begin{align*}
	\Norm{\Po(zx^{\top} +xz^{\top})}_F ^2 &= 2\inner{\Po(zx^T),\Po(zx^T)}+2\inner{\Po(zx^T),\Po(xz^T)}\\
	 2z^{\top} \Po (M-xx^{\top})z &= 2\inner{\Po(zz^T),\Po(zz^T)} - 2\inner{\Po(xx^T),\Po(zz^T)}.
\end{align*}
	
Therefore we can use concentration inequalities (Theorem~\ref{thm:rank1_incohenrence_concentration}), and simplify the equation
	\begin{align}
	\textup{LHS of}~\eqref{eqn:10} &  = p  	\Norm{zx^{\top} +xz^{\top}}_F^2 \pm O(\sqrt{pd \|x\|_{\infty}^2\|z\|_{\infty}^2 \|x\|^2 \|z\|^2}) \nonumber\\
	&  = 2p\|x\|^2 \|z\|^2 + 2p\inner{x,z}^2 \pm O(p\epsilon) \mcom \tag{Since $x,z\in \mathcal{B}$} 
	 \end{align}
	 where $\epsilon = O(\mu^2 \sqrt{\frac{\log d}{pd}})$.	Similarly, by Theorem~\ref{thm:rank1_incohenrence_concentration} again, we have 
	 	\begin{align}
	 	\textup{RHS of}~\eqref{eqn:10} &  = 2\left(\inner{\Po(zz^{\top}), \Po(zz^{\top})} - \inner{\Po(xx^{\top}), \Po(zz^{\top})}\right)\tag{Since $M = zz^{\top}$}\\
	 	& = 2p\|z\|^4 - 2p\inner{x,z}^2  \pm O(p\epsilon) \tag{by Theorem~\ref{thm:rank1_incohenrence_concentration} and $x,z\in \mathcal{B}$}
	 	\end{align}
	 	
	 (Note that even we use the $\tau$-relaxed second order \KKT condition, the RHS only becomes $1.99p\|z\|^4 - 2p\inner{x,z}^2  \pm O(p\epsilon)$ which does not effect the later proofs.)
	 	
	 Therefore plugging in estimates above back into equation~\eqref{eqn:10}, we have that 
	 \begin{equation}
	 2p\|x\|^2 \|z\|^2 + 2p\inner{x,z}^2 \pm O(p\epsilon) \ge 2\|z\|^4 - 2\inner{x,z}^2  \pm O(p\epsilon)\mcom \nonumber
	 \end{equation}
	 which implies that $6p\|x\|^2\|z\|^2 \ge 2p\|x\|^2 \|z\|^2 + 4p\inner{x,z}^2\ge 2p\|z\|^4 -O(p\epsilon)$. Using $\|z\|^2 = 1$, and $\epsilon$ being sufficiently small,  we complete the proof. 
\end{proof}

Next we use first order \KKT condition to pin down another property of $x$ -- it has to be close to $z$ after scaling. Note that this doesn't mean directly that $x$ has to be close to $z$ since $x =0$ also satisfies first order \KKT condition (and therefore the conclusion~\eqref{eqn:9} below). 
\begin{lemma}\label{lem:warmup_first-orderkkt}
	With high probability over the randomness of $\Omega$, for any $x\in \mathcal{B}$ that satisfies first-order \KKT condition~\eqref{eq:kkt1}, 	we have that $x$ also satisfies 
	\begin{equation}
		\Norm{\inner{z,x} z - \|x\|^2 x }\le O(\epsilon)\mper\label{eqn:9}
	\end{equation}
	where $\epsilon = \tilde{O}(\mu^3 (pd)^{-1/2})$. 
\end{lemma}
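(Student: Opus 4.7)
The plan is to mimic the full-observation argument in Claim~\ref{claim:full1}, where the identity $\nabla g(x) = (zz^\top - xx^\top)x = 0$ rearranges immediately to $\inner{z, x} z = \|x\|^2 x$. My goal is to show that the same identity holds up to an $O(\epsilon)$ error in $\ell_2$ norm once the full sum is replaced by the $\Omega$-restricted sum.

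First, the regularizer $R(x)$ plays no role inside $\mathcal{B}$: since $\|x\|_\infty \le 2\mu/\sqrt{d} < \alpha = 10\mu/\sqrt{d}$, every summand $h(x_i) = (|x_i| - \alpha)^4\,\Ind_{|x_i|\ge \alpha}$ of $R$ vanishes in a neighborhood of $x$, so $\nabla R(x) = 0$. The first-order \KKT condition~\eqref{eq:kkt1} therefore collapses to
\[ P_\Omega(zz^\top - xx^\top)\,x = 0. \]

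Second, I will establish the concentration statement
\[ \Bignorm{P_\Omega(zz^\top - xx^\top)\,x - p\,(zz^\top - xx^\top)\,x} \le p\epsilon, \]
holding uniformly over $x \in \mathcal{B}$ with high probability, where $\epsilon = \tilde O\bigl(\mu^3 (pd)^{-1/2}\bigr)$. For a fixed $x$, the $i$-th coordinate of the error equals $\sum_j (\Ind_{(i,j)\in\Omega} - p)(z_i z_j - x_i x_j)\,x_j$, a sum of $d$ independent centered random variables each bounded in magnitude by $O(\mu^3/d^{3/2})$ (using $|z_i z_j - x_i x_j|\le 5\mu^2/d$ and $|x_j|\le 2\mu/\sqrt{d}$) with total variance at most $O(p\mu^6/d^2)$. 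Bernstein's inequality then bounds each coordinate by $\tilde O(\mu^3\sqrt{p}/d)$, and summing variances across the $d$ rows yields an $\ell_2$ deviation of order $\tilde O(\mu^3 \sqrt{p/d}) = p\epsilon$ for that fixed $x$. Uniformity over $\mathcal{B}$ follows from the same net-plus-concentration apparatus as Theorem~\ref{thm:rank1_incohenrence_concentration}, which was already invoked in the proof of Lemma~\ref{lem:warmup_second-orderkkt}; I expect to apply that theorem essentially off the shelf rather than redo the covering argument.

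Combining the two steps, $\|p(zz^\top - xx^\top)x\| \le p\epsilon$, and dividing by $p$ gives $\|(zz^\top - xx^\top)x\| \le \epsilon$. Expanding $(zz^\top - xx^\top)x = \inner{z, x} z - \|x\|^2 x$ then produces exactly~\eqref{eqn:9}. The step I expect to be trickiest is obtaining the uniform concentration with the sharp $\mu^3(pd)^{-1/2}$ dependence: the naive passage from coordinate-wise Bernstein to an $\ell_2$ bound costs a factor of $\sqrt{d}$, so one must carefully exploit the incoherence of \emph{both} $x$ and $z$ in the per-coordinate variance calculation rather than appealing to crude global norm bounds on the matrix $zz^\top - xx^\top$.
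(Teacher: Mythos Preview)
Your overall strategy is correct and close to the paper's, but there are two points worth flagging.

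First, the paper does not treat the error $P_\Omega(zz^\top-xx^\top)x - p(zz^\top-xx^\top)x$ as a single object. It splits into two pieces and uses a different concentration tool for each. For the linear-in-$x$ term it exploits that $z$ is \emph{fixed}: one invokes Lemma~\ref{lem:concentration-3} to get the spectral bound $\|P_\Omega(zz^\top)-pzz^\top\|\le p\cdot\tilde O(\mu^2(pd)^{-1/2})$, a single high-probability event with no uniformity over $x$ required; the bound on $\|P_\Omega(zz^\top)x-pzz^\top x\|$ then follows for every $x$ with $\|x\|\le 1$ just by multiplying. Only the cubic term $P_\Omega(xx^\top)x$ genuinely needs a uniform-over-$x$ argument, and that is supplied by Theorem~\ref{thm:concentration_2}. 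Your bundled treatment is not wrong, but it forces you to carry the net argument through a term where it is unnecessary, and it obscures why the two contributions have different $\mu$-exponents ($\mu^2$ versus $\mu^3$).

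Second, your plan to get uniformity ``off the shelf'' from Theorem~\ref{thm:rank1_incohenrence_concentration} does not work as stated: that theorem controls the \emph{scalar} $\langle P_\Omega(W),P_\Omega(Z)\rangle$, whereas here you need a bound on a vector $\ell_2$-norm that is cubic in $x$. The per-coordinate Bernstein sketch you give is the right idea, but the passage from coordinate-wise tails to a uniform $\ell_2$ bound is exactly the content of Theorem~\ref{thm:concentration_2} (truncation, subgaussian squares, Hanson--Wright-type aggregation, then a net over $x$), not of Theorem~\ref{thm:rank1_incohenrence_concentration}. So either cite Theorem~\ref{thm:concentration_2} directly for the $xx^\top x$ piece, or be explicit that you are reproving its rank-$1$ case.
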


\begin{proof}
	Note that since $x\in \mathcal{B}$, we have $R(x) = 0$. Therefore first-order \KKT condition says that \begin{equation}
	\Po (M -xx^{\top})x = \Po(zz^{\top})x - \Po(xx^{\top})x= 0\mper \label{eqn:18}
	\end{equation} 
Again, intuitively we hope $\Po(zz^T) \approx p zz^T$ and $\Po(xx^T)x \approx p\|x\|^2 x$. These are made precise by the concentration inequalities Lemma~\ref{lem:concentration-3} and Theorem~\ref{thm:concentration_2} respectively.
	
By Theorem~\ref{thm:concentration_2}, we have that with high probability over the choice of $\Omega$, for every $x\in \mathcal{B}$, 
		\begin{equation}
		\|\Po(xx^{\top})x - pxx^{\top}x\|_F\le p\eps \|x\|^3\le p\epsilon \label{eqn:19}
		\end{equation}
		where $\epsilon = \tilde{O}(\mu^3 (pd)^{-1/2})$. 
	Similarly, by Lemma~\ref{lem:concentration-3}, we have that for with high probability over the choice of $\Omega$, 
	\begin{equation}
	\Norm{\Po(zz^{\top}) - pzz^{\top}} \le \epsilon p \mper \nonumber
	\end{equation}
	for $\epsilon = \tilde{O}(\mu^2 (pd)^{-1/2})$. Therefore for every $x$, 
		\begin{equation}
		\Norm{\Po(zz^{\top})x - pzz^{\top}x} \le \epsilon p\|x\| \le \epsilon p\mper\label{eqn:21}
		\end{equation}
	Plugging in estimates~\eqref{eqn:21} and~\eqref{eqn:19} into equation~\eqref{eqn:18}, we complete the proof. 
\end{proof}

Finally we combine the two  \KKT conditions and show equation~\eqref{eqn:9} implies $xx^T$ must be close to $zz^T$.

\begin{lemma}\label{lem:warmup-close}
	Suppose vector $x$ satisfies that $\|x\|^2 \ge 1/4$, and that 
$
	\Norm{\inner{z,x} z - \|x\|^2 x }\le \delta\mper\nonumber
$
	Then for $\delta  \in (0,0.1)$, 
	\begin{equation}
	\Norm{xx^{\top} - zz^{\top}}_F^2\le O(\delta) \mper \nonumber
		\end{equation}
\end{lemma}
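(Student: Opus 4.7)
\textbf{Proof proposal for Lemma~\ref{lem:warmup-close}.} The plan is to reduce everything to a pair of scalar quantities and then show they each approximate $1$. Let me set $\alpha = \iprod{z,x}$ and $\beta = \norm{x}^2$, so that the hypothesis reads $\norm{\alpha z - \beta x} \le \delta$, and the target quantity unfolds into scalars as
\begin{equation*}
\Norm{xx^{\top} - zz^{\top}}_F^2 = \norm{x}^4 - 2\iprod{x,z}^2 + \norm{z}^4 = \beta^2 - 2\alpha^2 + 1\mper
\end{equation*}
So it suffices to show $\beta = 1 + O(\delta)$ and $\alpha^2 = 1 + O(\delta)$; the cancellation $1 - 2 + 1 = 0$ at the leading order then delivers the claimed bound.

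To extract scalar inequalities I would take the inner product of $\alpha z - \beta x$ with the two natural directions $z$ and $x$. Pairing with $z$ yields $\abs{\alpha(1-\beta)} \le \delta$ (using $\norm{z}=1$). Pairing with $x$ yields $\abs{\alpha^2 - \beta^2} \le \delta \norm{x} = \delta\sqrt{\beta}$. These two inequalities, together with the Cauchy--Schwarz bound $\alpha^2 \le \beta$ and the hypothesis $\beta \ge 1/4$, are the only ingredients needed.

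Next I would chain these estimates in the right order to bootstrap. First, combining $\alpha^2 \le \beta$ with $\abs{\alpha^2 - \beta^2} \le \delta\sqrt{\beta}$ gives $\beta^2 \le \beta + \delta\sqrt{\beta}$, hence $\beta \le 1 + O(\delta)$. Second, the lower bound on $\beta$ together with $\abs{\alpha^2 - \beta^2} \le \delta\sqrt{\beta}$ forces $\alpha^2 \ge \beta^2 - \delta\sqrt{\beta} \ge 1/16 - O(\delta)$, which is bounded below by a positive constant for $\delta$ small enough. Plugging this lower bound on $\abs{\alpha}$ back into $\abs{\alpha(1-\beta)} \le \delta$ upgrades the previous estimate to $\abs{1-\beta} \le O(\delta)$. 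Finally, $\abs{\alpha^2 - \beta^2} \le \delta\sqrt{\beta}$ now gives $\abs{\alpha^2 - 1} \le O(\delta)$.

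Substituting these two estimates into $\beta^2 - 2\alpha^2 + 1 = (1 + O(\delta))^2 - 2(1 + O(\delta)) + 1 = O(\delta)$ closes the argument. There is no real obstacle here beyond the bootstrapping order: one must first use Cauchy--Schwarz to cap $\beta$, then use the assumed lower bound $\beta \ge 1/4$ to confirm $\abs\alpha$ is bounded away from zero (so that dividing by $\alpha$ in $\abs{\alpha(1-\beta)} \le \delta$ is legal), and only then conclude $\beta \approx 1$ and $\alpha^2 \approx 1$. The sign ambiguity $\alpha \approx \pm 1$ corresponds exactly to the symmetry $x \approx \pm z$, which is invisible after passing to $xx^\top$, matching the symmetric formulation of the lemma.
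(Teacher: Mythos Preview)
Your proof is correct but takes a genuinely different route from the paper's. The paper decomposes $z$ orthogonally with respect to $x$, writing $z = u x + v$ with $v \perp x$; then the hypothesis becomes $\|x\|^2\sqrt{u^2\|v\|^2 + (1-u^2)^2} \le \delta$, which with $\|x\|^2 \ge 1/4$ immediately yields $|1-u^2| \le 4\delta$ and $u\|v\| \le 4\delta$, and hence $\|v\| = O(\delta)$. Expanding $xx^\top - zz^\top = (1-u^2)xx^\top - u(xv^\top + vx^\top) - vv^\top$ then bounds every term by $O(\delta)$ in Frobenius norm directly. By contrast, you reduce to the two scalars $\alpha = \iprod{z,x}$, $\beta = \|x\|^2$ and bootstrap through three inequalities. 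Your approach is arguably more elementary (only inner products with $z$ and $x$ plus Cauchy--Schwarz) but it requires the extra care of ordering the implications correctly and handling the ``$\delta$ small enough'' caveat (which is harmless since for $\delta$ bounded away from zero the conclusion follows from your upper bound $\beta \le 1+O(\delta)$). The paper's geometric decomposition is more direct and in fact delivers the stronger estimate $\|xx^\top - zz^\top\|_F = O(\delta)$ without any bootstrapping, whereas your scalar identity $\beta^2 - 2\alpha^2 + 1$ naturally produces only the squared bound stated in the lemma.
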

\begin{proof}
We write $z = u x + v$ where $u\in \R$ and $v$ is a vector orthogonal to $x$. Now we know $\inner{z,x}z = u^2\|x\|^2x + u\|x\|^2 v$, therefore
$$
\delta \ge \Norm{\inner{z,x} z - \|x\|^2 x } = \|x\|^2 \sqrt{u^2 \|v\|^2 + (1-u^2)^2}.
$$

In particular, we know $|1-u^2| \le 4\delta$ and $u\|v\| \le 4\delta$. This means $|u| \in 1\pm 3\delta$ and $\|v\| \le 8\delta$. Now we expand $xx^T - zz^T$:
$$
xx^T - zz^T = (1-u^2) xx^T + uxv^T+uvx^T + vv^T
$$

It is clear that all the terms have norm bounded by $O(\delta)$, therefore $\Norm{xx^{\top} - zz^{\top}}_F^2\le O(\delta)$.
\end{proof}

\subsection{Extension to general $x$} \label{subsec:nonincoherent}

\begin{figure}[]
	\centering
	\includegraphics[width=3in]{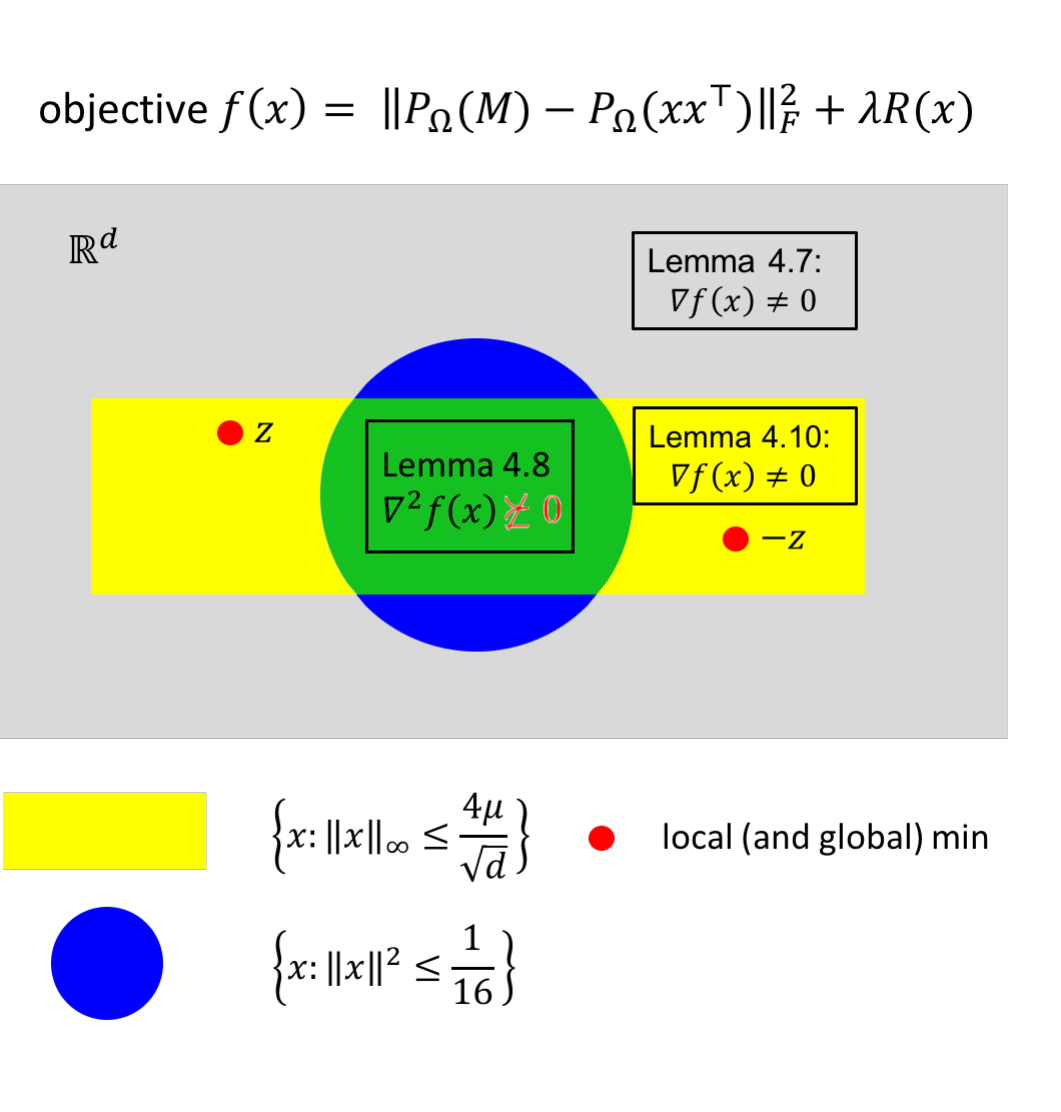}
	\caption{Partition of $\R^d$ into regions where our Lemmas apply. For example, Lemma 3.8 rules out the possibility that a point $x$ in the green region is local minimum. Here, The green region is the intersection of $\ell_{\infty}$ norm ball and $\ell_2$ norm ball. Both the white region and yellow region have non-zero gradient but for different reasons.} \label{fig:patition} 
\end{figure}

We have shown when $x$ is incoherent and satisfies first and second order \KKT conditions, then it must be close to $z$ or $-z$. Now we need to consider more general cases when $x$ may have some very large coordinates. Here the main intuition is that the first order \KKT condition with a proper regularizer is enough to guarantee that $x$ cannot have a entry that is too much bigger than $\mu/\sqrt{d}$. 
\begin{lemma}\label{lem:almost_incoherence}
		With high probability over the choice of $\Omega$, for any
	$x$ that satisfies first-order order \KKT condition~\eqref{eq:kkt1}, we have
	\begin{equation}
	\|x\|_{\infty}\le 4\max\left\{\alpha,\mu \sqrt{p/\lambda}\right\}\mper\label{eqn:almost_incoherence}
	\end{equation}
\end{lemma}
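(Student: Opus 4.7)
The plan is to work coordinate-wise with the first-order condition~\eqref{eq:kkt1} evaluated at the index $i^\star \in \arg\max_i |x_i|$; a bound on $|x_{i^\star}|$ is simultaneously a bound on $\|x\|_\infty$, so a single coordinate suffices. By flipping the sign of $x$ if necessary, assume $x_{i^\star} > 0$; if $x_{i^\star} \le \alpha$ then $\|x\|_\infty \le \alpha$ and we are done, so we may assume $x_{i^\star} > \alpha$. Writing $S_{i^\star} = \{j : (i^\star,j)\in\Omega\}$ and using $h'(x_{i^\star}) = 4(x_{i^\star}-\alpha)^3$, the $i^\star$-th coordinate of~\eqref{eq:kkt1} reads
\begin{equation*}
4\lambda(x_{i^\star} - \alpha)^3 \;=\; 2 z_{i^\star}\sum_{j\in S_{i^\star}} z_j x_j \;-\; 2 x_{i^\star}\sum_{j\in S_{i^\star}} x_j^2.
\end{equation*}

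The key observation is that the second term on the right is non-positive (since $x_{i^\star}>0$), so the regularizer term can only be supported by the ``signal'' piece $z_{i^\star}\sum_{j\in S_{i^\star}} z_j x_j$. Discarding the helpful non-positive term and applying $|z_{i^\star}|\le \mu/\sqrt{d}$ together with Cauchy--Schwarz gives
\begin{equation*}
4\lambda(x_{i^\star}-\alpha)^3 \;\le\; 2\frac{\mu}{\sqrt d}\Bigl(\sum_{j\in S_{i^\star}} z_j^2\Bigr)^{1/2}\Bigl(\sum_{j\in S_{i^\star}} x_j^2\Bigr)^{1/2}.
\end{equation*}

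Next I control the two factors on the right using sampling concentration. A Chernoff bound for $|S_{i^\star}|$ (with a union bound over the $d$ choices of $i^\star$) yields $|S_{i^\star}| \le 2pd$ with high probability; combined with incoherence $|z_j|\le \mu/\sqrt d$ this gives $\sum_{j\in S_{i^\star}} z_j^2 \le 2p\mu^2$. Because $i^\star$ is the arg-max, every $x_j$ with $j\in S_{i^\star}$ satisfies $|x_j| \le x_{i^\star}$, so $\sum_{j\in S_{i^\star}} x_j^2 \le |S_{i^\star}|\, x_{i^\star}^2 \le 2pd\, x_{i^\star}^2$. Substituting collapses the previous display to
\begin{equation*}
(x_{i^\star}-\alpha)^3 \;\le\; \frac{\mu^2 p}{\lambda}\, x_{i^\star}.
\end{equation*}
This cleanly splits into two regimes: either $x_{i^\star}\le 2\alpha$, or else $x_{i^\star}-\alpha \ge x_{i^\star}/2$, in which case $x_{i^\star}^2 \le 8\mu^2 p/\lambda$, i.e., $x_{i^\star}\le 2\sqrt 2\,\mu\sqrt{p/\lambda}$. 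In either case $|x_{i^\star}|\le 4\max\{\alpha,\mu\sqrt{p/\lambda}\}$, which is the claimed bound.

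The only non-routine idea is spotting that the ``diagonal'' contribution $-2x_{i^\star}\sum_{j\in S_{i^\star}} x_j^2$ carries exactly the sign that helps and can be thrown away for free; this is what lets the argument go through without any a priori control on $\|x\|_2$, which we have no way of obtaining at this stage. Once that cancellation is noticed, everything else is Cauchy--Schwarz plus two elementary concentration facts (a Chernoff tail bound for $|S_{i^\star}|$ and $\ell_\infty$-incoherence for $z$), so I expect the main care to be in constant bookkeeping and in making sure the high-probability events hold simultaneously for all $d$ choices of $i^\star$.
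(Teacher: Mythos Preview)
Your proof is correct and follows essentially the same approach as the paper: work at the coordinate $i^\star = \arg\max_i |x_i|$, use the sign of $x_{i^\star}$ to discard the helpful term $-2x_{i^\star}\sum_{j\in S_{i^\star}} x_j^2$, bound $|S_{i^\star}|\le 2pd$ by Chernoff plus union bound, and compare the resulting upper bound on the signal term with the lower bound $\frac{\lambda}{2}|x_{i^\star}|^3$ on the regularizer. The only cosmetic differences are that the paper bounds $|\sum_{j\in S_{i^\star}} z_{i^\star} z_j x_j|$ via the cruder $\ell_1$-type estimate $|x_{i^\star}|\cdot |S_{i^\star}|\cdot \mu^2/d$ instead of your Cauchy--Schwarz split, and the paper places the case split $x_{i^\star}\le 2\alpha$ versus $x_{i^\star}>2\alpha$ at the very beginning rather than at the end; both routes land on the same inequality and the same conclusion.
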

Here we recall that $\alpha$ was chosen to be $10\mu/\sqrt{d}$ and $\lambda$ is chosen to be large so that the $\alpha$ dominates the second term $\mu \sqrt{p/\lambda}$ in the setting of Theorem~\ref{thm:rank1_main}. 
\begin{proof}[Proof of Lemma~\ref{lem:almost_incoherence}]
	\newcommand{\imax}{i^{\star}}
	Suppose $\imax=\max_j |x_j|$.  Without loss of generality, suppose $x_{\imax} \ge 0$.  Suppose $\imax$-th row of $\Omega$ consists of entries with index $[i]\times S_{\imax}$. If $|x_{\imax}|\le 2\alpha$, we are done. Therefore in the rest of the proof we assume $|x_{\imax}|> 2\alpha$. Note that when $p \ge c(\log d)/d$ for sufficiently large constant $c$, with high probability over the choice of $\Omega$, we have $|S_{\imax}|\le 2pd$. In the rest of argument we are working with such an $\Omega$ with $|S_{\imax}|\le 2pd$. 
	
	We will compare the $\imax$-th coordinate of LHS and RHS of first-order \KKT condition~\eqref{eq:kkt1}. For preparation, we have
	\begin{align}
	\left|\left(\Po(M)x\right)_{\imax}\right| & = \left|\left(\Po(zz^{\top})x\right)_{\imax}\right| = \left|\sum_{j\in S_{\imax}} z_{\imax}z_jx_j\right|\nonumber\\
	& \le |x_{\imax}|\sum_{j\in S_{\imax}}|z_{\imax}z_j|\le |x_{\imax}|\cdot \mu^2/d\cdot |S_{\imax}|  \le 2|x_{\imax}|p \mu^2\label{eqn:1} 	\end{align}
	where the last step we used the fact that $|S_{\imax}|\le 2pd$. 
	Moreover, we have that 
	\begin{equation}
	(\Po(xx^{\top})x)_{\imax}= \sum_{j\in S_{\imax}} x_{\imax}x_j^2\ge 0\mcom\nonumber	\end{equation}
	and that 
	\begin{align}
	(\lambda\nabla R(x))_{\imax}& = 4\lambda(|x_{\imax}|-\alpha)^3\sign(x_{\imax}) \ge \frac{\lambda}{2} |x_{\imax}|^3 \tag{Since $x_{\imax}\ge 2\alpha$}
	\end{align}
	Now plugging in the bounds above into the $\imax$-th coordinate of equation~\eqref{eq:kkt1}, we obtain
	\begin{align}
	4|x_{\imax}|p \mu^2\ge 	2(\Po(M-xx^{\top})x)_{\imax}\ge 	(\lambda\nabla R(x))_{\imax}\ge \frac{\lambda}{2} |x_{\imax}|^3\mcom\nonumber
	\end{align}
	which implies that $|x_{\imax}|\le 4\sqrt{p\mu^2/\lambda}$. 
\end{proof}

Setting $\lambda \ge \mu^2 p/\alpha^2$ and $\alpha = 10\mu\sqrt{1/d}$, Lemma~\ref{lem:almost_incoherence} ensures that any $x$ that satisfies first-order \KKT condition is the following ball, 
\begin{equation}
\mathcal{B}' = \Set{x\in \R^d : \|x\|_{\infty}\le 4\alpha }\mper \nonumber
\end{equation}

Then we would like to continue to use arguments similar to Lemma~\ref{lem:warmup_second-orderkkt} and \ref{lem:warmup_first-orderkkt}. However, things have become more complicated as now we need to consider the contribution of the regularizer.

\begin{lemma}[Extension of Lemma~\ref{lem:warmup_second-orderkkt}]\label{lem:general_x_2norm}
			In the setting of Theorem~\ref{thm:rank1_main}, 
	with high probability over the choice of $\Omega$, suppose $x\in \mathcal{B}'$ satisfies second-order \KKT condition~\eqref{eq:kkt2} or $\tau$-relaxed condition for $\tau \le 0.1p$, we have $\|x\|^2 \ge 1/8$.
\end{lemma}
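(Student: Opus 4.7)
The plan is to repeat the proof of \prettyref{lem:warmup_second-orderkkt} by substituting $v = z$ into the second-order \KKT condition~\eqref{eq:kkt2}, now accounting for (i) the looser incoherence $\|x\|_\infty \le 4\alpha = 40\mu/\sqrt d$ that $\mathcal{B}'$ allows instead of $\|x\|_\infty \le 2\mu/\sqrt d$, and (ii) the regularizer contribution $\lambda z^\top \nabla^2 R(x) z$, which need no longer vanish because $x$ may have coordinates exceeding $\alpha$. We may assume $\|x\|^2 \le 1$, since otherwise $\|x\|^2 \ge 1/8$ holds trivially; under this bound $x$ is $O(\mu)$-incoherent with bounded $\ell_2$ norm, so \prettyref{thm:rank1_incohenrence_concentration} (with only constants degrading from the incoherence factor) yields
\[
\Norm{\Po(zx^\top + xz^\top)}_F^2 = 2p\|x\|^2 + 2p\inner{z,x}^2 \pm O(p\epsilon),
\qquad
2 z^\top \Po(M - xx^\top) z = 2p - 2p\inner{z,x}^2 \pm O(p\epsilon),
\]
exactly as in the proof of \prettyref{lem:warmup_second-orderkkt}.

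The key new step is to show that the regularizer term is negligible in this direction. Note that $\nabla^2 R(x)$ is diagonal with entries $h''(x_i) = 12(|x_i|-\alpha)_+^2$, and for $x \in \mathcal{B}'$ we have $|x_i| \le 4\alpha$, so $h''(x_i) \le 108\alpha^2$ uniformly. Combined with $\|z\|_\infty^2 \le \mu^2/d$ and $\|z\|^2 = 1$,
\[
z^\top \nabla^2 R(x) z \le 108\alpha^2 \|z\|^2 = O(\mu^2/d).
\]
For $\lambda = \Theta(\mu^2 p/\alpha^2)$ this contributes at most $O(\mu^2 p)$, absorbed into the $O(p\epsilon)$ error for small $\mu$. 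Here the crucial feature is that although $\lambda$ is large, it only multiplies $\nabla^2 R(x)$ restricted to directions that are \emph{incoherent} (i.e.\ $z$), while $R$ itself only grows on \emph{coherent} directions; this discrepancy is what keeps the product small.

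Combining everything with the second-order \KKT condition~\eqref{eq:kkt2} at $v = z$ and using $\inner{z,x}^2 \le \|x\|^2$ reproduces the inequality $6p\|x\|^2 \ge 2p - O(p\epsilon)$ from \prettyref{lem:warmup_second-orderkkt}, giving $\|x\|^2 \ge 1/3 - O(\epsilon) \ge 1/8$. For the $\tau$-relaxed version with $\tau \le 0.1p$, an additional $-\tau\|z\|^2 = -\tau$ appears on the right, tightening the bound by only a constant factor and preserving the conclusion. The main obstacle is controlling $\lambda z^\top \nabla^2 R(x) z$; all the remaining arithmetic is a verbatim copy of the incoherent case once this is done.
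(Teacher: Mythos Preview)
Your approach differs from the paper's in the choice of test direction, and the difference matters: plugging $v=z$ does not control the regularizer term well enough. Your bound reads
\[
\lambda\, z^\top \nabla^2 R(x)\, z \;\le\; \lambda \cdot 108\,\alpha^2,
\]
and with $\lambda = \mu^2 p/\alpha^2$ this is $108\,\mu^2 p$. But the incoherence parameter always satisfies $\mu \ge 1$ (the average squared row norm of $z$ is $\|z\|^2/d = 1/d$, so $\max_i z_i^2 \ge 1/d$), so $108\,\mu^2 p$ is never small compared to the main term $2p$ on the right-hand side; your final inequality becomes $6p\|x\|^2 \ge 2p - 108\,\mu^2 p - O(p\epsilon)$, which is vacuous already for $\mu = 1$. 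Sharpening the bound to $12\,\|z\|_\infty^2\|x\|^2 \le 12\mu^2/d$ (using $(|x_i|-\alpha)^2 \le x_i^2$) helps the constants but still fails once $\mu$ exceeds a moderate constant. Worse, \prettyref{thm:rank1_main} only assumes $\lambda \ge \mu^2 p/\alpha^2$, so $\lambda$ may be arbitrarily large, and then your regularizer bound blows up entirely. The phrase ``for small $\mu$'' is the gap: $\mu$ is never small in the required sense.

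The paper sidesteps this by choosing $v = z_J$, the restriction of $z$ to the index set $J = \{i : |x_i| \le \alpha\}$. Since $h''(x_i)=0$ for $i\in J$, the regularizer contribution $\lambda\, z_J^\top \nabla^2 R(x)\, z_J$ vanishes \emph{exactly}, regardless of $\mu$ or $\lambda$. The price is that $\|z_J\|^2$ may be less than~$1$; but because $\|x\|^2 \le 1$ forces $|J^c| \le 1/\alpha^2$, one has $\|z_{J^c}\|^2 \le \mu^2/(d\alpha^2) \le 1/100$ and hence $\|z_J\|^2 \ge 1/2$. The same concentration estimates then give $3\|x\|^2\|z_J\|^2 \ge \|z_J\|^4 - O(\epsilon)$, i.e.\ $\|x\|^2 \ge \tfrac13\|z_J\|^2 - O(\epsilon) \ge 1/8$. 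This is the missing idea: kill the regularizer by restricting the test direction, rather than trying to bound it.
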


The guarantees and proofs are very similar to Lemma~\ref{lem:warmup_second-orderkkt}. The main intuition is that we can restrict our attentions to coordinates whose regularizer is equal to 0. See Section~\ref{sec:rank1proof} for details.
	
We will now deal with first order \KKT condition. We first write out the basic extension of Lemma~\ref{lem:warmup_first-orderkkt}, which follows from the same proof except we now include the regularizer term.

\begin{lemma}[Basic extension of Lemma~\ref{lem:warmup_first-orderkkt}]
\label{lem:first-order-general-basic}
	With high probability over the randomness of $\Omega$, for any $x\in \mathcal{B}'$ that satisfies first-order \KKT condition~\eqref{eq:kkt1}, we have that $x$ also satisfies 
	\begin{equation}
	\Norm{\inner{z,x} z - \|x\|^2 x - \gamma \cdot \nabla R(x) }\le O(\epsilon)\mper\label{eqn:29}
	\end{equation}
	where $\epsilon = \tilde{O}(\mu^6 (pd)^{-1/2})$ and $\gamma = \lambda/(2p)\ge 0$. \end{lemma}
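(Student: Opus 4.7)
The plan is to imitate the proof of \prettyref{lem:warmup_first-orderkkt} almost verbatim, but now keeping the regularizer term on the right-hand side (instead of using $R(x)=0$ on $\mathcal{B}$) and allowing $\|x\|$ to be as large as $O(\mu)$ rather than at most $1$. The starting point is again the first-order KKT identity~\eqref{eq:kkt1} with $M=zz^\top$. After dividing by $2p$ and rearranging, it reads
\begin{equation*}
\tfrac{1}{p}P_\Omega(zz^\top)x - \tfrac{1}{p}P_\Omega(xx^\top)x \;=\; \gamma\,\nabla R(x),
\end{equation*}
so what we need to show is that the left-hand side is close, in $\ell_2$, to $\inner{z,x}z-\|x\|^2x$.

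First I would record the norm bound that the $\ell_\infty$ constraint in $\mathcal{B}'$ gives for free: since $\|x\|_\infty\le 4\alpha=O(\mu/\sqrt d)$, we automatically have $\|x\|\le \sqrt{d}\cdot\|x\|_\infty\le O(\mu)$. With this in hand, the two terms on the LHS can be handled exactly as in the warm-up proof, only with $\|x\|$ replaced by $O(\mu)$ everywhere. Concretely, \prettyref{lem:concentration-3} gives $\|P_\Omega(zz^\top)-pzz^\top\|\le \tilde{O}(\mu^2 p/\sqrt{pd})$ as a spectral-norm bound, and then right-multiplying by $x$ (with $\|x\|\le O(\mu)$) yields
\begin{equation*}
\bigl\|P_\Omega(zz^\top)x - p\,\inner{z,x}z\bigr\| \;\le\; \tilde{O}\!\Bigl(\tfrac{p\mu^3}{\sqrt{pd}}\Bigr).
\end{equation*}
Analogously, \prettyref{thm:concentration_2} applied to $x\in\mathcal{B}'$ gives $\|P_\Omega(xx^\top)x - p\|x\|^2x\|\le \tilde{O}(p\mu^3\|x\|^3/\sqrt{pd})$, and plugging in $\|x\|\le O(\mu)$ produces $\tilde{O}(p\mu^6/\sqrt{pd})$. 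Dividing both estimates by $p$ and combining with the rearranged KKT identity then yields
\begin{equation*}
\bigl\|\inner{z,x}z - \|x\|^2 x - \gamma\,\nabla R(x)\bigr\| \;\le\; \tilde{O}\!\Bigl(\tfrac{\mu^6}{\sqrt{pd}}\Bigr) \;=\; O(\epsilon),
\end{equation*}
which is exactly~\eqref{eqn:29}.

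The main point requiring care (rather than a real obstacle) is the step where the $\|x\|\le 1$ normalization from $\mathcal{B}$ is no longer available. The concentration statements scale as $\|x\|^3$ for the cubic term and as $\|x\|$ for the quadratic-in-$z$ term, so the fact that the $\ell_\infty$ constraint already implies $\|x\|\le O(\mu)$ is what allows us to pay only a factor of $\mu^3$ extra and land at $\mu^6$ in the final error rate. Everything else (the actual concentration inequalities and the algebraic identities $zz^\top x=\inner{z,x}z$ and $xx^\top x=\|x\|^2 x$) is unchanged from the incoherent case, and the regularizer gradient $\nabla R(x)$ simply rides along on the right-hand side since we never used $R(x)=0$ except to drop that term.
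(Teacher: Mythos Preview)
Your proposal is correct and takes essentially the same approach as the paper. The paper explicitly states that this lemma ``follows from the same proof [as \prettyref{lem:warmup_first-orderkkt}] except we now include the regularizer term,'' and that is exactly what you do: keep $\gamma\nabla R(x)$ on the right-hand side of the rearranged KKT identity, reuse \prettyref{lem:concentration-3} and \prettyref{thm:concentration_2}, and absorb the weaker norm bound $\|x\|\le O(\mu)$ (coming from $\|x\|_\infty\le 4\alpha$) into the error, which is precisely what produces the extra $\mu^3$ factor and lands at $\epsilon=\tilde O(\mu^6(pd)^{-1/2})$.
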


Next we will show that we can remove the regularizer term, the main observation here is nonzero entries $\nabla R(x)$ all have the same sign as the corresponding entries in $x$. See Section~\ref{sec:rank1proof} for details.

\begin{lemma}\label{lem:rank-1-reduction}
	Suppose $x\in \mathcal{B}'$ satisfies that $\|x\|^2\ge 1/8$, under the same assumption as Lemma~\ref{lem:first-order-general-basic}. we have, 		\begin{equation}
	\Norm{\inner{x,z}z-\|x\|^2x} \le O(\epsilon) \nonumber
	\end{equation}
\end{lemma}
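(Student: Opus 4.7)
The plan is to remove the regularizer term from equation~\eqref{eqn:29} and then conclude by triangle inequality. Set $v := \langle z,x\rangle z - \|x\|^2 x$ and $u := \gamma\,\nabla R(x)$, so that Lemma~\ref{lem:first-order-general-basic} reads $\|v - u\| \le O(\epsilon)$ and the target conclusion becomes $\|v\| \le O(\epsilon)$. It suffices to prove the stronger statement $\|u\| \le O(\epsilon)$, which by $\|v\| \le \|v-u\| + \|u\|$ yields the lemma.

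To bound $\|u\|$, I will exploit a sign-cancellation showing $\langle v, u\rangle \le 0$. The argument is coordinate-by-coordinate. Where $|x_i| < \alpha$ we have $u_i = 0$ and nothing is to check. Where $|x_i| \ge \alpha$, the explicit formula $u_i = 4\gamma(|x_i|-\alpha)^3\sign(x_i)$ shows that $u_i$ shares sign with $x_i$; meanwhile
\begin{equation*}
v_i\,\sign(x_i) \;=\; \langle z, x\rangle\,z_i\,\sign(x_i) \;-\; \|x\|^2\,|x_i|.
\end{equation*}
The first summand is bounded in absolute value by $|\langle z, x\rangle|\cdot \|z\|_\infty \le \|x\|\cdot \mu/\sqrt d$ using Cauchy--Schwarz and the incoherence of $z$, while the second summand satisfies $\|x\|^2|x_i| \ge 10\|x\|^2\,\mu/\sqrt d$ because $\alpha = 10\mu/\sqrt d$. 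The hypothesis $\|x\|^2 \ge 1/8$ forces $\|x\| > 1/10$, so the second summand strictly dominates and $v_i \sign(x_i) < 0$; consequently $v_i u_i \le 0$ at every coordinate, yielding $\langle v, u\rangle \le 0$. Then
\begin{equation*}
\|u\|^2 \;=\; \langle v, u\rangle \;+\; \langle u - v, u\rangle \;\le\; 0 + \|u-v\|\cdot \|u\| \;\le\; O(\epsilon)\,\|u\|,
\end{equation*}
so $\|u\| \le O(\epsilon)$, completing the proof.

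The main obstacle is the sign-cancellation step, which relies on two essentially independent inputs: the choice $\alpha = 10\mu/\sqrt d$ from the definition of the regularizer (large enough relative to $\|z\|_\infty$), and the second-order energy bound $\|x\|^2 \ge 1/8$ from Lemma~\ref{lem:general_x_2norm}. Without the incoherence gap between $\alpha$ and $\|z\|_\infty$, the drift term $\langle z, x\rangle z_i$ could overwhelm the restoring term $\|x\|^2 x_i$; without the lower bound on $\|x\|$, the restoring term could be too weak. These are precisely the ingredients by which the regularizer and the second-order KKT condition combine to rule out spurious critical points that lie outside the incoherent ball.
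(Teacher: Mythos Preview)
Your proof is correct and follows essentially the same approach as the paper: both establish the coordinate-wise sign cancellation $v_i u_i \le 0$ on the support of $\nabla R(x)$ by using $|x_i|\ge \alpha \gg \|z\|_\infty$ together with $\|x\|^2\ge 1/8$. The only cosmetic difference is that the paper concludes directly via $v_i^2 \le (v_i - u_i)^2$ (hence $\|v\|\le\|v-u\|$), whereas you first deduce $\|u\|\le O(\epsilon)$ and then apply the triangle inequality; both routes are one-line consequences of $\langle v,u\rangle\le 0$.
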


Finally we combine Lemma~\ref{lem:almost_incoherence}, Lemma~\ref{lem:general_x_2norm}, Lemma~\ref{lem:rank-1-reduction} and Lemma~\ref{lem:warmup-close} to prove Lemma~\ref{thm:warmup-close}. The argument are also summarized in Figure~\ref{fig:patition}, where we partition $\R^d$ into regions where our lemmas apply.

\section{Rank-r case}

\label{sec:rankr}

In this section we show how to extend the results in Section~\ref{sec:rank1} to recover matrices of rank $r$. Here we still use the same proof strategy of Section~\ref{sec:intuition}. Though for simplicity we only write down the proof for the partial observation case,  while the analysis for the full observation case (which was our starting point) can be obtained by substituting $[d]\times [d]$ for $\Omega$ everywhere. 

Recall that in this case we assume the original matrix $M = ZZ^T$, where $Z \in \R^{d\times r}$. We also assume Assumption~\ref{assump:incoherence}. The objective function is very similar to the rank 1 case
\begin{equation}
f(X) = \frac12 \Norm{\Po(M-XX^{\top})}_F^2 + \lambda R(X)\mcom \label{eqn:objective_rank_r}
\end{equation}
where 
$
	R(X) = \sum_{i=1}^d r(\|X_i\|)\mper
$
Recall that $r(t) = (|t|-\alpha)^4 \Ind_{t \ge \alpha}$. Here $\alpha$ and $\lambda$ are again parameters that we will determined later.

Without loss of generality, we assume that $\|Z\|_F^2 = r$ in this section. This implies that $\smax(Z)\ge 1\ge \smin(Z)$. 
Now we shall state the first and second order \KKT conditions:

\begin{proposition}\label{prop:kkt}
	If $X$ is a local optimum of objective function~\eqref{eqn:objective_rank_r}, its first order \KKT condition is, 
	\begin{align}
	2\Po (M)X &=2\Po (XX^{\top})X +\lambda \nabla R(X)\label{eq:kkt1-rankr}\mcom
	\end{align}
	and the second order \KKT condition is equivalent to 
	\begin{align}
	\forall V\in \R^{d\times r}, ~\norm{\Po(VX^{\top} +XV^{\top})}_F ^2 + \lambda \inner{V^{\top}, \nabla^2 R(X)V} & \ge 2\inner{\Po (M-XX^{\top}), VV^{\top}} \mper
	\label{eq:kkt2-rankr}
	\end{align}
\end{proposition}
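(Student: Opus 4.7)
The plan is a direct computation of the first and second directional derivatives of $f(X)$, then rewriting them in matrix form using linearity of $\Po$ and the cyclic property of the trace inner product. For any $V\in\R^{d\times r}$, I would expand
\[
(X+tV)(X+tV)^{\top} = XX^{\top} + t(XV^{\top}+VX^{\top}) + t^2 VV^{\top},
\]
and use linearity of $\Po$ to get $\Po(M-(X+tV)(X+tV)^{\top}) = B - tC - t^2 D$, where $B = \Po(M-XX^{\top})$, $C = \Po(XV^{\top}+VX^{\top})$, and $D=\Po(VV^{\top})$. Squaring gives $\tfrac12\|B-tC-t^2D\|_F^2$ as an explicit degree-4 polynomial in $t$, from which the coefficients of $t$ and $t^2$ can be read off directly.

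For the first-order condition, the coefficient of $t$ in $\tfrac12\|\Po(M-(X+tV)(X+tV)^{\top})\|_F^2$ is $-\inner{B,C} = -\inner{\Po(M-XX^{\top}),XV^{\top}+VX^{\top}}$, which by $\Po$ being self-adjoint and the symmetry of $\Po(M-XX^{\top})$ simplifies via $\inner{A,XV^{\top}} = \inner{AX,V}$ to $-2\inner{\Po(M-XX^{\top})X,V}$. Adding the contribution $\lambda\inner{\nabla R(X),V}$ from the regularizer and demanding this vanish for all $V$ gives \eqref{eq:kkt1-rankr}.

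For the second-order condition, the coefficient of $t^2$ in $\tfrac12\|\Po(A(t))\|_F^2$ equals $\tfrac12\|C\|_F^2 - \inner{B,D} = \tfrac12\|\Po(XV^{\top}+VX^{\top})\|_F^2 - \inner{\Po(M-XX^{\top}),VV^{\top}}$, and adding $\tfrac12\lambda\inner{V^{\top},\nabla^2 R(X)V}$ from the Taylor expansion of $R$ gives $\tfrac12\inner{V,\nabla^2 f(X)V}$. Requiring this to be nonnegative for every $V$ and multiplying by $2$ yields \eqref{eq:kkt2-rankr}.

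The computation is essentially routine; the only mild subtlety is keeping track of factors of $2$ coming from symmetrization (each of the two summands $XV^{\top}$ and $VX^{\top}$ contributes equally because $\Po(M-XX^{\top})$ is symmetric) and handling the regularizer $R(X) = \sum_i r(\|X_i\|)$, whose gradient and Hessian act row-wise; since $R$ is a separable function of the rows, its derivatives combine with the data term without interaction, so one may simply cite the standard second-order Taylor expansion for $R$ rather than computing $\nabla R$ and $\nabla^2 R$ explicitly.
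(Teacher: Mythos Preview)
Your proposal is correct and follows essentially the same route as the paper. The paper does not actually prove Proposition~\ref{prop:kkt} separately; it only proves the rank-$1$ analog (Proposition~\ref{prop:kktrank1}) in Appendix~\ref{sec:rank1proof}, by writing out the Taylor expansion of $f(x+\delta)$ and reading off the linear and quadratic terms in $\delta$, which is exactly your computation with $tV$ in place of $\delta$ and rank $r$ in place of rank $1$.
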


Note that the regularizer now is more complicated than the one dimensional case, but luckily we still have the following nice property.

\begin{proposition}\label{prop:gradient}
	We have that $
	\nabla R(X) = \Gamma X$ where $\Gamma \in \R^{d \times d}$ is a diagonal matrix with $\Gamma_{ii} = \frac{4(\|X_i\|-\alpha)^4}{\|X_i\|}\Ind_{\|X_i\|\ge \alpha}$. As a direct consequence, $\inner{(\nabla R(X))_i, X_i} \ge 0$ for every $i\in [d]$. 
\end{proposition}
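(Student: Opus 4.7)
The approach is a straightforward chain-rule computation exploiting the row-separable structure of the regularizer. Since $R(X) = \sum_{i=1}^d r(\|X_i\|)$ depends on $X$ only through the norms of its individual rows, the gradient decomposes row-by-row: $(\nabla R(X))_i$ depends only on $X_i$, and all cross-row partial derivatives vanish. This already tells us the matrix transforming $X$ into $\nabla R(X)$ has a block structure that is diagonal once we recognize the second factor below is a scalar multiple of $X_i$.

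The plan is to write $r(\|X_i\|)$ as a composition $r\circ \phi$ where $\phi(u) = \|u\|$, and apply the chain rule. The outer factor is $r'(\|X_i\|)$, and the inner factor is $\nabla\phi(X_i) = X_i/\|X_i\|$ for $X_i \neq 0$. Differentiating $r(t)=(t-\alpha)^4 \Ind_{t\ge\alpha}$ gives $r'(t)=4(t-\alpha)^3\Ind_{t\ge\alpha}$, which is nonnegative. Multiplying the two factors shows that $(\nabla R(X))_i$ is a nonnegative scalar times $X_i$, which is precisely the statement $\nabla R(X) = \Gamma X$ with $\Gamma$ diagonal and $\Gamma_{ii}\ge 0$. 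The second claim $\langle (\nabla R(X))_i, X_i\rangle = \Gamma_{ii}\|X_i\|^2 \ge 0$ is then immediate.

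The only minor issue is smoothness at the transition point $\|X_i\|=\alpha$ and at $X_i=0$. At $\|X_i\|=\alpha$ the function $r$ is in fact $C^3$: the quartic $(t-\alpha)^4$ has a triple root at $\alpha$, so $r$, $r'$, and $r''$ all agree with $0$ when one takes the one-sided limits from $t<\alpha$, meaning the chain rule applies without issue and the indicator inside $\Gamma_{ii}$ causes no discontinuity. At $X_i=0$ (which requires $\alpha>0$), the indicator $\Ind_{\|X_i\|\ge\alpha}$ vanishes in a neighborhood, so $R$ is locally constant in that row and $(\nabla R(X))_i=0$, consistent with setting $\Gamma_{ii}=0$ by the formula (interpreting $0/0$ as $0$). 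No step here is expected to be a serious obstacle; this proposition is primarily a bookkeeping lemma that isolates the sign and diagonal structure needed in the subsequent first-order \KKT manipulations, in analogy with how Lemma~\ref{lem:rank-1-reduction} used the sign property in the rank-$1$ case.
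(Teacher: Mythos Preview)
Your proof is correct and complete; the paper itself states this proposition without proof, treating it as an immediate chain-rule computation, which is exactly what you carry out. One remark: your derivation (correctly) yields $\Gamma_{ii}=\frac{4(\|X_i\|-\alpha)^3}{\|X_i\|}\Ind_{\|X_i\|\ge\alpha}$ with exponent~$3$, matching how the paper actually uses the gradient in the proofs of Lemma~\ref{lem:almost_incoherence_rankr} and elsewhere; the exponent~$4$ in the proposition's statement is a typo in the paper, and the key downstream property $\Gamma_{ii}\ge 0$ is unaffected.
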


Now we are ready to state the precise version of Theorem~\ref{thm:main}:

\begin{theorem}\label{thm:main-rank-r}
	 Suppose $p \ge C\max\{\mu^6\kappa^{16} r^4, \mu^4\kappa^4r^6\} d^{-1}\log^2 d$ where $C$ is a large enough constant.  Let $\alpha = 32\mu \kappa r/\sqrt{d}, \lambda \ge \mu^2 rp/\alpha^2$. Then with high probability over the randomness of $\Omega$, any local minimum $X$ of $f(\cdot)$ satisfies that $f(X) = 0$, and in particular, $ZZ^{\top} = XX^{\top}$. 
	 
	 Moreover, If $X$ satisfies that $\|\nabla f(X)\|_F \le \delta\le p\sigma^{3}(Z)/C$ and $\nabla^2 f(X)\succeq -1/C \cdot \mu^2 \kappa r^2 p^{1/2}d^{-1/2} I$, then $X$ is an approximate global minimum in the sense that $\|XX^\top - M\|_F^2\le O(\delta/p).$
\end{theorem}

The proof of this Theorem follows from a similar path as Theorem~\ref{thm:rank1_main}. We first notice that because of the regularizer, any matrix $X$ that satisfies first order \KKT condition must be somewhat incoherent (this is analogues to Lemma~\ref{lem:almost_incoherence}):

\begin{lemma}\label{lem:almost_incoherence_rankr}
	Suppose $|S_i| \le 2pd$. Then for any $X$ satisfies 1st order \KKT~\eqref{eq:kkt1-rankr}, we have 
	\begin{equation}
	\|X\|_{2\rightarrow \infty} = \max_{i}\|X_i\| \le 4\max\left\{\alpha, \mu \sqrt{rp/\lambda}\right\} \label{eqn:almost_incoherent_rankr}
	\end{equation}
\end{lemma}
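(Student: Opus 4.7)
The plan is to imitate the proof of the rank-$1$ analogue (Lemma~\ref{lem:almost_incoherence}) with the single conceptual change that, instead of comparing the $i^\star$-th scalar coordinate of the KKT equation, we compare its $i^\star$-th \emph{row} after taking an inner product with $X_{i^\star}$ itself. Let $i^\star \in \arg\max_i \|X_i\|$. If $\|X_{i^\star}\| \le 2\alpha$ then $\|X\|_{2\to\infty} \le 2\alpha$ and we are immediately done, so we may assume $\|X_{i^\star}\| > 2\alpha$ and aim to derive the bound $\|X_{i^\star}\| \le 4\mu\sqrt{rp/\lambda}$.

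Reading off the $i^\star$-th row of the first-order condition \eqref{eq:kkt1-rankr} and pairing it with $X_{i^\star}$, the identity becomes
\begin{equation}
2\sum_{j \in S_{i^\star}} \iprod{Z_{i^\star}, Z_j}\iprod{X_j, X_{i^\star}}
\;=\; 2\sum_{j \in S_{i^\star}} \iprod{X_{i^\star}, X_j}^2 \;+\; \lambda\, \iprod{(\nabla R(X))_{i^\star},\, X_{i^\star}}. \nonumber
\end{equation}
The first term on the right-hand side is manifestly non-negative, which is the analogue of $(\Po(xx^\top)x)_{i^\star} \ge 0$ in the rank-$1$ proof, and this is the critical structural feature that makes the argument go through without any spectral manipulation.

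Next I would bound the three remaining pieces. For the left-hand side, incoherence of $Z$ gives $|\iprod{Z_{i^\star}, Z_j}| \le \|Z_{i^\star}\|\|Z_j\| \le \mu^2\|Z\|_F^2/d = \mu^2 r/d$, and maximality of $i^\star$ yields $|\iprod{X_j, X_{i^\star}}| \le \|X_{i^\star}\|^2$; combining with $|S_{i^\star}| \le 2pd$ produces an upper bound of $4p\mu^2 r\, \|X_{i^\star}\|^2$. For the regularizer term, Proposition~\ref{prop:gradient} gives $(\nabla R(X))_{i^\star} = \Gamma_{i^\star i^\star} X_{i^\star}$, and the assumption $\|X_{i^\star}\| \ge 2\alpha$ implies $(\|X_{i^\star}\|-\alpha)^3 \ge \|X_{i^\star}\|^3/8$, so $\Gamma_{i^\star i^\star} \ge \tfrac12\|X_{i^\star}\|^2$ and the term is at least $\tfrac{\lambda}{2}\|X_{i^\star}\|^4$. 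Dropping the non-negative quadratic on the right and rearranging gives $\|X_{i^\star}\|^2 \le 8p\mu^2 r / \lambda$, i.e. $\|X_{i^\star}\| \le \sqrt{8}\,\mu\sqrt{rp/\lambda} < 4\mu\sqrt{rp/\lambda}$, as desired.

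There is really no serious obstacle; the only subtlety is recognizing that pairing the $i^\star$-th row with $X_{i^\star}$ (rather than with some adversarial direction) is exactly what causes the cross term $\sum_{j \in S_{i^\star}} \iprod{X_{i^\star}, X_j}^2$ to become a sum of squares and hence dispensable. Everything else — using incoherence of $Z$, the assumed row-sum bound $|S_i| \le 2pd$, and the cubic growth of the regularizer beyond the threshold $\alpha$ — is a direct translation of the rank-$1$ calculation with scalars replaced by the corresponding rows/inner products in $\R^r$, accumulating the expected extra factor of $r = \|Z\|_F^2$ from the incoherence bound on $|\iprod{Z_{i^\star},Z_j}|$.
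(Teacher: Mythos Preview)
Your proof is correct and follows essentially the same strategy as the paper's. The only tactical difference is that you pair the $i^\star$-th row of the KKT identity with $X_{i^\star}$ to reduce to a scalar inequality, whereas the paper compares $\ell_2$-norms of the rows directly: it bounds $\|(\Po(ZZ^\top)X)_{i^\star}\|$ via $\|(\Po(ZZ^\top))_{i^\star}\|_1\cdot\|X\|_{2\to\infty}$, and on the other side uses $\iprod{(\Po(XX^\top)X)_{i^\star},(\nabla R(X))_{i^\star}}\ge 0$ to conclude that the norm of the sum dominates $\|\lambda(\nabla R(X))_{i^\star}\|\ge \tfrac{\lambda}{2}\|X_{i^\star}\|^3$. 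Both arguments hinge on the same structural facts---that $(\Po(XX^\top)X)_{i^\star}$ and $(\nabla R(X))_{i^\star}$ are aligned with $X_{i^\star}$, together with incoherence of $Z$ and $|S_{i^\star}|\le 2pd$---and both arrive at $\|X_{i^\star}\|\le\sqrt{8\mu^2 rp/\lambda}$; your inner-product version is marginally cleaner in that the sum-of-squares term is visibly nonnegative without a separate alignment step.
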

\newcommand{\istar}{i^{\star}}
\begin{proof}
	Assume $i^{\star} = \argmax_i \|X_i\|$. Suppose the $i$th row of $\Omega$ consists of entries with index $[i]\times S_i$. If $\|X_{\istar}\|\le 2\alpha$, then we are done. Therefore in the rest of the proof we assume $\|X_{\istar}\|\ge 2\alpha$. 
	
	We will compare the $i$-th row of LHS and RHS of~\eqref{eq:kkt1-rankr}. For preparation, we have
	\begin{align}
\left(\Po(M)x\right)_{\istar} & = \left(\Po(ZZ^{\top})X\right)_{\istar} = \left(\Po(ZZ^{\top})\right)_{\istar} X	\end{align}
Then we have that 
\begin{align}
\Norm{\left(\Po(ZZ^{\top})\right)_{\istar}}_1 & = \sum_{j\in S_{\istar}} |\inner{Z_{\istar}, Z_j}| \nonumber\\
& \le \sum_{j\in S_{\istar}} \norm{Z_{\istar}}\norm{Z_j}\le \sum_{j\in S_{\istar}} \mu^2 r/d |S_1| \tag{by incoherence of $Z$} \\
& \le 2\mu^2rp \mper\tag{by $|S_{\istar}|\le 2pd$}
\end{align}

Therefore we can bound the $\ell_2$ norm of LHS of 1st order \KKT condition~\eqref{eq:kkt1-rankr} by 
\begin{align}
\Norm{\left(\Po(ZZ^{\top})X\right)_{\istar}} & \le \Norm{\left(\Po(ZZ^{\top})\right)_{\istar}}_1 \Norm{X^{\top}}_{1\rightarrow 2} \nonumber\\
& \le 2\mu^2rp \Norm{X}_{2\rightarrow \infty} \tag{by $\Norm{X}_{2\rightarrow \infty} =\Norm{X^{\top}}_{1\rightarrow 2}$ } \\
& = 2\mu^2rp \Norm{X_{\istar}} \label{eqn:eqn12}
\end{align}

Next we lowerbound the norm of the RHS of equation~\eqref{eq:kkt1-rankr}. 
	We have that 
	\begin{equation}
	(\Po(XX^{\top})X)_{\istar} = \sum_{j\in S_{\istar}} \inner{X_{\istar}, X_j}X_j = X_i \sum_{j\in X_{\istar}}X_j^{\top}X_j\mcom\nonumber	\end{equation}
	which implies that 
	\begin{equation}
		\inner{(\Po(XX^{\top})X)_{\istar}, X_{\istar}} = X_{\istar}\left(\sum_{j\in X_{\istar}}X_j^{\top}X_j\right)X_{\istar}^{\top}\ge 0\mper\label{eqn:eqn17}
	\end{equation}
	Using Proposition~\ref{prop:gradient} we obtain that 
		\begin{equation}
		\inner{(\Po(XX^{\top})X)_{\istar}, (\nabla R(X))_{\istar}} = \Gamma_{ii} X_{\istar}\left(\sum_{j\in X_{\istar}}X_j^{\top}X_j\right)X_{\istar}^{\top}\ge 0\mper\label{eqn:eqn21}
		\end{equation}
It follows that 
	\begin{align}
	\Norm{(\Po(XX^{\top})X)_{\istar} + (\lambda \nabla R(X))_{\istar}}&\ge \Norm{ (\lambda \nabla R(X))_{\istar}}  \tag{by equation~\eqref{eqn:eqn21}}\\
	& = \frac{4\lambda(\Norm{X_{\istar}}-\alpha)^3}{\Norm{X_{\istar}}} \cdot \|X_{\istar}\| \tag{by Proposition~\ref{prop:gradient}} \\
	& \ge \frac{\lambda}{2}\|X_{\istar}\|^3 \tag{by the assumptino $\|X_{\istar}\| \ge 2\alpha$}
	\end{align}
	Therefore plugging in equation above and equation~\eqref{eqn:eqn12} into 1st order \KKT condition~\eqref{eq:kkt1-rankr}. We obtain that $\|X_{\istar}\|\le \sqrt{8\mu^2 rp/\lambda}$ which completes the proof.
\end{proof}

Next, we prove a  property implied by first order \KKT condition, which is similar to Lemma~\ref{lem:first-order-general-basic}.

\begin{lemma}\label{lem:Xnormbound}
		In the setting of Theorem~\ref{thm:main-rank-r}, with high probability over the choice of $\Omega$, for any $X$ that satisfies 1st order \KKT condition~\eqref{eq:kkt1-rankr}, we have
	\begin{equation}
	\|X\|_F^2\le 2r\smaxz^2\mper\label{eqn:Xnormbound}
	\end{equation}
	Moreover, we have 
	\begin{equation}
	\smax(X)\le 2\smaxz r^{1/6}\mper\label{eqn:Xspectralnormbound}
	\end{equation}
	and 
	\begin{equation}
	\Norm{ZZ^TX-XX^TX- \gamma \nabla R(X)}_F\le O(\delta) \label{eqn:eqn11}
	\end{equation}
	where $\delta = O(\mu^3 \kappa^3 r^2 \log^{0.75}(d)\smaxz^{-3}(dp)^{-1/2})$ and $\gamma = \lambda/(2p)\ge 0$. 
\end{lemma}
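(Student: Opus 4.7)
The plan is to establish the approximate identity~\eqref{eqn:eqn11} first, then deduce the two norm bounds~\eqref{eqn:Xnormbound} and~\eqref{eqn:Xspectralnormbound} from it by taking Frobenius inner products against carefully chosen matrices. The starting point is Lemma~\ref{lem:almost_incoherence_rankr}: under our choice $\lambda\ge \mu^2 rp/\alpha^2$ it yields $\|X\|_{2\to\infty}\le 4\alpha = 16\mu\kappa r/\sqrt d$, and hence the crude bound $\|X\|_F\le 16\mu\kappa r$. Thus any $X$ obeying first-order KKT lies in a bounded, row-incoherent family over which uniform concentration inequalities can be applied.

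To obtain~\eqref{eqn:eqn11}, divide~\eqref{eq:kkt1-rankr} by $2p$ and write
\[
ZZ^{\top}X - XX^{\top}X - \gamma\nabla R(X) \;=\; \bigl(ZZ^{\top}-\tfrac{1}{p}P_\Omega(ZZ^{\top})\bigr)X \;-\; \bigl(XX^{\top}X - \tfrac{1}{p}P_\Omega(XX^{\top})X\bigr).
\]
The first summand is controlled by the rank-$r$ analogue of Lemma~\ref{lem:concentration-3} applied to the fixed matrix $ZZ^{\top}$, multiplied by $\|X\|_F$. The second requires a \emph{uniform} concentration statement --- the rank-$r$ analogue of Theorem~\ref{thm:concentration_2} --- valid for every $X$ in the incoherent family identified above. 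Combining these yields the stated $O(\delta)$ bound with $\delta=O(\mu^3\kappa^3 r^2\log^{0.75}(d)\,\smaxz^{-3}(dp)^{-1/2})$.

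For~\eqref{eqn:Xnormbound}, take the Frobenius inner product of~\eqref{eqn:eqn11} with $X$, obtaining
\[
\|Z^{\top}X\|_F^{2} - \|X^{\top}X\|_F^{2} - \gamma\inner{X,\nabla R(X)} \;\ge\; -O(\delta)\|X\|_F.
\]
By Proposition~\ref{prop:gradient} the term $\inner{X,\nabla R(X)} = \sum_i\Gamma_{ii}\|X_i\|^2$ is nonnegative and can be dropped. Combining with $\|Z^{\top}X\|_F^{2}\le \smaxz^{2}\|X\|_F^{2}$ and the Cauchy--Schwarz inequality $\|X^{\top}X\|_F^{2}\ge \|X\|_F^{4}/r$ (since $X^{\top}X$ is $r\times r$ PSD with trace $\|X\|_F^{2}$) reduces to a cubic-type inequality in $t=\|X\|_F$, which rules out $t^{2}>2r\smaxz^{2}$ for $\delta$ sufficiently small under the stated $p$. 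For~\eqref{eqn:Xspectralnormbound}, let $u\in\R^d,v\in\R^r$ be the top left and right singular vectors of $X$, so $Xv=\smax(X)u$. Taking the inner product of~\eqref{eqn:eqn11} with $Xvv^{\top}$ gives
\[
\smax(X)^{2}\bigl(u^{\top}ZZ^{\top}u\bigr) - \smax(X)^{4} - \gamma\,\smax(X)^{2}\bigl(u^{\top}\Gamma u\bigr) \;\ge\; -O(\delta)\,\smax(X).
\]
Dropping the nonnegative $\Gamma$-term and using $u^{\top}ZZ^{\top}u\le \smaxz^{2}$ yields $\smax(X)^{3}\le \smaxz^{2}\smax(X)+O(\delta)$. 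A contradiction argument then forces $\smax(X)\le 2\smaxz \le 2\smaxz r^{1/6}$ (if $\smax(X)\ge 2\smaxz$ then $\tfrac{3}{4}\smax(X)^{3}\le O(\delta)$, absurd for small $\delta$).

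The main obstacle is the \emph{uniform} concentration bound on $\tfrac{1}{p}P_\Omega(XX^{\top})X$ around $XX^{\top}X$, valid simultaneously for all row-incoherent $X$ with $\|X\|_F\lesssim \mu\kappa r$. This is the rank-$r$ generalization of Theorem~\ref{thm:concentration_2} and needs an $\epsilon$-net argument over $d\times r$ matrices plus careful tracking of the $\mu,\kappa,r,\log d$ dependencies to match the stated $\delta$. Once~\eqref{eqn:eqn11} is in hand, the two norm bounds follow cleanly from KKT, Proposition~\ref{prop:gradient}, and elementary linear algebra.
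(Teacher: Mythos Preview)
Your proposal is correct and close in spirit to the paper's argument for establishing~\eqref{eqn:eqn11}, but you extract the two norm bounds by a genuinely different route. The paper does not test~\eqref{eqn:eqn11} against $X$ and $Xvv^{\top}$ separately. Instead, after the concentration step it argues directly at the level of Frobenius norms that $\|ZZ^{\top}X\|_F \ge \|XX^{\top}X\|_F - O(\delta)$ (using the first-order identity together with $\langle (P_\Omega(XX^{\top})X)_i,(\nabla R(X))_i\rangle\ge 0$ row-wise to drop the regularizer), and then applies the elementary singular-value inequality of Proposition~\ref{prop:2-6norm}: from $(1+O(\delta))\,\smaxz^{4}\sum_i\sigma_i^{2}\ge \sum_i\sigma_i^{6}$ one reads off both $\sum_i\sigma_i^{2}\le 2r\smaxz^{2}$ and $\max_i\sigma_i\le 2\smaxz r^{1/6}$ in one stroke. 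Your inner-product tests against $X$ and $Xvv^{\top}$ are more direct and avoid that auxiliary proposition; in fact your spectral argument yields the sharper $\smax(X)\le 2\smaxz$ without the $r^{1/6}$ factor. On the concentration side, the paper invokes Theorem~\ref{thm:concentration_2} for both $P_\Omega(ZZ^{\top})X$ and $P_\Omega(XX^{\top})X$, and handles the relative-incoherence hypothesis $\|X_i\|\le \nu d^{-1/2}\|X\|_F$ via a case split (if $\|X\|_F\le \sqrt{r}\,\smaxz$ the Frobenius bound is trivial; otherwise $\nu=O(\mu\kappa\sqrt{r}/\smaxz)$ suffices). Your route through the crude a-priori bound $\|X\|_F\le O(\mu\kappa r)$ reaches the same conclusion, since the absolute error in Theorem~\ref{thm:concentration_2} ultimately depends only on $\|X\|_{2\to\infty}$.
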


\begin{proof}
	If $\|X\|_F \le \sqrt{r\smaxz^2}$ we are done. When $\|X\|_F \ge \sqrt{r\smaxz^2}$, by Lemma~\ref{lem:almost_incoherence_rankr}, we have that $\max\|X_i\|\le 4\alpha = O(\mu \kappa r/\sqrt{d})$, and therefore $\max \|X_i\| \le \nu \|X\|_F$ with $\nu = O(\mu\kappa \sqrt{r}/\smaxz)$. Then by Theorem~\ref{thm:concentration_2},  we have that 
	\begin{equation}
		\Norm{\Po(ZZ^{\top})X-pZZ^{\top}X}_F \le p\delta\mcom \nonumber
	\end{equation}
	and 
		\begin{equation}
		\Norm{\Po(XX^{\top})X -pXX^{\top}X}_F \le p\delta\mcom \nonumber
		\end{equation}
	where  $\delta = O(\mu^3 \kappa^3 r^2 \log^{0.75}(d)\smaxz^{-3}(dp)^{-1/2})$. These two imply equation~\eqref{eqn:eqn11}. Moreover,  we have 
	\begin{align}
	p\Norm{ZZ^{\top}X}_F= \Norm{\Po(ZZ^{\top})X}_F \pm p\delta & = \Norm{\Po(XX^{\top})X + \lambda R(X)}_F \pm p\delta \tag{by equation~\eqref{eq:kkt1-rankr}} \\
	& \ge \Norm{\Po(XX^{\top})X}_F	\pm p\delta \tag{by equation~\eqref{eqn:eqn21}}\\
		& \ge p\Norm{XX^{\top}X}_F	\pm 2p\delta \label{eqn:eqn22}
	\end{align}
	Suppose $X$ has singular value $\sigma_1\ge \dots \ge \sigma_r$. Then we have $\Norm{ZZ^{\top}X}_F^2\le \|ZZ^{\top}\|^2\|X\|_F^2\le \smaxz^4 \|X\|_F^2 = \smaxz^4 (\sigma_1^2+\dots+\sigma_r^2)$. On the other hand, $\Norm{XX^{\top}X}_F^2= \sigma_1^6+\dots+\sigma_r^6$. Therefore, equation~\eqref{eqn:eqn22} implies that 
	
	\begin{equation}
	(1+O(\delta))\smaxz^4 \sum_{i=1}^{r}\sigma_i^2 \ge \sum_{i=1}^r\sigma_i^6 \nonumber
	\end{equation}
		Then we have (by Proposition~\ref{prop:2-6norm}) we complete the proof.

	\end{proof}

Now we look at the second order \KKT condition, this condition implies the smallest singular value of $X$ is large (similar to Lemma~\ref{lem:general_x_2norm}). Note that this lemma is also true even if $x$ only satisfies relaxed second order \KKT condition with $\tau = 0.01p\sminz$.
\begin{lemma}\label{lem:Xnormlowerbound}
		In the setting of Theorem~\ref{thm:main-rank-r}. With high probability over the choice of $\Omega$, 
	suppose $X$ satisfies equation~\eqref{eqn:Xnormbound},~\eqref{eqn:almost_incoherent_rankr} the 2nd order \KKT condition~\eqref{eq:kkt2-rankr}. Then, 
	\begin{equation}
	\smin(X) \ge \frac{1}{4}\smin(Z)\label{eqn:Xnormlowerbound}
	\end{equation}
\end{lemma}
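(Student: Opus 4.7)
The plan is to mirror the argument of Lemma~\ref{lem:warmup_second-orderkkt} (the rank-1 bound $\|x\|^2 \ge 1/4$) and its regularized extension Lemma~\ref{lem:general_x_2norm}, by plugging an appropriately-chosen rank-1 test matrix into the second-order condition~\eqref{eq:kkt2-rankr}. Specifically, I take $V = y q^{\top} \in \R^{d\times r}$ with $y \in \R^d$ and $q \in \R^r$ unit vectors. Then $VV^{\top} = yy^{\top}$ and $VX^{\top} + XV^{\top} = y(Xq)^{\top} + (Xq) y^{\top}$, a structure that exactly mirrors $zx^{\top} + xz^{\top}$ from the rank-1 analysis under the substitutions $y \leftrightarrow z$ and $Xq \leftrightarrow x$. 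I take $q$ to be a right singular vector of $X$ corresponding to $\smin(X)$, so that $\|Xq\| = \smin(X)$.

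To invoke the sampling concentration inequalities, which require a form of incoherence of the test direction, I restrict $y$ to the column span of $Z$: any unit $y = Zc/\|Zc\|$ satisfies $\|y\|_\infty \le \mu\sqrt{r/d}/\sminz$ by Assumption~\ref{assump:incoherence}. Applying the rank-$r$ analogue of the concentration used in Lemma~\ref{lem:warmup_second-orderkkt}, the LHS of~\eqref{eq:kkt2-rankr} becomes $2p(\smin(X)^2 + \langle y, Xq\rangle^2) \pm O(p\epsilon)$ and the RHS becomes $2p(y^{\top} ZZ^{\top} y - y^{\top} XX^{\top} y) \pm O(p\epsilon)$. The regularizer term $\lambda V^{\top}\nabla^2 R(X)V$ is handled as in Lemma~\ref{lem:general_x_2norm}: by Proposition~\ref{prop:gradient} we have $\nabla R(X)=\Gamma X$ with $\Gamma \succeq 0$ diagonal, and combined with the near-incoherence of $X$ from Lemma~\ref{lem:almost_incoherence_rankr} its Hessian contribution in the chosen direction is $O(p\epsilon)$ and absorbed. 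Using $\langle y, Xq\rangle^2 \le \smin(X)^2$, this yields
\[ 4\smin(X)^2 \ge y^{\top}(ZZ^{\top} - XX^{\top}) y - O(\epsilon). \]

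To make the RHS a constant fraction of $\sminz^2$, I pick $y$ as a unit vector in $\colspan(Z) \cap \linspan(u_1, \ldots, u_{r-1})^{\perp}$, where $u_1, \ldots, u_{r-1}$ are the top $r-1$ left singular vectors of $X$. Such $y$ exists by a dimension count ($r+(d-r+1)-d=1$). Expanding $X^{\top} y$ in the singular basis of $X$ gives $\|X^{\top} y\|^2 \le \smin(X)^2$, while writing $y = Zc$ yields $\|Z^{\top} y\|^2 = \|Z^{\top}Zc\|^2 \ge \sminz^4 \|c\|^2 \ge \sminz^2/\kappa^2$ (using $1 = \|y\|^2 \le \smaxz^2 \|c\|^2$). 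Substituting, $5\smin(X)^2 \ge \sminz^2/\kappa^2 - O(\epsilon)$, and since $\kappa$ is absorbed into the polynomial sample-complexity assumption of Theorem~\ref{thm:main-rank-r}, this gives $\smin(X) \ge \sminz/4$ once $\epsilon$ is sufficiently small.

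The main obstacles are (a) establishing uniform concentration of the $P_\Omega$-dependent terms over all $X$ satisfying~\eqref{eqn:Xnormbound} and~\eqref{eqn:almost_incoherent_rankr} and over all test directions $y \in \colspan(Z)$, paralleling the concentration arguments of Section~\ref{subsec:incoherent} but now requiring uniform bounds for matrices in $\R^{d\times r}$; and (b) closing the factor-$\kappa^2$ gap in the Weyl-type argument above. If the stated bound $\smin(X)\ge \sminz/4$ is intended without any $\kappa$ dependence hidden in the error, one must refine the choice of $y$---for instance, by combining the second-order condition with the first-order identity $\|ZZ^{\top}X - XX^{\top}X - \gamma\nabla R(X)\|_F \le O(\delta)$ from Lemma~\ref{lem:Xnormbound} to show that $\colspan(X)$ is already approximately aligned with $\colspan(Z)$.
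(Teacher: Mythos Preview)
Your overall plan---plug a rank-one $V = yq^\top$ into~\eqref{eq:kkt2-rankr} with $q$ the bottom right singular vector of $X$ and $y$ a suitable unit vector tied to the column space of $Z$, then apply concentration---is exactly the paper's route. The genuine gap is your handling of the regularizer term $\lambda\inner{V,\nabla^2 R(X)V}$. Proposition~\ref{prop:gradient} concerns only $\nabla R$, not $\nabla^2 R$, and the incoherence bound~\eqref{eqn:almost_incoherent_rankr} does not make the Hessian contribution small: on a row $i$ with $\|X_i\|>\alpha$ the second derivative of $r(t)=(t-\alpha)^4$ is $12(t-\alpha)^2$, so that row contributes on the order of $\lambda\alpha^2 y_i^2\ge \mu^2 r p\,y_i^2$ (using $\lambda\ge\mu^2 rp/\alpha^2$). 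For a generic unit $y\in\colspan(Z)$ there is no reason $\sum_{i:\|X_i\|>\alpha} y_i^2$ should be $o(1)$, and $\mu^2 r$ is not small compared to $\sminz^2$, so this term cannot be absorbed into $O(p\epsilon)$. Note that Lemma~\ref{lem:general_x_2norm}, which you cite as the template, does \emph{not} bound the regularizer Hessian as small either---its proof in Appendix~\ref{sec:rank1proof} plugs in $v=z_J$ supported on $J=\{i:|x_i|\le\alpha\}$ precisely so that the Hessian term \emph{vanishes}.

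The paper does the same thing here. Set $J=\{i:\|X_i\|\le\alpha\}$, let $Z_J$ be $Z$ with rows outside $J$ zeroed out, and take the test vector to be a unit $z_J\in\colspan(Z_J)$ satisfying $\|X^\top z_J\|\le\smin(X)$ (which exists by Courant--Fischer, since $\dim\colspan(Z_J)\le r$). Because $z_J$ is supported on $J$, the term $\lambda\inner{V,\nabla^2 R(X)V}$ is identically zero for $V=z_J v^\top$. The price is that one must verify $Z_J$ is not too degenerate: from~\eqref{eqn:Xnormbound} one gets $|J^c|\le 2r\smaxz^2/\alpha^2$, and the choice $\alpha\ge r\kappa\mu/\sqrt d$ then yields $\smin(Z_J)\ge\tfrac12\sminz$. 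This also gives $\|z_J\|_\infty=O(\mu\kappa\sqrt{r/d})$, so concentration applies cleanly, and since $z_J$ is a unit vector in $\colspan(Z_J)$ one has $\|Z^\top z_J\|=\|Z_J^\top z_J\|\ge\smin(Z_J)$ directly. The conclusion is $3\smin(X)^2\ge\smin(Z_J)^2-O(\delta)\ge\tfrac14\sminz^2-O(\delta)$ with no $\kappa$ loss on the right.

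Incidentally, your $\kappa^2$ loss is an artifact of a loose inequality: for any unit $y\in\colspan(Z)$, writing the thin SVD $Z=U\Sigma V^\top$ and $y=U\alpha$ with $\|\alpha\|=1$ gives $\|Z^\top y\|=\|\Sigma\alpha\|\ge\sminz$, not merely $\sminz/\kappa$. But fixing this alone does not rescue the argument without the support-restriction trick above.
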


\begin{proof}
	Let $J = \{i: \|X_i\|\le \alpha\}$. Let $v\in \R^r$ such that $\|Xv\| = \sigma_{\min}(X)$. .  Let $Z_J$ be the matrix that has the same $i$-th row as $Z$ for every $i\in J$ and 0 elsewhere. 
	
	We claim that $\smin(Z_J) \ge \frac{1}{2}\sminz$. Let $L = [d]-J$. Since for any $i\in L$ it holds that $\|X_i\|\ge \alpha$, we have $|L|\alpha^2 \le \|X\|_F^2 \le 2r\smaxz^2$ (by equation~\eqref{eqn:Xnormbound}), and it follows that $|L|\le 2r\smaxz^2/\alpha^2$. Therefore, 
	\begin{align}
	\smin(Z_J) &\ge \smin(Z) - \smax(Z_{L})  \ge \smin(Z) -  \|Z_L\|_F\nonumber\\
	& \ge \smin(Z) - \sqrt{|L|r\mu^2/d} \ge \sminz - \sqrt{2r^2\smaxz^2\mu^2/(\alpha^2d)} \nonumber\\
	& \ge \frac{1}{2}\sminz \tag{by $\alpha\ge \frac{r\kappa\mu }{\sqrt{d}}$}\mper
	\end{align}
	Therefore, $Z_J$ has column rank exactly $r$. By variational characterization of singular values, we have that for there exists unit vector $z_J \in \colspan(Z_J)$ such that $\|X^{\top}z_J\|\le \sigma_{\min}(X)$. 
	Since $z_J\in \colspan(Z_J)$ is a unit vector, we have that $z_J$ can be written as $z_J = Z_J\beta$ where $\|\beta\|\le \frac{1}{\smin(Z_J)}\le O(1/\sminz).$ Therefore this in turn implies that $\|z_J\|_{\infty}\le \|Z_J\|_{2\rightarrow \infty}\|\beta\| \le O(\mu \sqrt{r/d}/\sminz) \le O(\mu \kappa\sqrt{r/d})$. 
	
	We will plug in $V = z_Jv^T$ in the 2nd order \KKT condition~\eqref{eq:kkt2-rankr}. 
	Note that since $z_J\in \colspan(Z_J)$, it is supported on subset $J$, and therefore $\nabla^2 R(X) V  = 0$. Therefore the term about regularization in~\eqref{eq:kkt2-rankr} will vanish.  For simplicity, let $y = X^{\top}z_J$, $w = Xv$
	We obtain that taking $V = z_Jv^{\top}$ in equation~\eqref{eq:kkt2-rankr} will result in		\begin{equation}
	\Norm{\Po(wz_J^{\top}+z_Jw^{\top})}_F^2 \ge 2 \inner{\Po(ZZ^{\top}-XX^{\top}), z_Jz_J^{\top}} \nonumber	\end{equation} 	Note that we have that $\|w\|_{\infty} \le \|X\|_{2\rightarrow \infty} \|v\|\le \mu \sqrt{r/d}$.  Recalling that $\|z_J\|_{\infty}\le O(\mu \kappa\sqrt{r/d})$, by Theorem~\ref{thm:rank1_incohenrence_concentration}, we have that 
	\begin{equation}
	p\Norm{wz_J^{\top}+z_Jw^{\top}}_F^2 \ge 2p\inner{ZZ^{\top}-XX^{\top}, z_Jz_J^{\top}} - \delta p \nonumber
	\end{equation}
	where $\delta = O(\mu^2 \kappa r^2 (pd)^{-1/2})$. Then simple algebraic manipulation gives that 
	\begin{equation}
	\inner{w,z_J}^2 + \|w\|^2 \|z_J\|^2 + \|X^{\top}z_J\|^2 \ge \|Z^{\top}z_J\|^2 - \delta/2\label{eqn:eqn19}
		\end{equation}
	Note that $\inner{w,z_J} =  \inner{v, X^{\top}z_J} = \inner{y,v}$. Recall that $\|z_J\|= 1$ and $z\in \colspan(Z_J)$, and therefore $\|Z^{\top}z_J\| = \|Z_J^{\top}z_J\|\ge \smin^2(Z_J)$. Moreover, recall that $\|y\| = \|X^{\top}z_J\|\le \smin(X)$. Using these with equation~\eqref{eqn:eqn19} we obtain that 
	\begin{align}
	\inner{w,z_J}^2 + \|w\|^2 \|z_J\|^2 + \|X^{\top}z_J\|^2 & \le \inner{y,v}^2 + \|w\|^2 + \|y\|^2\nonumber\\
	& \le 2\|y\|^2 + \smin^2 (X) \tag{by Cauchy-Schwarz and $\|w\| = \smin(X)$. }\\
	& \le 3\smin^2 (X) \tag{by $\|y\| \le \smin(X).$ }
	\end{align}
	Therefore together with equation~\eqref{eqn:eqn19} and $\|Z^{\top}z_J\| \ge \smin^2(Z_J)$ we obtain that \begin{equation}
	\smin(X) \ge (1/2-\Omega(\delta))\smin(Z_J) \label{eqn:eqn20} 
	\end{equation}
		Therefore combining equation~\eqref{eqn:eqn20}  and the lower bound on $\smin(Z_J)$ we complete the proof. 
\end{proof}

Similar as before, we show it is possible to remove the regularizer term here, again the intuition is that the regularizer is always in the same direction as $X$.

\begin{lemma}\label{prop:reduction}
	Suppose $X$ satisfies equation~\eqref{eqn:almost_incoherent_rankr} and~\eqref{eqn:Xnormlowerbound} and~\eqref{eqn:Xspectralnormbound}, then for any $\gamma \ge 0$, 
	\begin{equation}
	\Norm{ZZ^TX-XX^TX}_F^2 \le \Norm{ZZ^TX-XX^TX- \gamma \nabla R(X)}_F^2
	\label{eq:gradientformrankr}
	\end{equation}
\end{lemma}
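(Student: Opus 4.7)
The plan is to expand the right-hand side as a quadratic in $\gamma$. Writing $A = ZZ^TX - XX^TX$ and using Proposition~\ref{prop:gradient} to express $\nabla R(X) = \Gamma X$ with $\Gamma$ a diagonal PSD matrix satisfying $\Gamma_{ii}>0$ only when $\Norm{X_i}\ge \alpha$, we have
\begin{equation*}
\Norm{A-\gamma\nabla R(X)}_F^2 - \Norm{A}_F^2 = -2\gamma\inner{A,\Gamma X} + \gamma^2\Norm{\Gamma X}_F^2\mper
\end{equation*}
Since $\gamma\ge 0$ and the quadratic coefficient is non-negative, the inequality holds for every $\gamma\ge 0$ as soon as we establish the key claim $\inner{A,\Gamma X}\le 0$.

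To prove this claim, I would use the cyclic property of the trace to rewrite $\inner{A,\Gamma X} = \trace(\Gamma MN) - \trace(\Gamma M^2)$, where $M = XX^T$ and $N = ZZ^T$ are both symmetric PSD. Since $\Gamma$ is diagonal, this unfolds row-by-row as $\trace(\Gamma MN) = \sum_i \Gamma_{ii}\inner{M_i,N_i}$ and $\trace(\Gamma M^2) = \sum_i \Gamma_{ii}\Norm{M_i}^2$, where $M_i, N_i$ denote $i$-th rows. Applying the ordinary Cauchy--Schwarz inequality $\inner{M_i, N_i}\le \Norm{M_i}\Norm{N_i}$ entrywise reduces the claim to the per-row comparison $\Norm{M_i}\ge \Norm{N_i}$ for every index $i$ with $\Gamma_{ii}>0$.

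For such an $i$, the hypotheses give $\Norm{X_i}\ge \alpha$ by~\eqref{eqn:almost_incoherent_rankr}, $\smin(X)\ge \smin(Z)/4$ by~\eqref{eqn:Xnormlowerbound}, and $\Norm{Z_i}\le \mu\sqrt{r/d}$ from the incoherence assumption on $Z$. Combining these with the choice $\alpha = 4\mu\kappa r/\sqrt{d}$ yields
\begin{equation*}
\Norm{M_i}\ge \smin(X)\Norm{X_i}\ge \tfrac{1}{4}\smin(Z)\cdot 4\mu\kappa r/\sqrt{d} = \smax(Z)\mu r/\sqrt{d},
\end{equation*}
while $\Norm{N_i}\le \smax(Z)\Norm{Z_i}\le \smax(Z)\mu\sqrt{r/d}$. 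Since $r\ge 1$, we get $\Norm{M_i}\ge \Norm{N_i}$, which completes the argument.

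The conceptual heart is Proposition~\ref{prop:gradient}: because $\nabla R(X)$ is row-wise parallel to $X$ with non-negative scaling, and this scaling activates exactly on rows with $\Norm{X_i}\ge \alpha$, the otherwise opaque inner product $\inner{A,\nabla R(X)}$ becomes a diagonally weighted comparison of the row norms of $XX^T$ versus $ZZ^T$. The mildly subtle step is calibrating the per-row comparison quantitatively — the value $\alpha = 4\mu\kappa r/\sqrt{d}$ is chosen precisely to absorb both the spectral-gap factor $\smin(X)/\smax(Z) = 1/(4\kappa)$ and the incoherence bound on $\Norm{Z_i}$. Notably, this proof does not seem to require the spectral-norm bound~\eqref{eqn:Xspectralnormbound}; that hypothesis presumably appears in the statement only because the lemma is invoked together with later arguments that do rely on it.
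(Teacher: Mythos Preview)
Your proof is correct and follows the same overall skeleton as the paper: both reduce to showing the row-wise sign condition $\Gamma_{ii}\bigl(X_iX^\top X X_i^\top - X_iX^\top Z Z_i^\top\bigr)\ge 0$ for each $i$ with $\Norm{X_i}\ge\alpha$. The difference lies in how the cross term is bounded. The paper estimates $\inner{X_i, Z_iZ^\top X}\le \Norm{X_i}\Norm{Z_i}\Norm{Z^\top X}\le \Norm{X_i}\Norm{Z_i}\smaxz\smax(X)$, which is why it invokes the spectral bound~\eqref{eqn:Xspectralnormbound}. You instead pass to the $d$-dimensional rows $M_i=(XX^\top)_i$ and $N_i=(ZZ^\top)_i$ and apply Cauchy--Schwarz there, obtaining the sharper estimate $\inner{M_i,N_i}\le\Norm{M_i}\Norm{N_i}$; since $\Norm{N_i}\le\smaxz\Norm{Z_i}$ this never involves $\smax(X)$ at all. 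Your observation that~\eqref{eqn:Xspectralnormbound} is superfluous for this lemma is therefore correct, and your route gives a cleaner quantitative margin ($\Norm{M_i}/\Norm{N_i}\ge\sqrt{r}$) than the paper's, which has to absorb the extra $r^{1/6}$ factor from~\eqref{eqn:Xspectralnormbound}.
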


\begin{proof}
	Let $L = \{i: \|X_i\| \ge \alpha\}$. For $i\not\in L$, we have that $(\nabla R(X))_i = 0$. Therefore it suffices to prove that for every $i\in L$, 
	\begin{equation}
	\Norm{Z_iZ^{\top}X-X_iX^{\top}X}^2 \le \Norm{Z_iZ^{\top}X-X_iX^{\top}X- (\gamma \nabla R(X))_i}^2\nonumber
	\end{equation}
		It suffices to prove that 
	\begin{equation}
		\inner{(\nabla R(X))_i, X_iX^{\top}X-Z_iZ^{\top}X} \ge 0\label{eqn:eqn24}
	\end{equation}
	By proposition~\ref{prop:gradient}, we have $\nabla R(X))_i = \Gamma_{ii} X_i$ for $\Gamma_{ii} \ge 0$. Then 
		\begin{align}
		\inner{(\nabla R(X))_i, X_iX^{\top}X} & = \Gamma_{ii} \inner{X_i, X_iX^{\top}X} = \Gamma_{ii} X_iX^{\top}XX_i^{\top}\nonumber\\
		& \ge \Gamma_{ii} \Norm{X_i}^2\smin(X^TX) \nonumber\\
		& \ge \frac{1}{16}\Gamma_{ii}\Norm{X_i}^2 \sminz^2 \tag{by equation~\ref{eqn:Xnormlowerbound}}
		\end{align}
	On the other hand, we have 
	\begin{align}
	\inner{(\nabla R(X))_i, Z_iZ^{\top}X} &= \Gamma_{ii}\inner{X_i,Z_iZ^{\top}X} \nonumber\\
	& \le \Gamma_{ii} \|X_i\| \|Z_i\|\smax(Z^TX) \le \Gamma_{ii} \|X_i\| \|Z_i\|\smaxz\smax(X) \nonumber\\
	& 	\le 2\Gamma_{ii} \|X_i\| \|Z_i\|\smaxz^2 r^{1/6}\tag{by equation~\eqref{eqn:Xspectralnormbound}}\\
	& \le \frac{1}{16}\Gamma_{ii} \|X_i\|^2\sminz^2 r^{-1/3}\tag{because $\|X_i\|\ge \alpha \ge 32 \kappa\mu r/\sqrt{d} \ge 32\sqrt{r} \norm{Z_i}$}
	\end{align}

	Therefore combining two equations above we obtain equation~\eqref{eqn:eqn24} which completes the proof.  
\end{proof}

Finally we show the form in Equation \eqref{eq:gradientformrankr} implies $ZZ^T$ is close to $XX^T$ (this is similar to Lemma~\ref{lem:warmup-close}).

\begin{lemma}\label{prop:1}
	Suppose $X$ and $Z$ satisfies that $\smin(X) \ge 1/4 \cdot \smin(Z)$ and that 
	\begin{equation}
	\Norm{ZZ^TX-XX^TX}_F^2 \le \delta^2\nonumber
	\end{equation}
	where $\delta \le \sigma_{min}^3(Z)/C$ for a large enough constant $C$, then 
	\begin{equation}
	\|XX^{\top} -ZZ^{\top}\|_F^2 \le O(\delta \kappa^2/\sigma_{min}(Z)).\nonumber
	\end{equation}
\end{lemma}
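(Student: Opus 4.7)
\textbf{Proof plan for Proposition \ref{prop:1}.}

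The plan is to write $A=ZZ^\top-XX^\top$, decompose it according to the column span of $X$, and bound each piece separately. Let $P$ denote the orthogonal projection onto $\colspan(X)$ and $P^\perp=I-P$, so that $\|A\|_F^2=\|AP\|_F^2+\|AP^\perp\|_F^2$. The easy piece is $\|AP\|_F$: writing $X=U\Sigma V^\top$ with $U$ orthonormal and $\smin(\Sigma)=\smin(X)\ge \smin(Z)/4$, the hypothesis gives
\[
\delta^2\ge \|AX\|_F^2 = \Tr(\Sigma^2 U^\top A^2 U)\ge \smin(X)^2\|AU\|_F^2 = \smin(X)^2\|AP\|_F^2,
\]
so $\|AP\|_F\le 4\delta/\smin(Z)$.

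The harder piece is $\|AP^\perp\|_F$. Since $X^\top P^\perp=0$, one has $AP^\perp = ZZ^\top P^\perp$, and by symmetry also $AP^\perp=(P^\perp Z)Z^\top$ (up to transpose), so $\|AP^\perp\|_F\le \smax(Z)\,\|P^\perp Z\|_F$. The job is therefore to bound $\|P^\perp Z\|_F$. The idea is to apply $P^\perp$ to both sides of $ZZ^\top X-XX^\top X=E$ (where $\|E\|_F\le\delta$), which gives $(P^\perp Z)(Z^\top X)=P^\perp E$ since $P^\perp X=0$. Using $\|MN\|_F\ge \smin(N)\|M\|_F$ when $N$ is square, this yields
\[
\|P^\perp Z\|_F \le \frac{\delta}{\smin(Z^\top X)}.
\]

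The main obstacle is the lower bound on $\smin(Z^\top X)$; there is no a~priori reason $Z$ and $X$ cannot be nearly orthogonal, so this bound must itself be extracted from the hypothesis. I would get it as follows: multiply $ZZ^\top X=XX^\top X+E$ on the left by $X^\top$ to obtain $X^\top ZZ^\top X=(X^\top X)^2+X^\top E$, so
\[
\smin(X^\top Z)^2 = \smin(X^\top ZZ^\top X)\ge \smin(X)^4 - \|X\|\,\|E\|_F\ge \smin(X)^4 - \smax(X)\,\delta.
\]
To make this useful I first need $\smax(X)=O(\smax(Z))$, which I get by taking operator norms in $XX^\top X=ZZ^\top X-E$: $\smax(X)^3\le \smax(Z)^2\smax(X)+\delta$, and since $\smax(X)\ge \smin(X)\ge \smin(Z)/4$ the hypothesis $\delta\le \smin(Z)^3/C$ forces $\smax(X)\le 2\smax(Z)$ once $C$ is large enough (in terms of $\kappa$). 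Plugging back, $\smax(X)\delta\le 2\kappa\smin(Z)\cdot\delta\le \tfrac12\smin(X)^4$ for $C$ large, so $\smin(Z^\top X)\ge \smin(Z)^2/O(1)$.

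Putting everything together: $\|P^\perp Z\|_F\le O(\delta/\smin(Z)^2)$, hence $\|AP^\perp\|_F\le O(\kappa\delta/\smin(Z))$, and combined with the bound on $\|AP\|_F$ we obtain
\[
\|A\|_F^2 \le O\!\left(\frac{\kappa^2\delta^2}{\smin(Z)^2}\right),
\]
which (since $\delta\le \smin(Z)^3/C\le \smin(Z)$ in the regime of interest) implies the stated $\|XX^\top-ZZ^\top\|_F^2\le O(\delta\kappa^2/\smin(Z))$. The step I expect to be most delicate is the chain ``control $\smax(X)$ $\Rightarrow$ control $\smin(Z^\top X)$ $\Rightarrow$ control $\|P^\perp Z\|_F$,'' because it is the only place where the scalar inequality $\delta\le \smin(Z)^3/C$ is really used, and it must be carried out carefully to avoid a vacuous bound.
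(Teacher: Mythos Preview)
Your argument is correct and follows essentially the same blueprint as the paper's proof: decompose with respect to $\colspan(X)$, bound the ``in-span'' part directly from $\|AX\|_F\le\delta$ and $\smin(X)\ge\smin(Z)/4$, and bound the ``out-of-span'' part via $(P^\perp Z)(Z^\top X)=P^\perp E$ once $\smin(Z^\top X)$ is controlled. The paper writes this as $Z=U+V$ with $U=PZ$, $V=P^\perp Z$ and expands $ZZ^\top-XX^\top=(UU^\top-XX^\top)+UV^\top+VU^\top+VV^\top$, but this is the same decomposition you are using, just organized rowwise instead of columnwise.

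The one noteworthy difference is how $\smin(Z^\top X)$ is obtained. The paper uses the in-span piece it has already isolated: from $\|UU^\top X-XX^\top X\|_F\le\delta$ it gets $\smin(UU^\top X)\ge\smin(X)^3-\delta\ge\smin(Z)^3/128$ directly, and then $\smin(U^\top X)\ge\smin(UU^\top X)/\|U\|\ge\smin(Z)^3/(128\|Z\|)$. This route never needs $\smax(X)$ and works with an absolute constant $C$. Your route---multiply by $X^\top$ and apply Weyl to $(X^\top X)^2+X^\top E$---is also valid, but the error term $\|X^\top E\|\le\smax(X)\delta$ forces you to first bound $\smax(X)$ and then take $C\gtrsim\kappa$ to absorb it. Since the paper treats $\kappa$ as a constant this is harmless for the stated lemma, but the paper's route is slightly cleaner on this point. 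Either way, the final bound you reach, $\|A\|_F^2\le O(\kappa^2\delta^2/\smin(Z)^2)$, is at least as strong as what is claimed.
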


\begin{proof}
The proof is similar to the one-dimensional case, we will separate $Z$ into the directions that are in column span of $X$ and its orthogonal subspace. We will then show the projection of $Z$ in the column span is close to $X$, and the projection on the orthogonal subspace must be small.

Let $Z = U+V$ where $U = \mbox{Proj}_{span(X)} Z$ is the projection of $Z$ to the column span of $X$, and $V$ is the projection to the orthogonal subspace. Then since $V^TX = 0$ we know
$$
ZZ^TX = (U+V)(U+V)^T X = UU^T X + VU^TX.
$$
Here columns of the first term $UU^TX$ are in the column span of $X$, and the columns second term $VU^TX$ are in the orthogonal subspace. Therefore,
$$
\|ZZ^TX-XX^TX\|_F^2 = \| UU^T X - XX^TX\|_F^2 + \|VU^TX\|_F^2 \le \delta^2.
$$

In particular, both terms should be bounded by $\delta^2$. Therefore $\|UU^T - XX^T\|_F^2 \le \delta^2/\sigma_{min}^2(X) \le 16\delta^2/\sigma_{min}^2(Z)$. 

Also, we know $\sigma_{min}(UU^TX) \ge \sigma_{min}(XX^TX) - \delta \ge \sigma_{min}(Z)^3/128$ if $\delta \le \sigma_{min}(Z)^3/128$. Therefore $\sigma_{min}(U^TX)$ is at least $\sigma_{min}(Z)^3/\|Z\|128$. Now $\|V\|_F^2 \le \delta^2/\sigma_{min}(U^TX)^2 \le O(\delta^2\|Z\|^2/\sigma_{min}(Z)^6)$.

Finally, we can bound $\|UV^T\|_F$ by $\|U\|\|V\|_F \le \|Z\| \|V\|_F$ (last inequality is because $U$ is a projection of $Z$), which at least $\Omega(\|V\|_F^2)$ when $\delta \le \sigma_{min}(Z)^3/128$, therefore
$$
\|ZZ^T-XX^T\|_F \le \|UU^T-XX^T\|_F + 2\|UV^T\|_F + \|VV^T\|_F \le O(\delta\|Z\|^2/\sigma_{min}(Z)^3).
$$

\end{proof}

Last thing we need to prove the main theorem is a result from Sun and Luo\cite{sun2015guaranteed}, which shows whenever $XX^T$ is close to $ZZ^T$, the function is essentially strongly convex, and the only points that have $0$ gradient are points where $XX^T = ZZ^T$, this is explained in Lemma~\ref{lem:exact}. Now we are ready to prove Theorem~\ref{thm:main-rank-r}:

\begin{proof}[Proof of Theorem~\ref{thm:main-rank-r}]
	Suppose $X$ satisfies 1st and 2nd order \KKT condition. Then by Lemma~\ref{lem:Xnormbound} and Lemma~\ref{lem:almost_incoherence_rankr}, we have that $X$ satisfies equation~\eqref{eqn:almost_incoherent_rankr}, ~\eqref{eqn:Xnormbound},~\eqref{eqn:Xspectralnormbound} and~\eqref{eqn:eqn11}. Then by Lemma~\ref{lem:Xnormlowerbound}, we obtain that $\smin(X)\ge 1/6\cdot \sminz$. Now by Lemma~\ref{prop:reduction} and equation~\eqref{eqn:eqn11}, we have that $	\Norm{ZZ^TX-XX^TX}_F \le \delta 	$ for $\delta \le c \sminz^3/\kappa^2 $ for sufficiently small constant $c$.  Then by Lemma~\ref{prop:1} we obtain that $\|ZZ^{\top}-XX^{\top}\|_F \le c\sminz^2$ for sufficiently small constant $c$. By Lemma~\ref{lem:exact}, in this region the only points that satisfy the first order \KKT condition must satisfy $XX^T = ZZ^T$.	
	\end{proof}

\paragraph{Handling Noise}

To handle noise, notice that we can only hope to get an approximate solution in presence of noise, and to get that our Lemmas only depend on concentration bounds which still apply in the noisy setting. See Section~\ref{sec:noise} for details.

\section{Conclusions}

Although the matrix completion objective is non-convex, we showed the objective function has very nice properties that ensures the local minima are also global. This property gives guarantees for many basic optimization algorithms. An important open problem is the robustness of this property under different model assumptions: Can we extend the result to handle asymmetric matrix completion? Is it possible to add weights to different entries (similar to the settings studied in \cite{li2016recovery})? Can we replace the objective function with a different distance measure rather than Frobenius norm (which is related to works on 1-bit matrix sensing \cite{davenport20141})? We hope this framework of analyzing the geometry of objective function can be applied to other problems.

\bibliographystyle{alpha} 
\bibliography{ref}

\newpage

\appendix
\section{Omitted Proofs in Section~\ref{sec:rank1}}
\label{sec:rank1proof}
We first prove the equivalent form of the first and second order \KKT conditions:

\begin{lemma} [Proposition~\ref{prop:kktrank1} restated]
The first order \KKT condition of objective~\eqref{eqn:rank-1-objective} is, 
\begin{align}
2\Po (M -xx^{\top})x &=\lambda \nabla R(x)\mcom\nonumber
\end{align}
and the second order \KKT condition requires:
\begin{align}
\forall v\in \R^d, ~\norm{\Po(vx^{\top} +xv^{\top})}_F ^2 + \lambda v^{\top}\nabla^2 R(x)v&\ge 2v^{\top} \Po (M-xx^{\top})v \mper\nonumber
\end{align}
Moreover, The $\tau$-relaxed second order \KKT condition requires
\begin{align}
\forall v\in \R^d, ~\norm{\Po(vx^{\top} +xv^{\top})}_F ^2 + \lambda v^{\top}\nabla^2 R(x)v&\ge 2v^{\top} \Po (M-xx^{\top})v - \tau\|v\|^2 \mper\nonumber
\end{align}
\end{lemma}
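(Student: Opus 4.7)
The plan is to derive the gradient and Hessian of $f(x)=\tfrac12\|\Po(M-xx^\top)\|_F^2+\lambda R(x)$ directly, and then translate the standard first and second order necessary conditions $\nabla f(x)=0$ and $\nabla^2 f(x)\succeq 0$ (respectively $\succeq -\tau I$) into the matrix forms stated in the proposition. Both parts are pure calculus, so the main task is bookkeeping; there is no real conceptual obstacle.

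For the first order part, I would treat the data-fit term $q(x):=\tfrac12\sum_{(i,j)\in\Omega}(M_{ij}-x_ix_j)^2$ entrywise. Differentiating in $x_k$ and using that $\Omega$ is symmetric under $(i,j)\mapsto(j,i)$, the two contributions (from $\delta_{ik}x_j$ and $x_i\delta_{jk}$) combine to give $(\nabla q(x))_k=-2\sum_{j:(k,j)\in\Omega}(M_{kj}-x_kx_j)\,x_j$, which in compact form reads $\nabla q(x)=-2\Po(M-xx^\top)\,x$. Hence $\nabla f(x)=-2\Po(M-xx^\top)x+\lambda\nabla R(x)$, and setting $\nabla f(x)=0$ yields exactly \eqref{eq:kkt1}.

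For the second order part, the cleanest route is to compute the second directional derivative of $q$ along a direction $v\in\R^d$. Expanding
\begin{equation*}
q(x+tv)=\tfrac12\sum_{(i,j)\in\Omega}\bigl(a_{ij}-t\,b_{ij}-t^2c_{ij}\bigr)^2,
\end{equation*}
with $a_{ij}=M_{ij}-x_ix_j$, $b_{ij}=x_iv_j+x_jv_i$, $c_{ij}=v_iv_j$, and reading off the coefficient of $t^2$ gives
\begin{equation*}
v^\top\nabla^2 q(x)\,v=\sum_{(i,j)\in\Omega}b_{ij}^2-2\sum_{(i,j)\in\Omega}a_{ij}c_{ij}=\|\Po(xv^\top+vx^\top)\|_F^2-2v^\top\Po(M-xx^\top)\,v.
\end{equation*}
Adding the regularizer contribution $\lambda v^\top\nabla^2R(x)v$ then identifies the full Hessian quadratic form. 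The standard necessary condition $\nabla^2 f(x)\succeq 0$ is equivalent to $v^\top\nabla^2 f(x)v\ge 0$ for every $v$, which rearranges directly into \eqref{eq:kkt2}. Finally, replacing $\succeq 0$ by $\succeq -\tau I$ adds $-\tau\|v\|^2$ to the right-hand side and yields \eqref{eq:kkt2relaxed}.

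The only place where one has to be slightly careful is the symmetrization: the double sum over $\Omega$ automatically pairs $(i,j)$ with $(j,i)$ because of the paper's convention that either both entries are observed or neither, so the factors of $2$ come out correctly without extra case analysis. Once this is noted, the proof is a one-page computation.
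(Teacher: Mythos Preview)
Your proposal is correct and essentially identical to the paper's own proof: the paper writes out the Taylor expansion $f(x+\delta)$ and reads off the linear and quadratic terms in $\delta$, which is exactly your directional-derivative computation with $t v$ in place of $\delta$. The only cosmetic difference is that you handle the gradient entrywise before passing to the directional form, whereas the paper does both orders via the same expansion; the symmetrization remark you make is precisely the point the paper uses when collapsing $\langle \Po(M-xx^\top), x\delta^\top+\delta x^\top\rangle$ to $2\langle \Po(M-xx^\top)x,\delta\rangle$.
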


\begin{proof}

We take the Taylor's expansion around point $x$. Let $\delta$ be an infinitesimal vector, we have
\begin{align*}
f(x+\delta) & = \frac12 \norm{\Po( M-(x+\delta)(x+\delta)^{\top})}_F ^2 + \lambda R(x+\delta) + o(\|\delta\|^2)\\
& =\frac12 \norm{\Po( M-xx^\top - (x\delta^\top + \delta x^\top) - \delta\delta^\top)}_F^2 +\lambda \left(R(x) + \inner{\nabla R(x),\delta} + \frac12 \delta^T \nabla^2 R(x)\delta\right) + o(\|\delta\|^2)\\
& = \frac12 \norm{M-xx^\top}_\Omega^2 + \lambda R(x)\\
& \quad - \inner{\Po(M-xx^\top), x\delta^\top + \delta x^\top} + \inner{\nabla R(x), \delta} + o(\|\delta\|^2)\\
& \quad - \inner{\Po(M-xx^\top), \delta\delta^\top} + \frac12 \|\Po(x\delta^\top + \delta x^\top) \|_F^2 + \frac{1}{2} \lambda  \delta^\top \nabla^2 R(x) \delta + o(\|\delta\|^2).
\end{align*}

By symmetry $\inner{\Po(M-xx^\top), x\delta^\top} = \inner{\Po(M-xx^\top), \delta x^\top} = \inner{\Po(M-xx^\top)x,\delta}$, so the first order \KKT condition is $\forall \delta, \inner{-2\Po(M-xx^\top)x + \lambda \nabla R(x),\delta} = 0$, which is equivalent to that $2\Po(M-xx^\top)x = \lambda \nabla R(x)$.

The second order \KKT condition says $- \inner{\Po(M-xx^\top), \delta\delta^\top} + \frac12 \|x\delta^\top + \delta x^\top\|_F^2 + \frac{1}{2} \lambda  \delta^\top \nabla^2 R(x) \delta \ge 0$ for every $\delta$, which is exactly equivalent to Equation \eqref{eq:kkt2}.
\end{proof}

Next we show the full proof for the second order \KKT condition:

\begin{lemma}[Lemma~\ref{lem:general_x_2norm} restated]
			In the setting of Theorem~\ref{thm:rank1_main}, 
	with high probability over the choice of $\Omega$, suppose $x\in \mathcal{B}'$ satisfies second-order \KKT condition~\eqref{eq:kkt2} or $\tau$-relaxed condition for $\tau \le 0.1p$, we have $\|x\|^2 \ge 1/8$.
\end{lemma}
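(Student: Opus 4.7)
\textbf{Proof plan for Lemma~\ref{lem:general_x_2norm}.} The strategy is to mimic the proof of Lemma~\ref{lem:warmup_second-orderkkt}, but choose the test vector in the second-order KKT condition so that the regularizer Hessian term vanishes. Concretely, let $J = \{i\in [d] : |x_i| < \alpha\}$ and let $z_J\in \R^d$ denote the coordinate restriction of $z$ to $J$ (zeroed elsewhere). Because $R(x)=\sum_i h(x_i)$ is separable and $h''$ is supported on $\{t : |t|\ge \alpha\}$, the Hessian $\nabla^2 R(x)$ is a diagonal matrix vanishing on $J$. Hence for any vector $v$ supported on $J$, in particular $v=z_J$, we have $v^\top \nabla^2 R(x) v = 0$, and the second-order KKT inequality~\eqref{eq:kkt2} (or its $\tau$-relaxed version) reduces to the ``unregularized'' form used in the incoherent case.

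The main steps are then as follows. First, I would control how much $z_J$ differs from $z$: since $z-z_J$ is supported on $J^c=\{i: |x_i|\ge \alpha\}$ and $|z_i|\le \mu/\sqrt d$ everywhere, we get $\|z-z_J\|^2 \le |J^c|\,\mu^2/d$, while $|J^c|\,\alpha^2 \le \|x\|^2$ by definition of $J^c$. With $\alpha=10\mu/\sqrt d$ this yields $\|z-z_J\|^2 \le \|x\|^2/100$, so $\|z_J\|^2 \ge 1 - \|x\|^2/100$. Second, I would plug $v=z_J$ into \eqref{eq:kkt2}, noting that $x\in\mathcal{B}'$ has $\|x\|_\infty \le 4\alpha = O(\mu/\sqrt d)$ and $z_J$ has $\|z_J\|_\infty\le \mu/\sqrt d$, so both vectors are incoherent and Theorem~\ref{thm:rank1_incohenrence_concentration} applies. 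Exactly as in the proof of Lemma~\ref{lem:warmup_second-orderkkt}, the LHS becomes $2p\|x\|^2\|z_J\|^2 + 2p\inner{x,z_J}^2 \pm O(p\epsilon)$, while the RHS becomes $2p\inner{z,z_J}^2 - 2p\inner{x,z_J}^2 \pm O(p\epsilon) = 2p\|z_J\|^4 - 2p\inner{x,z_J}^2 \pm O(p\epsilon)$ (using $\inner{z,z_J} = \|z_J\|^2$). Combining the two and dropping $\inner{x,z_J}^2 \le \|x\|^2\|z_J\|^2$ yields
\begin{equation*}
3\|x\|^2\|z_J\|^2 \ge \|z_J\|^4 - O(\epsilon).
\end{equation*}
Finally, I argue by contradiction: if $\|x\|^2 < 1/8$, then from Step 1 $\|z_J\|^2 \ge 1-1/800 > 0.99$, and the last inequality forces $\|x\|^2 \ge 0.99/3 - O(\epsilon) > 1/8$, a contradiction. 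The $\tau$-relaxed case is identical, since plugging $v=z_J$ only costs an extra $\tau\|z_J\|^2 \le \tau$ on the RHS, which is absorbed when $\tau \le 0.1p$.

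The main obstacle I anticipate is the \emph{uniform} concentration issue flagged in Section~\ref{sec:intuition}: the test vector $z_J$ depends on $x$, so we cannot directly invoke a single fixed-vector concentration bound. However, since $z_J$ is always $\mu/\sqrt d$-bounded in $\ell_\infty$ and has $\|z_J\|\le \|z\|\le 1$, it lies in a fixed set of incoherent vectors; Theorem~\ref{thm:rank1_incohenrence_concentration} is stated uniformly over such pairs (as already used in Lemma~\ref{lem:warmup_second-orderkkt} where the concentration must also hold uniformly over $x\in\mathcal{B}$), so no new concentration argument is needed. The other minor point is to verify that $\nabla^2 R$ really kills $J$-supported directions; this is immediate from the separability of $R$ and the fact that $h$ and $h''$ vanish on $(-\alpha,\alpha)$, which is the precise reason for choosing the 4th-power profile for $h$.
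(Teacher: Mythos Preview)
Your proposal is correct and matches the paper's proof essentially step by step: the paper also plugs $v=z_J$ with $J=\{i:|x_i|\le\alpha\}$ into~\eqref{eq:kkt2} so that the regularizer Hessian term vanishes, applies Theorem~\ref{thm:rank1_incohenrence_concentration} to both sides, and uses the same Cauchy--Schwarz step to get $3\|x\|^2\|z_J\|^2 \ge \|z_J\|^4 - O(\epsilon)$. The only cosmetic difference is that the paper first disposes of $\|x\|\ge 1$ and then bounds $\|z_J\|^2\ge 1/2$ directly via $|J^c|\le \|x\|^2/\alpha^2$, whereas you obtain the slightly sharper $\|z_J\|^2\ge 1-\|x\|^2/100$ and close by contradiction.
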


\begin{proof}	If $\|x\|\ge 1$, then we are done. Therefore in the rest of the proof we assume $\|x\|\le 1$. 
	The proof is very similar to Lemma~\ref{lem:warmup_second-orderkkt}. We plug in $v = z_J$ instead into equation~\eqref{eq:kkt2}, where $J = \{i: |x_i|\le \alpha\}$. Note that $z_J^\top \nabla^2 R(z_J)z_J$ vanishes. We plug in $v = z_J$ in the equation~\eqref{eq:kkt2} and obtain that $x$ satisfies that 
	\begin{equation}
	\Norm{\Po(z_Jx^{\top} +xz_J^{\top})}_F ^2 \ge 2z_J^{\top} \Po (M-xx^{\top})z_J \mper\label{eqn:15}
	\end{equation}
	
		Note that we assume $\|x\|_{\infty}\le 2\alpha $, and in the beginning of the proof we assume wlog $\|x\|\le 1$. Moreover, we have $\|z_J\|\le \frac{\mu}{\sqrt{d}}$ an, $\|z_J\|\le 1$. Similarly to the derivation in the proof of Lemma~\ref{lem:warmup_second-orderkkt}, we apply Theorem~\ref{thm:rank1_incohenrence_concentration} (twice) and obtain that with high probability over the choice of $\Omega$, for every $x$, for $\epsilon = \tilde{O}(\mu^2(pd)^{-1/2})$, 
	\begin{align}
	\textup{LHS of}~\eqref{eqn:15} &  = p  	\Norm{z_Jx^{\top} +xz_J^{\top}}_F^2 \pm O(p\epsilon)  = 2p\|x\|^2 \|z_J\|^2 + 2p\inner{x,z_J}^2 \pm O(p\epsilon) \mper\nonumber
	\end{align}
	\begin{align}
	\textup{RHS of}~\eqref{eqn:15} &  = 2\left(\inner{\Po(zz^{\top}), \Po(z_Jz_J^{\top})} - \inner{\Po(xx^{\top}), \Po(z_Jz_J^{\top})}\right)\tag{Since $M = zz^{\top}$}\\
	& = 2\|z_J\|^4 - 2\inner{x,z_J}^2  \pm O(p\epsilon) \mper\tag{by Theorem~\ref{thm:rank1_incohenrence_concentration} }
	\end{align}
	(Again notice that using $\tau$-relaxed second order \KKT condition does not effect the RHS by too much, so it does not change later steps.)
	Therefore plugging the estimates above back into equation~\eqref{eqn:15}, we have that 
	\begin{equation}
	p\|x\|^2 \|z_J\|^2 + 2p\inner{x,z_J}^2 \ge p\|z_J\|^4 \pm O(p\epsilon)\mcom \nonumber
	\end{equation}
		Using Cauchy-Schwarz, we have $\|x\|^2\|z_J\|^2\ge \inner{x,z_J}^2$, and therefore we obtain that $\|z_J\|^2\|x\|^2 \ge \frac{1}{3}\|z_J\|^4 - O(\epsilon)$. 
	
	Finally, we claim that $\|z_J\|^2 \ge 1/2$, which completes the proof since $\|x\|^2 \ge \frac{1}{3}\|z_J\|^2 - O(\eps) \ge 1/8$. 
	
	\begin{claim}\label{claim:z_L}
				Suppose $\alpha \ge \frac{4\mu}{\sqrt{d}}$ and $x$ satisfies $\|x\|_{\infty}\le 4\alpha$ and $\|x\|\le 2$. Let $J = \{i: |x_i|\le \alpha\}$. Then we have that $\|z_{J}\|\ge 1/2$. 	\end{claim}

	The claim can be simply proved as follows: Since $\|x\|^2 \le 2$ we have that $|J^c|\le 2/\alpha^2$ and therefore $\|z_{J^c}\|^2 \le 2\mu^2 /(d\alpha^2)$.  This further implies that $\|z_J\|^2 = \|z\|^2 - \|z_L\|^2 \ge (1-2\mu^2 /(d\alpha^2))  \ge \frac{1}{2}$ because $\alpha \ge \frac{2\mu}{\sqrt{d}}$.
					\end{proof}

\begin{lemma}[Lemma~\ref{lem:rank-1-reduction} restated]
	Suppose $x\in \mathcal{B}'$ satisfies that $\|x\|^2\ge 1/8$, under the same assumption as Lemma~\ref{lem:first-order-general-basic}. we have, 		\begin{equation}
	\Norm{\inner{x,z}z-\|x\|^2x} \le O(\epsilon)\nonumber
	\end{equation}
\end{lemma}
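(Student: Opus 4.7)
The plan is to upgrade the conclusion of \prettyref{lem:first-order-general-basic}, namely $\|\inner{z,x}z-\|x\|^2 x-\gamma\nabla R(x)\|\le O(\epsilon)$, to the stronger bound $\|\inner{x,z}z-\|x\|^2 x\|\le O(\epsilon)$, by showing that the regularizer contribution $\gamma\nabla R(x)$ can only \emph{increase} this norm. Set $a=\inner{x,z}z-\|x\|^2 x$ and $b=\gamma\nabla R(x)$, so the hypothesis reads $\|a-b\|\le O(\epsilon)$. The key sub-claim is $\inner{a,b}\le 0$: once it is in hand, the identity $\|a\|^2=\|a-b\|^2+2\inner{a,b}-\|b\|^2$ immediately yields $\|a\|^2\le\|a-b\|^2=O(\epsilon^2)$, completing the proof.

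To verify the sub-claim, note that $\gamma\ge 0$, so it suffices to show $\inner{x,z}\inner{z,\nabla R(x)}\le \|x\|^2\inner{x,\nabla R(x)}$. From the definition of $R$, the gradient satisfies $(\nabla R(x))_i=4(|x_i|-\alpha)^3\sign(x_i)$ when $|x_i|\ge\alpha$ and vanishes otherwise, so coordinate-wise $x_i(\nabla R(x))_i=4|x_i|(|x_i|-\alpha)^3\ge 0$; in particular $\inner{x,\nabla R(x)}\ge 0$. The crucial observation is that on the support of $\nabla R(x)$ we have $|x_i|\ge\alpha=10\mu/\sqrt{d}\ge 10|z_i|$ by incoherence of $z$, so
\[
|\inner{z,\nabla R(x)}|\le\sum_i|z_i|\cdot 4(|x_i|-\alpha)^3\le\tfrac{1}{10}\sum_i|x_i|\cdot 4(|x_i|-\alpha)^3=\tfrac{1}{10}\inner{x,\nabla R(x)}.
\]
Combined with $|\inner{x,z}|\le\|x\|\,\|z\|=\|x\|$, this gives $|\inner{x,z}\inner{z,\nabla R(x)}|\le(\|x\|/10)\inner{x,\nabla R(x)}$. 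The hypothesis $\|x\|^2\ge 1/8$ implies $\|x\|\ge 1/\sqrt{8}>1/10$, hence $\|x\|^2\ge\|x\|/10$, and so $\inner{x,z}\inner{z,\nabla R(x)}\le\|x\|^2\inner{x,\nabla R(x)}$, as required.

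Conceptually, $\nabla R(x)$ is coordinate-wise aligned with $x$ (it shares the same sign structure), while by incoherence the target direction $z$ has no mass on precisely those anomalously large entries where the regularizer is active. This is what makes the ``push'' from $\gamma\nabla R(x)$ a direction that \emph{increases}, rather than decreases, $\|a\|$. I do not anticipate any real obstacle beyond careful sign bookkeeping; the one quantitative input is the lower bound $\|x\|^2\ge 1/8$ supplied by \prettyref{lem:general_x_2norm}, which is exactly what guarantees $\|x\|/10$ is dominated by $\|x\|^2$ and hence that $\inner{a,b}\le 0$.
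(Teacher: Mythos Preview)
Your proposal is correct and follows essentially the same approach as the paper: both arguments show that the regularizer term can only increase $\|\inner{x,z}z-\|x\|^2x-\gamma\nabla R(x)\|$, using that $(\nabla R(x))_i$ has the same sign as $x_i$ and that $|z_i|\le |x_i|/10$ on the support of $\nabla R(x)$, together with $\|x\|^2\ge 1/8$. The only cosmetic difference is that the paper carries this out coordinate-wise (proving $(\nabla R(x))_i(x_i\|x\|^2-z_i\inner{z,x})\ge 0$ for each $i$, hence $(a_i)^2\le (a_i-b_i)^2$), whereas you aggregate to the single global inequality $\inner{a,b}\le 0$; the underlying estimates are identical.
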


\begin{proof}
	Let $L = \{i: \|x_i\| \ge \alpha\}$. For $i\not\in L$, we have that $(\nabla R(x))_i = 0$. Therefore it suffices to prove that for every $i\in L$, 
	\begin{equation}
	(z_iz^{\top}x-x_i\|x\|)^2 \le (z_iz^{\top}x-x_i\|x\|- (\gamma\nabla R(x))_i)^2\nonumber
	\end{equation}
	It suffices to prov that 
	\begin{equation}
	(\nabla R(x))_i (x_i\|x\|^2-z_i\inner{z,x}) \ge 0\label{eqn:124}
	\end{equation}
	Since we have $\nabla R(x)_i = \gamma_i x_i$ for some $\gamma_i \ge 0$, we have
	\begin{align}
	(\nabla R(x))_i \cdot x_i\norm{x}^2 & = \inner{\gamma_i x_i, x_i\norm{x}^2} \nonumber\\
	& \ge \gamma_i x_i^2\norm{x}^2 \nonumber\\
	& \ge \frac{1}{\sqrt{8}}\gamma_i x_i^2\norm{x} \tag{since $\|x\|^2 \ge 1/8$}
	\end{align}
	On the other hand, we have 
	\begin{align}
	(\nabla R(x))_i\cdot z_i\inner{z,x} &= \gamma_i x_iz_i\inner{z,x}\nonumber\\
	& \le \frac{1}{4}\gamma_ix_i^2\|x\|\|z\|\tag{by $|x_i|\ge \alpha\ge 4|z_i|$ }
	\end{align}
	Therefore combining two equations above we obtain equation~\eqref{eqn:124} which completes the proof.  
\end{proof}

\section{Handling Noise}
\label{sec:noise}

Suppose instead of observing the matrix $ZZ^T$, we actually observe a noisy version $M = ZZ^T + N$, where $N$ is a Gaussian matrix with independent $N(0,\sigma^2)$ entries. In this case we should not hope to exactly recover $ZZ^T$ (as two close $Z$'s may generate the same observation). In this Section we show even with fairly large noise our arguments can still hold.

\begin{theorem}\label{thm:rankr-noisy}
Let $\hat{\mu} = \max\{ \mu, \sqrt{\frac{4\sigma d\sqrt{\log d}}{r}}\}$.
 Suppose $p \ge C\hat{\mu}^6\kappa^{12} r^4 d^{-1}\epsilon^{-2}\log^{1.5} d$ where $C$ is a large enough constant.  Let $\alpha = 2\hat{\mu} \kappa r/\sqrt{d}, \lambda \ge \hat{\mu}^2 rp/\alpha^2$. Then with high probability over the randomness of $\Omega$, any local minimum $X$ of $f(\cdot)$ satisfies 
 $$
 \|XX^T - ZZ^T\|_F \le \epsilon.
 $$

In fact, a noise level $\sigma\sqrt{\log d} \le \mu^2 r/d$ (when the noise is almost as large as the maximum possible entry) does not change the conclusions of Lemmas in this Section.
\end{theorem}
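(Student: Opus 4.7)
The plan is to follow the proof of Theorem~\ref{thm:main-rank-r} almost verbatim, tracking the extra error terms that arise from the noise matrix $\Po(N)$ through each lemma, and to replace the exact-recovery invocation of Lemma~\ref{lem:exact} at the end by a direct approximate-recovery argument. The underlying observation is that every lemma in Section~\ref{sec:rankr} is ultimately a statement about Frobenius-norm bounds and trace inner products involving $\Po$ applied to matrices of the form $ZZ^{\top}$, $XX^{\top}$, and $VV^{\top}$; with the noisy observation $\Po(M)=\Po(ZZ^{\top})+\Po(N)$, the only new contributions are linear forms in $\Po(N)$, which can be controlled by standard matrix concentration.

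First I would record the basic bounds on the noise: with high probability $|\Po(N)|_{\infty}\le O(\sigma\sqrt{\log d})$ and $\Norm{\Po(N)}\le O(\sigma\sqrt{pd})$. The choice $\hat\mu=\max\{\mu,\sqrt{4\sigma d\sqrt{\log d}/r}\}$ is designed precisely so that $\hat\mu^2 r/d$ dominates both the entrywise incoherence bound $\mu^2 r/d$ on $ZZ^{\top}$ and the entrywise noise magnitude $\sigma\sqrt{\log d}$. This lets us treat $\Po(M)$ as if it were generated by a $\hat\mu$-incoherent signal throughout the argument, which is exactly why $\hat\mu$ replaces $\mu$ everywhere in the statement.

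Next I would re-run each lemma of Section~\ref{sec:rankr} with the noise term carried along. In Lemma~\ref{lem:almost_incoherence_rankr}, the row bound on $(\Po(M)X)_{i^{\star}}$ gains the extra term $\Norm{\Po(N)_{i^{\star}}}_1\cdot\Norm{X}_{2\to\infty}\le 2pd\cdot\sigma\sqrt{\log d}\cdot\Norm{X}_{2\to\infty}$, which is absorbed into the $2\hat\mu^2 rp\Norm{X_{i^{\star}}}$ term by the definition of $\hat\mu$, so the incoherence conclusion $\Norm{X}_{2\to\infty}\le 4\alpha$ survives. In Lemma~\ref{lem:Xnormbound} and Lemma~\ref{lem:Xnormlowerbound} the concentration inequalities (Theorem~\ref{thm:concentration_2} and Theorem~\ref{thm:rank1_incohenrence_concentration}) are applied to expressions that now include $\Po(N)X$ or $\inner{\Po(N),VV^{\top}}$; the former is bounded by $\Norm{\Po(N)}\cdot\Norm{X}_F\le O(\sigma\sqrt{pd}\Norm{X}_F)$ and the latter by $\Norm{\Po(N)}\cdot\Norm{V}_F^2$, and for $\sigma\sqrt{\log d}\lesssim \hat\mu^2 r/d$ these are dominated by the existing error terms $p\delta$, preserving the conclusions $\smin(X)\ge\sminz/4$ and $\Norm{X}_F\le O(\smaxz\sqrt r)$. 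Lemma~\ref{prop:reduction} is purely algebraic and goes through unchanged.

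The main obstacle is the final step: in the noiseless case one invokes Lemma~\ref{lem:exact} of Sun--Luo to upgrade approximate closeness to exact recovery via strong convexity in a neighborhood of $ZZ^{\top}$, but exact recovery is information-theoretically impossible in the presence of noise. Instead I would stop at the approximate bound produced directly by Lemma~\ref{prop:1}, which now yields $\Norm{XX^{\top}-ZZ^{\top}}_F\le O(\delta\kappa^2/\sminz)$ with $\delta$ inflated by a noise contribution of order $\sigma\sqrt{pd}\cdot\poly(\kappa,r,\hat\mu)/p$. The $\epsilon^{-2}$ factor in the lower bound on $p$ in the theorem statement is chosen exactly so that this noise-induced slack drops below $\epsilon$, giving $\Norm{XX^{\top}-ZZ^{\top}}_F\le\epsilon$ as claimed. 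The robustness claim in the theorem statement is then immediate: as long as $\sigma\sqrt{\log d}\le \mu^2 r/d$, we have $\hat\mu=\Theta(\mu)$ and every lemma in Section~\ref{sec:rankr} retains its conclusion with the same parameter dependence, up to the necessary transition from exact equality to the $\epsilon$-approximation above.
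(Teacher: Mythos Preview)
Your proposal is correct and matches the paper's own proof essentially step for step: identify the three places where $\Po(N)$ enters (the entrywise bound in Lemma~\ref{lem:almost_incoherence_rankr}, the first-order condition via $\Po(N)X$, and the second-order condition via $\inner{\Po(N),VV^{\top}}$), control them with the noise concentration bounds, replace $\mu$ by $\hat\mu$, and then stop at Lemma~\ref{prop:1} rather than invoking Lemma~\ref{lem:exact}. One small clarification: the $\epsilon^{-2}$ in the sample-complexity requirement is needed not only to absorb the noise-induced slack but also to make the original concentration error $\delta$ from Lemma~\ref{lem:Xnormbound} of order $\epsilon\sminz/\kappa^2$, since without Sun--Luo you can no longer upgrade an $O(1)$-scale approximation to exact recovery.
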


\begin{proof}
There are only three places in the proof where the noise will make a difference. These are: 1. The infinity norm bound of $M$, used in Lemma~\ref{lem:almost_incoherence_rankr}. 2. The LHS of first order \KKT condition (Equation~\eqref{eq:kkt1-rankr}). 3. The RHS of second order \KKT condition (Equation~\eqref{eq:kkt2-rankr}).

What we require in these three steps are: 1. $|M|_\infty$ should be smaller than $\mu^2r/d$. 2. $\inner{\Po(N), W}$ should be smaller than $|\inner{\Po(N),\Po(W)}| \le O(\sigma|Z|_\infty dr \log d + \sqrt{pd^2 r \sigma^2|W|_\infty \|W\|_F\log d})$. 3. $\|\Po(N)\| \le \epsilon p \|ZZ^T\|_F$. When we define the $\hat{\mu} = \max\{ \mu, \sqrt{\frac{4\sigma d\sqrt{\log d}}{r}}\}$, all of these are satisfied (by Lemma~\ref{lem:concentration-noise1} and \ref{lem:concentration-noise2}).

Now we can follow the proof and see $\delta \le c\epsilon\sminz/\kappa^2$ for small enough constant $c$, and By Lemma~\ref{prop:1} we know $\|XX^T-ZZ^T\|_F\le \epsilon$.
\end{proof}

\section{Finding the Exact Factorization}

In Section~\ref{sec:rankr}, we showed that any point that satisfies the first and second order necessary condition must satisfy $\|XX^T - ZZ^T\|_F \le c$ for a small enough constant $c$. In this section we will show that in fact $XX^T$ must be exactly equal to $ZZ^T$. The proof technique here is mostly based on the work of Sun and Luo\cite{sun2015guaranteed}. However we have to modify their proof because we use slightly different regularizers, and we work in the symmetric case. The main Lemma in \cite{sun2015guaranteed} can be rephrased as follows in our setting:

\begin{lemma} [Analog to Lemma 3.1 in \cite{sun2015guaranteed}] \label{lem:exact} Suppose $p\ge C\mu^4 r^6\kappa^4 d^{-1}\log d$ for large enough absolute constant $C$, and $\epsilon = \sigma_{min}(Z)^2/100$. with high probability over the randomness of $\Omega$, we have that for any point  $X$ in the set
	\begin{align}
	\mathcal{B}_{\epsilon} = \left\{X\in \R^{d\times r}: \|XX^T - ZZ^T\|_F \le \epsilon, \|X\|_{2\rightarrow \infty} \le \frac{16\mu \kappa r}{\sqrt{d}}\right\}\mcom
	\end{align}
	 		there exists a matrix $U$ such that $UU^T = ZZ^T$ and
$$
\inner{\nabla f(X), X - U} \ge \frac{p}{4}\|M-XX^T\|_F^2.
$$
As a consequence, any point $X$ in the set $\mathcal{B}$ that satisfies first order \KKT condition must be a global optimum (or, equivalently, satisfy $XX^T = ZZ^T$).
\end{lemma}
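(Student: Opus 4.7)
My approach mirrors the strategy of Sun--Luo~\cite{sun2015guaranteed}, adapted to our symmetric setting with the row-norm regularizer. Let $Z = U_Z\Sigma V_Z^\top$ be an SVD of $Z$, and define $U = ZR$ where $R$ is the orthogonal matrix that minimizes $\|X - ZR\|_F$ (equivalently, $R$ comes from the polar decomposition of $Z^\top X$). By construction $UU^\top = ZZ^\top = M$, and $U$ inherits incoherence from $Z$: $\|U_i\| = \|Z_i\| \le \mu\sqrt{r/d}$. The standard ``lifting'' lemma (due to Tu et al.\ in this form) then gives
\begin{equation}
\|X - U\|_F^2 \le \frac{2}{\sqrt{2}-1}\cdot\frac{\|XX^\top - ZZ^\top\|_F^2}{\sigma_{\min}^2(Z)},\nonumber
\end{equation}
which will be the key quantitative handle relating closeness in the factors to closeness in the product.

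I would then split the inner product
\begin{equation}
\inner{\nabla f(X), X-U} \;=\; 2\inner{\Po(XX^\top - M)X,\, X-U} \;+\; \lambda\inner{\nabla R(X),\, X-U}.\nonumber
\end{equation}
The regularizer term is nonnegative: by Proposition~\ref{prop:gradient}, $\inner{\nabla R(X), X-U} = \sum_i \Gamma_{ii}(\|X_i\|^2 - \inner{X_i, U_i})$, and $\Gamma_{ii}\neq 0$ only when $\|X_i\| \ge \alpha = 4\mu\kappa r/\sqrt{d} \ge 4\|U_i\|$, so each summand is at least $\Gamma_{ii}(\|X_i\|^2 - \|X_i\|\|U_i\|) \ge 0$. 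Thus it suffices to lower bound the least-squares term.

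For the least-squares term, I symmetrize: since $XX^\top - M$ and $\Po(\cdot)$ preserve symmetry,
\begin{equation}
2\inner{\Po(XX^\top - M)X, X-U} = \inner{\Po(XX^\top - M),\, (X-U)X^\top + X(X-U)^\top}.\nonumber
\end{equation}
Using the identity $(X-U)X^\top + X(X-U)^\top = (XX^\top - UU^\top) + (X-U)(X-U)^\top$ together with $M = UU^\top$, this equals
\begin{equation}
\|\Po(XX^\top - M)\|_F^2 \;+\; \inner{\Po(XX^\top - M),\, \Po((X-U)(X-U)^\top)}.\nonumber
\end{equation}
The first term concentrates around $p\|XX^\top - M\|_F^2$ by a straightforward Bernstein/entrywise argument (this is essentially what $\|\cdot\|_\Omega^2 \approx p\|\cdot\|_F^2$ gives for fixed symmetric matrices, or for the polynomially large net of candidates that arise from the closeness constraint). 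So the main work is showing the cross term is bounded in absolute value by, say, $\tfrac{p}{4}\|XX^\top - M\|_F^2$.

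The cross term is where the main obstacle lies, and I would attack it by applying the incoherent concentration inequality (Theorem~\ref{thm:rank1_incohenrence_concentration} in the paper, or a rank-$r$ variant) to the pair $A := XX^\top - M$ and $B := (X-U)(X-U)^\top$. Both are symmetric; both have controlled row/entry norms: $|A|_\infty,|B|_\infty \lesssim \mu^2 \kappa^2 r/d$ using the incoherence assumption $\|X\|_{2\to\infty}, \|U\|_{2\to\infty} \lesssim \mu\kappa\sqrt{r/d}$. The sampled inner product satisfies
\begin{equation}
\bigl|\inner{\Po(A),\Po(B)} - p\inner{A,B}\bigr| \;\lesssim\; p\cdot \varepsilon_0 \cdot \|A\|_F\|B\|_F,\nonumber
\end{equation}
for $\varepsilon_0$ small under the given $p$. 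Combining this with the deterministic bound $|\inner{A,B}| \le \|A\|_F\|B\|_F$ and the lifting lemma $\|B\|_F = \|X-U\|_F^2 \lesssim \|A\|_F/\sigma_{\min}^2(Z)$, we obtain
\begin{equation}
\bigl|\inner{\Po(A),\Po(B)}\bigr| \;\lesssim\; p(1+\varepsilon_0)\,\|A\|_F^2/\sigma_{\min}^2(Z) \;\le\; \tfrac{p}{100}\|A\|_F^2,\nonumber
\end{equation}
where the last inequality uses the hypothesis $\|A\|_F = \|XX^\top - ZZ^\top\|_F \le \epsilon \le \sigma_{\min}^2(Z)/100$. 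Putting this together with $\|\Po(XX^\top - M)\|_F^2 \ge \tfrac{p}{2}\|XX^\top - M\|_F^2$ (again from concentration at the required density) yields the desired lower bound $\inner{\nabla f(X), X-U} \ge \tfrac{p}{4}\|M-XX^\top\|_F^2$.

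The hardest step is the uniform concentration claim on the cross term: $X$ varies over the set $\mathcal{B}_\epsilon$, so we cannot apply a pointwise Bernstein bound. I would handle this either by (i) an $\epsilon$-net on the low-dimensional manifold parametrizing $(X, U)$ within $\mathcal{B}_\epsilon$, using the $2\to\infty$ bound to control Lipschitz constants, or (ii) by expressing the cross term as $\sum_{(i,j)\in\Omega} A_{ij}B_{ij}$ and applying the matrix concentration tools already developed for Theorems~\ref{thm:rank1_incohenrence_concentration} and~\ref{thm:concentration_2} in this paper to the appropriate bilinear form. The consequence about first-order critical points is immediate: if $\nabla f(X) = 0$ at some $X \in \mathcal{B}_\epsilon$, then $0 = \inner{\nabla f(X), X-U} \ge \tfrac{p}{4}\|M - XX^\top\|_F^2$, forcing $XX^\top = ZZ^\top$.
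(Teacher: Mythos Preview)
Your overall architecture is right and matches the paper's: show $\inner{\nabla R(X),X-U}\ge 0$ row by row (correct), construct a $U$ with $UU^\top=ZZ^\top$ close to $X$ (the paper uses an explicit normalization instead of the polar factor, but your Procrustes choice with the Tu et al.\ lifting bound is fine), and reduce to the symmetrized inner product $\inner{\Po(A),\Po(A+B)}$ with $A=XX^\top-M$ and $B=(X-U)(X-U)^\top$. Algebraically your decomposition is the same as the paper's $a$--$b$ decomposition, just grouped differently.

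The genuine gap is your concentration step $\|\Po(A)\|_F^2\ge \tfrac{p}{2}\|A\|_F^2$ uniformly over $\mathcal{B}_\epsilon$. A ``straightforward Bernstein/entrywise argument'' (equivalently Theorem~\ref{thm:rank1_incohenrence_concentration} with $W=Z=A$) gives an error of order $|A|_\infty^2\,dr\log d + |A|_\infty\|A\|_F\sqrt{pdr\log d}$ after the union bound over the $\exp(O(dr\log d))$-sized net. This error is \emph{additive} and does not scale with $\|A\|_F^2$: for $X\in\mathcal{B}_\epsilon$ with $\|A\|_F$ small (and such $X$ are exactly the ones you must handle to conclude $XX^\top=ZZ^\top$ at a critical point), the error swamps $p\|A\|_F^2$. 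Concretely, the incoherence bound $|A|_\infty\lesssim \mu^2\kappa^2 r^2/d$ is fixed, so under the hypothesis $p\ge C\mu^4 r^6\kappa^4 d^{-1}\log d$ your inequality only survives for $\|A\|_F$ bounded below by a constant depending on $r$; below that threshold it can fail. The same issue hits your cross-term estimate via Theorem~\ref{thm:rank1_incohenrence_concentration}. (There is also a small slip: from the lifting lemma $\|B\|_F\le\|X-U\|_F^2\lesssim \|A\|_F^{2}/\sigma_{\min}^2(Z)$, not $\|A\|_F/\sigma_{\min}^2(Z)$; with the square the final inequality does follow from $\|A\|_F\le\epsilon$.)

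The paper avoids this by \emph{not} trying to control $\|\Po(A)\|_F^2$ directly. It writes $A=a+b$ with $a=U(X-U)^\top+(X-U)U^\top$ and $b=B$, and observes that because $\mathrm{col}(U)=\mathrm{col}(Z)$, the matrix $a$ lies in the \emph{fixed} tangent space $T_Z=\{ZW^\top+W'Z^\top\}$, independent of $X$. On this fixed linear subspace the sampling operator satisfies a purely multiplicative restricted isometry (Lemma~\ref{lem:lowerbounda}, via Recht's tangent-space RIP), giving $\|\Po(a)\|_F^2\ge \tfrac{5p}{6}\|a\|_F^2$ with no additive loss. The second-order piece is handled by Lemma~\ref{lem:upperboundb} (Sun--Luo's Prop.~4.3), $\|\Po(b)\|_F^2\le \tfrac{p}{25}\|A\|_F^2$, and then Cauchy--Schwarz on $3\inner{\Po(a),\Po(b)}$ finishes. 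The crucial structural point your argument is missing is that the linear part $a$ lives in a subspace depending only on $Z$, which is what makes a multiplicative lower bound possible uniformly in $X$.
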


Recall $f(X)=\frac{1}{2}\|\Po(M-XX^T)\|_F^2+\lambda R(X)$. The proof of Lemma~\ref{lem:exact} consists of three steps:

1. The regularizer has nonnegative correlation with $(X-U)$: for any $U$ such that $UU^T = ZZ^T$, we have $\inner{\nabla R(X), X-U} \ge 0$. (Claim~\ref{claim:regularizer}). 

2. There exists a matrix $U$ such that $UU^T = ZZ^T$, and $U$ is close to $X$. (Claim~\ref{claim:close})

3. Argue that $\inner{\nabla f(x), X-U} \ge \frac{p}{4}\|\Po(M-XX^T)\|_F^2$ when $U$ is close to $X$. (See proof of Lemma~\ref{lem:exact}).

\noindent Before going into details, the first useful observation is that all matrices $U$ with  $UU^T = ZZ^T$ have the same row norm. 

\begin{claim}
\label{claim:Unorm}
Suppose $U,Z\in \R^{d\times r}$ satisfy  $UU^\top = ZZ^\top$. Then, for any $i\in [d]$ we have $\|U_i\| = \|Z_i\|$. Consequently, $\|U\|_F = \|Z\|_F$.
\end{claim}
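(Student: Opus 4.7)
The plan is to read off row norms directly from the diagonal entries of the Gram matrix $UU^T = ZZ^T$. Concretely, for any matrix $A \in \R^{d \times r}$, the $i$-th diagonal entry of $AA^T$ equals the squared Euclidean norm of the $i$-th row of $A$, since $(AA^T)_{ii} = \sum_{k=1}^r A_{ik}^2 = \|A_i\|^2$.

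So the first step is to apply this identity to both $U$ and $Z$: from $UU^T = ZZ^T$, taking the $i$-th diagonal entry on both sides yields $\|U_i\|^2 = (UU^T)_{ii} = (ZZ^T)_{ii} = \|Z_i\|^2$, and taking square roots gives $\|U_i\| = \|Z_i\|$ for each $i \in [d]$.

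For the Frobenius consequence, the plan is simply to sum the per-row identity over $i$:
\begin{equation*}
\|U\|_F^2 = \sum_{i=1}^d \|U_i\|^2 = \sum_{i=1}^d \|Z_i\|^2 = \|Z\|_F^2,
\end{equation*}
and take square roots. There is no substantive obstacle here; the claim is really just the observation that the Gram matrix determines all row norms (indeed all inner products between rows). The main (trivial) point to state carefully is that we compare diagonal entries rather than invoking any rotational ambiguity argument.
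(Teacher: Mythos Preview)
Your argument is correct, but it differs from the paper's. The paper proves the claim by first observing that $UU^\top = ZZ^\top$ forces $U = ZR$ for some orthonormal matrix $R \in \R^{r\times r}$, and then uses that right-multiplication by an orthonormal matrix preserves the $\ell_2$ norm of each row: $\|U_i\| = \|Z_i R\| = \|Z_i\|$. Your approach instead reads the row norms directly off the diagonal of the common Gram matrix, which is more elementary and avoids invoking (or verifying) the existence of such an $R$. The paper's route has the mild advantage of making the rotational relationship $U = ZR$ explicit, which is conceptually relevant to the surrounding discussion of the non-uniqueness of factorizations; your route is shorter and entirely self-contained for the claim as stated.
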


\begin{proof}
Suppose $UU^\top = ZZ^{\top}$, then we have $U = ZR$ where $R$ is an orthonormal matrix. In particular, the $i$-th row of $U$ is equal to
$$
U_i = Z_i R.
$$
Since $\ell_2$ norm (and Frobenius norm) is preserved after multiplying with an orthonormal matrix, we know $\|U_i\| = \|Z_i\|$. The Frobenius norm bound follows immediately.
\end{proof}

Note that this simple observation is only true in the symmetric case. This Claims serves as the same role of the bounds on row norms of $U,V$ in the asymmetric case  (Propositions 4.1 and 4.2 of \cite{sun2015guaranteed}).

Next we are ready to argue that the regularizer is always positively correlated with $X-U$.

\begin{claim}
\label{claim:regularizer}
For any $U$ such that $UU^T = ZZ^T$, we have, 
$$\inner{\nabla R(X), X-U} \ge 0.
$$
\end{claim}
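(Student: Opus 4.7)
The plan is to use Proposition~\ref{prop:gradient} to reduce the inner product to a sum of row-wise contributions, each of which can be shown to be nonnegative using Cauchy--Schwarz together with Claim~\ref{claim:Unorm} and the choice of the regularization threshold $\alpha$.

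First I would invoke Proposition~\ref{prop:gradient} to write $\nabla R(X) = \Gamma X$ with $\Gamma$ diagonal, $\Gamma_{ii} \ge 0$, and crucially $\Gamma_{ii} = 0$ whenever $\|X_i\| < \alpha$. Expanding in rows,
\begin{equation}
\inner{\nabla R(X), X - U} \;=\; \sum_{i=1}^d \Gamma_{ii}\,\inner{X_i, X_i - U_i}.\nonumber
\end{equation}
Thus it suffices to show $\inner{X_i, X_i - U_i} \ge 0$ for every index $i$ in the active set $L = \{i : \|X_i\| \ge \alpha\}$, since all other terms vanish.

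For $i \in L$, I would apply Cauchy--Schwarz to obtain
\begin{equation}
\inner{X_i, X_i - U_i} \;\ge\; \|X_i\|^2 - \|X_i\|\|U_i\| \;=\; \|X_i\|\bigl(\|X_i\| - \|U_i\|\bigr).\nonumber
\end{equation}
By Claim~\ref{claim:Unorm}, $\|U_i\| = \|Z_i\|$, and by the incoherence Assumption~\ref{assump:incoherence}, $\|Z_i\| \le \mu\sqrt{r/d}$ (using $\|Z\|_F^2 = r$ as in Section~\ref{sec:rankr}). Since Theorem~\ref{thm:main-rank-r} sets $\alpha = 4\mu\kappa r/\sqrt{d} \ge 4\mu\sqrt{r/d} \ge 4\|Z_i\|$, for any $i \in L$ we have
\begin{equation}
\|X_i\| \;\ge\; \alpha \;\ge\; 4\|U_i\|,\nonumber
\end{equation}
so $\|X_i\| - \|U_i\| \ge 0$, making each summand nonnegative. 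The sum is then nonnegative, completing the proof.

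I do not expect any serious obstacle here; the lemma is essentially a ``sanity check'' on the regularizer design. The one thing to be careful about is that the conclusion requires $\alpha$ to dominate $\|Z_i\|$ uniformly, which is exactly how $\alpha$ was chosen in Theorem~\ref{thm:main-rank-r}, and that the symmetry of the problem (so that $\|U_i\| = \|Z_i\|$ for \emph{every} valid $U$) is available via Claim~\ref{claim:Unorm} --- this would be the main subtlety if one tried to transplant the argument to the asymmetric setting.
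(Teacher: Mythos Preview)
Your proposal is correct and follows essentially the same approach as the paper: decompose into rows via $\nabla R(X)=\Gamma X$, observe that only rows with $\|X_i\|\ge\alpha$ contribute, and on those rows use Cauchy--Schwarz together with $\|U_i\|=\|Z_i\|$ (Claim~\ref{claim:Unorm}) and the choice of $\alpha$ to conclude $\inner{X_i,X_i-U_i}\ge 0$. Your write-up is in fact slightly cleaner in that it invokes Proposition~\ref{prop:gradient} explicitly and works with the threshold $\alpha$ directly.
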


\begin{proof}
Since the regularizer is applied independently to individual rows, we can rewrite $\inner{\nabla R(X), X-U} = \sum_{i=1}^n \inner{\nabla R(X_i), X_i - U_i}$, and focus on $i$-th row.

For each row $X_i$, $\nabla R(X_i)$ is 0 when $\|X_i\| \le 2\mu\sqrt{r}/\sqrt{d}$. In that case $\inner{\nabla R(X_i), X_i - U_i} = 0$.

When $\|X_i\|$ is larger than $2\mu/\sqrt{d}$, we know $\nabla R(X_i)$ is always in the same direction as $X_i$. In this case $\lambda \nabla R(X_i) = \gamma X_i$ for some $\gamma > 0$ and $\|X_i\| \ge 2\mu\sqrt{r}/\sqrt{d} \ge 2\|Z_i\| = 2\|U_i\|$ (where last equality is by Claim~\ref{claim:Unorm}). Therefore by triangle inequality
$$
\inner{X_i, X_i - U_i} \ge \|X_i\|^2 - \|X_i\|\|U_i\| \ge \|X_i\|^2/2 > 0.
$$
This then implies $\inner{\lambda\nabla R(X_i), X_i - U_i} = \gamma \inner{X_i, X_i - U_i} > 0$.

\end{proof}

Next we will prove the gradient of $\frac{1}{2}\|\Po(M-XX^T)\|_F^2$ has a large correlation with $X-U$. This is analogous to Proposition 4.2 in \cite{sun2015guaranteed}. 

\begin{claim}
\label{claim:close}
Suppose $\|XX^T - M\|_F = \epsilon \le \sigma_{min}(Z)^2/100$, there exists a matrix $U$ such that $UU^T = M$ and $\|X-U\|_F \le 5\epsilon\sqrt{r}/\sigma_{min}(Z)^2$.
\end{claim}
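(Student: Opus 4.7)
The natural candidate for $U$ is the optimum of the orthogonal Procrustes problem over the set $\{U \in \R^{d\times r}: UU^{\top} = ZZ^{\top}\}$. This set is parametrized as $U = ZR$ with $R \in O(r)$ (using that $Z$ has full column rank $r$), so I would let $Z^{\top}X = A\Sigma B^{\top}$ be the SVD and define $R_{*} := AB^{\top}$ and $U := ZR_{*}$, which manifestly satisfies $UU^{\top} = ZZ^{\top} = M$. The first-order optimality condition of the Procrustes problem yields $U^{\top}X = R_{*}^{\top}Z^{\top}X = B\Sigma B^{\top}$, which is symmetric positive semidefinite. Writing $H := X - U$, it follows that $U^{\top}H = U^{\top}X - U^{\top}U$ is symmetric (both summands are), a fact which is crucial below.

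The heart of the argument is the identity
\begin{equation*}
XX^{\top} - ZZ^{\top} = UH^{\top} + HU^{\top} + HH^{\top},
\end{equation*}
which supplies the upper bound $\|UH^{\top} + HU^{\top}\|_F \le \epsilon + \|H\|_F^2$. For a matching lower bound, I would expand
\begin{equation*}
\|UH^{\top} + HU^{\top}\|_F^2 = 2\|UH^{\top}\|_F^2 + 2\,\trace\bigl((U^{\top}H)^2\bigr).
\end{equation*}
The second term equals $2\|U^{\top}H\|_F^2 \ge 0$ by the symmetry of $U^{\top}H$, and the first is $2\,\trace(H^{\top}H \cdot U^{\top}U) \ge 2\,\smin(U)^2\|H\|_F^2 = 2\,\smin(Z)^2\|H\|_F^2$. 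Combining yields the quadratic inequality
\begin{equation*}
\sqrt{2}\,\smin(Z)\,\|H\|_F \le \epsilon + \|H\|_F^2.
\end{equation*}

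The main obstacle is that this inequality has two branches of solutions and I must rule out the large branch $\|H\|_F \gtrsim \smin(Z)$. My plan is to establish an a priori bound $\|H\|_F \le \smin(Z)/2$ via matrix perturbation theory: Weyl's inequality applied to the eigenvalues of $XX^{\top}$ and $ZZ^{\top}$ forces $|\sigma_i(X)^2 - \sigma_i(Z)^2| \le \epsilon$ (so $\sigma_i(X)$ is close to $\sigma_i(Z)$), and Davis--Kahan, applied with spectral gap $\smin(Z)^2 \ge 100\epsilon$, forces the principal angles between $\colspan(X)$ and $\colspan(Z)$ to be $O(\epsilon/\smin(Z)^2)$; together these exhibit some orthogonal $R'$ with $\|X - ZR'\|_F \le O(\epsilon\sqrt{r}/\smin(Z)^2) \le \smin(Z)/2$, and the optimality of $R_{*}$ then gives $\|H\|_F \le \|X - ZR'\|_F$, placing us on the small branch. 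Substituting back into the quadratic inequality gives $\|H\|_F \le 2\epsilon/\smin(Z)$. Using finally the normalization $\|Z\|_F^2 = r$ (which forces $\smin(Z) \le \sqrt{r}$, hence $1/\smin(Z) \le \sqrt{r}/\smin(Z)^2$) upgrades this to the claimed $\|X - U\|_F \le 5\epsilon\sqrt{r}/\smin(Z)^2$.
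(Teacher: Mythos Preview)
Your approach via the Procrustes alignment and the quadratic inequality is genuinely different from the paper's: the paper works in coordinates where $M$ is diagonal, splits $X$ into its top $r$ rows $V$ and bottom $d-r$ rows $W$, explicitly constructs $U$ by setting the bottom block to zero and normalizing the top block as $P = V(V^{\top}(M')^{-1}V)^{-1/2}$, and then bounds $\|W\|_F$ and $\|P-V\|_F$ separately via elementary estimates. No Procrustes, no quadratic, no branch issue.

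Your Procrustes/quadratic argument is natural, but there is a real gap in the step where you assert that the Davis--Kahan/Weyl a priori bound $\|X - ZR'\|_F \le O(\epsilon\sqrt{r}/\smin(Z)^2)$ is at most $\smin(Z)/2$. Under only the hypotheses $\epsilon \le \smin(Z)^2/100$ and $\|Z\|_F^2 = r$, that inequality reads $O(\sqrt{r}/100) \le \smin(Z)/2$; since $\smin(Z) \le 1$ always, this fails once $r$ is moderately large (and in any case there is no lower bound on $\smin(Z)$ in terms of $r$). So the large branch of your quadratic is not excluded as written.

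The fix is immediate, though it makes the quadratic step superfluous: your own perturbation estimate already delivers $\|X - ZR'\|_F \le O(\epsilon\sqrt{r}/\smin(Z)^2)$, and Procrustes optimality $\|H\|_F \le \|X - ZR'\|_F$ then gives the target bound directly with a small absolute constant. Carried out carefully --- split $X = P_Z X + (I-P_Z)X$, bound the orthogonal part via $(I-P_Z)XX^{\top} = (I-P_Z)E$ together with $\smin(X) \ge 0.99\,\smin(Z)$, and handle the in-subspace part by showing $CC^{\top}$ is spectrally $O(\epsilon/\smin(Z)^2)$-close to $I$ where $P_Z X = ZC$, then use $\|Z(C-R')\|_F \le \|Z\|_F\|C-R'\|$ --- this is essentially the paper's proof recast coordinate-free.
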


\begin{proof}
Without loss of generality we assume $M $ is a diagonal matrix with first $r$ diagonal terms being $\sigma_1(Z)^2, \sigma_2(Z)^2,...,\sigma_r(Z)^2$ (this can be done by a change of basis). That is, we assume $M = \diag(\sigma_1(Z)^2,\dots, \sigma_r(Z)^2), 0,\dots,0)$. We use $M'$ to denote the first $r\times r$ principle submatrix of $M$.

We write $X = \begin{bmatrix}
V \\
W
\end{bmatrix}$ where $V$ contains the first $r$ rows of $X$, and $W\in \R^{(d-r)\times r}$ contains the remaining rows in $X$. We write similarly $U = \begin{bmatrix}
P \\
Q
\end{bmatrix}$ where $P$ and $Q$ denote the first $r$ rows and the rest of rows respectively.

In order to construct $U$, we first notice that $Q$ must be constructed as a zero matrix since $M$ has non-zero diagonal only on the top-left corner. A natural guess of $P$ then becomes a ``normalized'' version of $V$.

Concretely, we construct $P := VS = V(V^T(M')^{-1}V)^{-1/2}$ (where $S:=(V^T(M')^{-1}V)^{-1/2}$). Thus, the difference between $U$ and $X$ is equal to $\|U - X\|_F \le \|P - V\|_F + \|W\|_F$. 

Since $\|XX^T - M\|_F \le \epsilon$, we know $\|M' - VV^T\|_F^2 + 2\|VW^T\|_F^2 \le \epsilon^2$. In particular both terms are smaller than $\epsilon^2$. 

First, we bound $\|W\|_F$. Note that since $\|M' - VV^T\|_F \le \epsilon \le \sigma_{min}(Z)^2/100$, we know $\sigma_{min}(V)^2 \ge 0.99 \sigma_{min}(Z)^2$. Therefore $\sigma_{min}(V) \ge 0.9\sigma_{min}(Z)$. Now we know $\|W\|_F \le \|VW^T\|_F/\sigma_{min}(V) \le 2\epsilon/\sigma_{min}(Z)$. 

Next we bound $\|P-V\|_F^2$. Since $\|M' - VV^T\|_F \le \epsilon \le \sigma_{min}(Z)^2/100$, we know $(1 - 2\epsilon/\sigma_{min}(Z)^2) VV^T \preceq M' \preceq (1 + 2\epsilon^2/\sigma_{min}(Z)^2) VV^T$. This implies $\|V\|_F\le 1.1 \|Z\|_F$, and  $(1 - 2\epsilon/\sigma_{min}(Z)^2) I \preceq V^TM^{-1}V \preceq (1 + 2\epsilon/\sigma_{min}(Z)^2) I$. Therefore the matrix $S$ is also very close to identity, in particular, $\|S-I\| \le 2\epsilon/\sigma_{min}(Z)^2$. Now we know $\|P-V\|_F = \|V\|_F \|S-I\| \le 3\epsilon \|Z\|_F/\sigma_{min}(Z)^2$. Using the fact that $\|Z\|_F = 1$ we know $\|U - X\|_F \le \|P - V\|_F + \|W\|_F \le 5\epsilon\sqrt{r}/\sigma_{min}(Z)^2$.

\end{proof}

We can now combine this Claim with a sampling lemma to show $\|\Po((X-U)(X-U)^T)\|_F^2$ is small:

\begin{lemma}
\label{lem:upperboundb}
Under the same setting of Lemma~\ref{lem:exact}, with probability at least $1-1/(2n)^4$ over the choice of $\Omega$, if $U$ satisfies conclusion of Claim~\ref{claim:close}, then, $$
\|\Po((X-U)(X-U)^T)\|_F^2 \le \frac{p}{25}\|M-XX^T\|_F^2.
$$
\end{lemma}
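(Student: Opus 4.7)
The plan is to apply a Bernstein-type sampling concentration to $\|\Po((X-U)(X-U)^\top)\|_F^2$, combined with a tight Frobenius bound on the rank-$r$ PSD matrix $(X-U)(X-U)^\top$. Setting $\Delta := X - U$, I would first observe that $\Delta$ is incoherent: $\|X\|_{2\to\infty} \le 16\mu\kappa r/\sqrt{d}$ by the definition of $\mathcal{B}_{\epsilon}$, while Claim~\ref{claim:Unorm} gives $\|U_i\| = \|Z_i\| \le \mu\sqrt{r/d}$, so $\|\Delta\|_{2\to\infty} \le 17\mu\kappa r/\sqrt{d}$ and therefore $|\Delta\Delta^\top|_\infty \le O(\mu^2\kappa^2 r^2/d)$. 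Next, viewing
\[
\|\Po(\Delta\Delta^\top)\|_F^2 \;=\; \sum_{(i,j)\in\Omega}\iprod{\Delta_i,\Delta_j}^2
\]
as a Bernoulli-weighted sum of independent nonnegative terms, I would invoke a Bernstein-type concentration (of the same flavor as Theorem~\ref{thm:rank1_incohenrence_concentration}) to conclude that, with probability at least $1-1/(2d)^4$,
\[
\|\Po(\Delta\Delta^\top)\|_F^2 \;\le\; 2p\,\|\Delta\Delta^\top\|_F^2 \;+\; O\bigparen{|\Delta\Delta^\top|_\infty \sqrt{p\log d}\,\|\Delta\Delta^\top\|_F} \;+\; O\bigparen{|\Delta\Delta^\top|_\infty^2 \log d}.
\]
Uniformity of this bound over $X\in\mathcal{B}_{\epsilon}$ would be obtained through a standard covering / net argument of the same flavor as the one implicit in Theorem~\ref{thm:concentration_2}.

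To control the main term $2p\,\|\Delta\Delta^\top\|_F^2$, I would use $\|\Delta\Delta^\top\|_F \le \|\Delta\|_F^2$. Claim~\ref{claim:close}'s bound $\|\Delta\|_F \le 5\sqrt{r}\,\eta/\sigma_{\min}(Z)^2$ (writing $\eta := \|M - XX^\top\|_F$) is a little loose for our purposes, because once squared the $\sqrt{r}$ factor would ultimately require $\sigma_{\min}(Z)^2 \gtrsim r$ to close the argument. Instead I would take $U$ to be the \emph{optimally aligned} factor $U = Z R^\star$ with $R^\star = \argmin_{R\in O(r)}\|X-ZR\|_F$; a Davis--Kahan / Tu--Boczar--Simchowitz--Soltanolkotabi--Recht style argument then yields $\|X-U\|_F \le O(\eta/\sigma_{\min}(Z))$, which in turn implies the bound in Claim~\ref{claim:close} (since $\sigma_{\min}(Z) \le 1 \le \sqrt{r}$), so this $U$ is an admissible choice. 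Consequently $\|\Delta\Delta^\top\|_F \le O(\eta^2/\sigma_{\min}(Z)^2)$, and combining with the hypothesis $\eta \le \sigma_{\min}(Z)^2/100$ gives $\|\Delta\Delta^\top\|_F^2 \le O(\eta^2/10^4)$, so that $2p\|\Delta\Delta^\top\|_F^2 \le p\eta^2/25$ for appropriate absolute constants. The Bernstein lower-order terms are absorbed by the sample-size assumption $p \ge C\mu^4 r^6\kappa^4 d^{-1}\log d$.

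The main obstacle I expect is making the Bernstein concentration uniform over $X\in\mathcal{B}_{\epsilon}$: because $\Delta\Delta^\top$ depends quadratically on $X$ (and implicitly on the $X$-dependent choice of the aligned $U$), the covering argument has to be carried out carefully, with the net size and union-bound slack bounded by polynomials in $\mu,\kappa,r$ that fit comfortably inside the sample complexity assumed in Lemma~\ref{lem:exact}.
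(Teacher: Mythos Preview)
Your approach is essentially sound, but it takes a genuinely different route from the paper, and there is one point worth flagging.

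The paper does not actually prove this lemma: it simply cites Proposition~4.3 of Sun--Luo and says the statement is ``directly implied'' by it. The accompanying intuitive paragraph uses the bound $\|\Delta\Delta^\top\|_F \le 25r\,\eta^2/\sigma_{\min}(Z)^4$ coming straight from Claim~\ref{claim:close}, together with $\|\Po(\cdot)\|_F^2 \approx p\|\cdot\|_F^2$. You correctly observe that this naive combination does \emph{not} close: to get $2p\|\Delta\Delta^\top\|_F^2 \le \frac{p}{25}\eta^2$ one would need $\eta \lesssim \sigma_{\min}(Z)^4/r$, whereas the ambient hypothesis only gives $\eta \le \sigma_{\min}(Z)^2/100$, and $\sigma_{\min}(Z)\le 1$ under the paper's normalization. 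So the paper is implicitly relying on whatever sharper estimate Sun--Luo's Proposition~4.3 provides, while you instead repair the issue by swapping in the Procrustes-optimal $U = ZR^\star$, for which $\|X-U\|_F \le O(\eta/\sigma_{\min}(Z))$ removes the offending $\sqrt{r}$ factor and makes the main term go through with room to spare. This is a legitimate and arguably cleaner fix; it has the added benefit of being self-contained rather than a pointer to another paper.

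The one caveat is that, strictly speaking, you are not proving Lemma~\ref{lem:upperboundb} as stated --- that lemma is phrased for ``$U$ satisfying the conclusion of Claim~\ref{claim:close}'', i.e.\ merely $UU^\top=M$ and $\|X-U\|_F \le 5\sqrt{r}\,\eta/\sigma_{\min}(Z)^2$, and there exist such $U$ for which your tighter Frobenius bound fails. What you actually prove is the lemma for the \emph{specific} Procrustes $U$. This is harmless for the downstream application: in the proof of Lemma~\ref{lem:exact} one only needs \emph{some} $U$ with $UU^\top=M$ for which both Lemma~\ref{lem:upperboundb} and Lemma~\ref{lem:lowerbounda} hold, and Lemma~\ref{lem:lowerbounda} depends only on $\colspan(U)=\colspan(Z)$, so your Procrustes $U$ works there too. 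Just be explicit that you are replacing the claimed $U$ by the aligned one throughout that argument, rather than proving the lemma verbatim.
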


Intuitively, this Lemma is true because $\|(X-U)(X-U)^T\|_F \le 25\|M-XX^T\|_F^2 r/\sigma_{min}(Z)^4$, which is much smaller than $\|M-XX^T\|_F$ when $\|M-XX^T\|_F$ is small. By concentration inequalities we expect $\|\Po((X-U)(X-U)^T)\|_F^2$ to be roughly equal to $p \|(X-U)(X-U)^T\|_F$, therefore it must be much smaller than $p\|M-XX^T\|_F^2$.
The proof of this Lemma is exactly the same as Proposition 4.3 in \cite{sun2015guaranteed} (in fact, it is directly implied by Proposition 4.3), so we omit the proof here. 
We also need a different concentration bound for the projection of the norm of the matrix $a = U(X-U)^T + (X-U)U^T$. Unlike the previous lemma, here we want $\|\Po(a)\|_F$ to be large.

\begin{lemma}
\label{lem:lowerbounda}
Under the same setting of Lemma~\ref{lem:exact}, let $a = U(X-U)^T + (X-U)U^T$ where $U$ is constructed as in Claim~\ref{claim:close}.  Then, with high probability, we have that for any $X\in \mathcal{B}_{\epsilon}$, 
$$
\|\Po(a)\|_F^2 \ge \frac{5p}{6} \|a\|_F^2.
$$
\end{lemma}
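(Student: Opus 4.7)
The plan is to exploit two structural facts about $a = U(X-U)^T + (X-U)U^T$: it has rank at most $2r$, and it is row-incoherent. Once these facts are in place, the desired lower bound $\|\Po(a)\|_F^2 \geq \frac{5p}{6}\|a\|_F^2$ should follow from a uniform concentration inequality of the same flavor already used in the paper (Theorem~\ref{thm:rank1_incohenrence_concentration} and Theorem~\ref{thm:concentration_2}), applied to the class of such low-rank, incoherent symmetric matrices as $X$ ranges over $\mathcal{B}_\epsilon$.

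First I would quantify the incoherence of $a$. Each summand $U(X-U)^T$ has rank at most $r$, so $\mathrm{rank}(a)\le 2r$. By Claim~\ref{claim:Unorm}, $\|U_i\| = \|Z_i\| \le \mu\sqrt{r/d}$ (using $\|Z\|_F^2 = r$). The definition of $\mathcal{B}_\epsilon$ gives $\|X_i\| \le 16\mu\kappa r/\sqrt{d}$, and hence $\|(X-U)_i\| = O(\mu\kappa r/\sqrt{d})$. Combining these,
\[
|a_{ij}| \le \|U_i\|\,\|(X-U)_j\| + \|(X-U)_i\|\,\|U_j\| = O(\mu^2 \kappa r^{3/2}/d),
\]
and every row of $a$ has $\ell_2$ norm $O(\mu^2\kappa r^{3/2}/\sqrt{d})$. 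So $a$ sits in a bounded-entry, low-rank, row-incoherent family, which is exactly the setting where sampling with rate $p$ preserves Frobenius norm up to multiplicative $(1\pm o(1))$ factor.

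Next I would invoke concentration pointwise. For a fixed matrix $b$ of the form $Y_1 Y_2^T + Y_2 Y_1^T$ with $Y_1,Y_2\in\R^{d\times r}$ obeying the same incoherence as above, Bernstein's inequality applied to $\sum_{(i,j)\in\Omega}b_{ij}^2$ gives
\[
\bigl|\|\Po(b)\|_F^2 - p\|b\|_F^2\bigr| \le O\!\bigl(\sqrt{p\log d}\cdot|b|_\infty\|b\|_F\bigr) + O\!\bigl(\log d\cdot|b|_\infty^2\bigr).
\]
Plugging in $|b|_\infty = O(\mu^2\kappa r^{3/2}/d)$ and using that $\|b\|_F^2 \ge \sigma_{\min}(Z)^2\|X-U\|_F^2/C$ (since $U$ has $\sigma_{\min}(U)=\sigma_{\min}(Z)$, and the two summands only weakly cancel out in Frobenius norm), together with $p\ge \mathrm{poly}(\mu,\kappa,r,\log d)/d$, the error term is bounded by $p\|b\|_F^2/6$ with room to spare.

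Finally, the pointwise bound has to be promoted to uniformity over $X\in\mathcal{B}_\epsilon$. The standard route is an $\epsilon$-net argument: $\mathcal{B}_\epsilon$ lies in a bounded Euclidean ball of dimension $dr$, so an $\epsilon_0$-net of size $\exp(O(dr\log(d/\epsilon_0)))$ exists. The map $X\mapsto a(X)$ is Lipschitz (the construction of $U$ in Claim~\ref{claim:close} is continuous in $X$), and both $\|\Po(a)\|_F^2$ and $\|a\|_F^2$ are polynomially Lipschitz in $a$ on the relevant compact set, so a union bound over the net combined with Lipschitz continuity extends the pointwise estimate to all $X\in \mathcal{B}_\epsilon$. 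The main obstacle is the bookkeeping: tracking how $|a|_\infty$, $\|a\|_F$ and the Lipschitz constants interact so that the final multiplicative deviation stays below $1/6$, while keeping the sample complexity consistent with the hypothesis of Theorem~\ref{thm:main-rank-r}. This is essentially repackaging the matrix-completion concentration machinery already developed in Section~\ref{sec:rank1} and the appendix, specialized to the new rank-$2r$ factorization of $a$.
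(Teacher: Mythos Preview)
Your outline has the right instincts, but the concentration step has a real gap. The target inequality is \emph{multiplicative}: it must hold even when $\|a\|_F$ is arbitrarily small, which happens whenever $X$ is close to a global minimizer. Your pointwise Bernstein bound carries additive error $O(\sqrt{p\log d}\,|a|_\infty\|a\|_F)+O(|a|_\infty^2\log d)$, and you control $|a|_\infty$ only by the absolute quantity $O(\mu^2\kappa r^{3/2}/d)$, which does not shrink with $\|a\|_F$. The inequality $\|a\|_F^2\ge \sigma_{\min}(Z)^2\|X-U\|_F^2/C$ you invoke does not rescue this, because $\|X-U\|_F$ is itself unbounded below on $\mathcal{B}_\epsilon$. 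The $\epsilon$-net step makes matters worse rather than better: a union bound over a net of cardinality $\exp(O(dr\log d))$ replaces $\log d$ by $dr\log d$ in the Bernstein error, so even if you had first proved a relative bound of the form $|a|_\infty\lesssim \mu\kappa\sqrt{r/d}\,\|a\|_F$, the resulting condition becomes $p\gtrsim \mu^2\kappa^2 r^2\log d$, which is vacuous. And the Lipschitz extension from the net to all of $\mathcal{B}_\epsilon$ again contributes a fixed additive error, which cannot be absorbed into $\tfrac{p}{6}\|a\|_F^2$ near the solutions.

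The paper avoids all of this by exploiting a structural fact your argument never uses: since $UU^\top=ZZ^\top$, the column span of $U$ equals that of $Z$ and is \emph{fixed}, independent of $X$. Consequently every $a=U(X-U)^\top+(X-U)U^\top$ lies in the fixed tangent space $T=\{ZW^\top+W'Z^\top : W,W'\in\R^{d\times r}\}$, and the lemma is exactly the restricted isometry of $\Po$ on $T$. That is Theorem~3.4 of Recht~\cite{recht2011simpler} (worked out as Eqs.~(37)--(41) in~\cite{sun2015guaranteed}): a single matrix-Bernstein bound on the operator $P_T\Po P_T-pP_T$, using only the incoherence of $Z$, gives $\|\Po(a)\|_F^2\ge \tfrac{5p}{6}\|a\|_F^2$ uniformly over all $a\in T$. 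No $\epsilon$-net over $X$ is needed, and because the conclusion is an operator-norm statement it is automatically homogeneous in $a$, hence valid for arbitrarily small $\|a\|_F$.
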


Intuitively this should be true because $a$ is in the tangent space $\{Z: Z = UW^T + (W') U^T\}$ which has rank $O(nr)$. The proof of this follows from Theorem 3.4 \cite{recht2011simpler}, and is written in detail in Equations (37) - (41) in \cite{sun2015guaranteed}.

Finally we are ready to prove the main lemma. The proof is the same as the outline given in Section 4.1 of \cite{sun2015guaranteed}. We give it here for completeness.

\begin{proof}[Proof of Lemma~\ref{lem:exact}]

Note that $f(X)$ is equal to $h(X)+\lambda R(X)$ where where $h(X) = \frac{1}{2}\|\Po(M-XX^T)\|_F^2$, and $R(X)$ is the regularizer.
 By Claim~\ref{claim:regularizer} we know $\inner{\nabla R(X), X-U} \ge 0$, so we only need to prove there exists a $U$ such that $UU^T = Z$ and $\inner{\nabla g(X), X-U} \ge \frac{p}{4}\|M-XX^T\|_F^2$.

Define $a = U(X-U)^T + (X-U)U^T$, $b = (U-X)(U-X)^T$, then $XX^T-M = a+b$ and $(X-U)X^T+X(X-U)^T = a + 2b$.

Now
\begin{align*}
\inner{\nabla h(X), X-U} & = 2\inner{\Po(XX^T - M)X, X-U} \\
& = \inner{\Po(XX^T - M), (X-U)X^T + X(X-U)^T} \\
& = \inner{\Po(a+b), \Po(a+2b)}\\
& = \|\Po(a)\|_F^2 + 2\|\Po(b)\|_F^2 + 3\inner{\Po(a), \Po(b)} \\
& \ge  \|\Po(a)\|_F^2+2\|\Po(b)\|_F^2-3\|\Po(a)\| \|\Po(b)\|.
\end{align*}

Let $\epsilon = \|M - XX^T\|_F$. Note that from Claim~\ref{claim:close} and Lemma~\ref{lem:upperboundb}, we know
$$
\|b\|_F\le \epsilon/10, \quad \|\Po(b)\|_F \le \sqrt{p}d/5.
$$
Therefore as long as we can show $\|\Po(a)\|_F$ is large we are done. This is true because $\|a\|_F \ge \|M-XX^T\|_F - \|b\|_F \ge 9\epsilon/10$. Hence by Lemma~\ref{lem:lowerbounda} we know

$$
\|\Po(A)\|_F^2 \ge \frac{5p}{6}\|a\|_F^2 \ge \frac{27}{40}p\epsilon^2.
$$

Combining the bounds for $\|\Po(a)\|_F$, $\|\Po(b)\|_F$, we know $\inner{\nabla g(X), X-U}  \ge \frac{p}{4}\|M-XX^T\|_F^2$. Together with the fact that $\inner{\nabla R(X), X-U} \ge 0$, we know 
$$
\inner{\nabla f(X), X - U} \ge \frac{p}{4}\|M-XX^T\|_F^2.
$$
\end{proof}

\section{Concentration inequality}\label{sec:concentration}

In this section we prove the concentration inequalities used in the main part. We first show that the inner-product of two low rank matrices is preserved after restricting to the observed entries. This is mostly used in arguments about the second order necessary conditions.

\begin{theorem}\label{thm:rank1_incohenrence_concentration}
		With high probability over the choice of $\Omega$, for any two rank-$r$ matrices $W,Z \in \R^{d\times d}$, we have 	\begin{equation}
	\left| \inner{\Po(W),\Po(Z)} - p \inner{W,Z} \right| \le O(|W|_\infty|Z|_\infty dr \log d + \sqrt{pdr|W|_{\infty}|Z|_{\infty}\Norm{W}_F\Norm{Z}_F\log d})\nonumber
		\end{equation}
	\end{theorem}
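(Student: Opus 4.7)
The plan is to combine Bernstein's inequality applied to a fixed pair $(W,Z)$ with an $\epsilon$-net argument over the set of rank-$r$ matrices, since the statement requires uniform control over \emph{all} low-rank $W,Z$ (which in the paper will depend on $\Omega$).

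\textbf{Step 1: Pointwise Bernstein.} For a fixed pair $(W,Z)$, write
\begin{equation*}
\inner{\Po(W),\Po(Z)} - p\inner{W,Z} = \sum_{(i,j)\in[d]\times[d]} (\delta_{ij}-p)\,W_{ij}Z_{ij},
\end{equation*}
where $\delta_{ij}\sim\mathrm{Bernoulli}(p)$ are independent. Each summand is bounded in absolute value by $|W|_\infty|Z|_\infty$, and the total variance is at most $p\sum_{ij}W_{ij}^2Z_{ij}^2 \le p\,|W|_\infty|Z|_\infty\sum_{ij}|W_{ij}Z_{ij}| \le p\,|W|_\infty|Z|_\infty\|W\|_F\|Z\|_F$, where the last step is Cauchy--Schwarz applied to the entrywise vectors $(|W_{ij}|)$ and $(|Z_{ij}|)$. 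Bernstein's inequality then gives, with probability at least $1-2e^{-t}$, the deviation is $O\bigl(|W|_\infty|Z|_\infty\cdot t + \sqrt{p\,|W|_\infty|Z|_\infty\|W\|_F\|Z\|_F\cdot t}\bigr)$. Taking $t = C\,dr\log d$ reproduces exactly the two terms in the target bound.

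\textbf{Step 2: Net over rank-$r$ matrices.} To upgrade to a uniform bound I construct an $\epsilon$-net of the set of rank-$r$ matrices in $\R^{d\times d}$ with bounded scale. By homogeneity of both sides of the inequality in $W$ and in $Z$, it suffices to consider pairs with $|W|_\infty,|Z|_\infty\le 1$, and then fix $\|W\|_F,\|Z\|_F$ up to a factor of $2$ by dyadic discretization (only $O(\log d)$ levels, since $|W|_\infty\le 1$ forces $\|W\|_F\le d$). Using the factorization $W=AB^\top$ with $A,B\in\R^{d\times r}$ and coordinate-wise nets on the factors, the collection of rank-$r$ matrices with $\|W\|_F\le M$ admits a Frobenius $\epsilon$-net of cardinality $(CdM/\epsilon)^{O(dr)}$. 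Choosing $\epsilon = d^{-10}$ and union bounding Step~1 over all pairs in the net $\mathcal{N}\times\mathcal{N}$ uses $\log|\mathcal{N}|^2 = O(dr\log d)$, which is absorbed by the choice $t=Cdr\log d$ above.

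\textbf{Step 3: Discretization error.} For arbitrary rank-$r$ $W$, let $W'\in\mathcal{N}$ be a nearest net-point; then $\|W-W'\|_F\le\epsilon$ and the entries of $W-W'$ are bounded by $\epsilon$, so
\begin{equation*}
\bigl|\inner{\Po(W-W'),\Po(Z)}\bigr| \le \|\Po(W-W')\|_F\|\Po(Z)\|_F \le \epsilon\|Z\|_F,
\end{equation*}
and analogously for $Z$. Because $\epsilon$ is polynomially small in $d$ while the target bound is at least $\Omega(|W|_\infty|Z|_\infty)$, these discretization errors are negligible. Finally, the infinity-norm constraint transfers from $W$ to $W'$ up to an additive $\epsilon$, so working with the net-constrained family loses nothing.

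\textbf{Main obstacle.} The delicate point is Step~2: getting an $e^{O(dr\log d)}$ net that simultaneously respects both $|W|_\infty$ and $\|W\|_F$. The Frobenius net is standard via SVD/factorization, and since $|W-W'|_\infty \le \|W-W'\|_F$ the infinity-norm constraint is automatically close, so the only real cost is the dyadic enumeration of scales; this contributes a harmless $O(\log d)$ factor that is dominated by the $dr\log d$ in the exponent. Once this is in place, the two terms in the theorem correspond precisely to the sub-Gaussian and sub-exponential regimes of Bernstein with $t=\Theta(dr\log d)$.
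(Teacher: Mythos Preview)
Your proposal is correct and follows essentially the same route as the paper: pointwise Bernstein on $\sum_{ij}(\delta_{ij}-p)W_{ij}Z_{ij}$, followed by a union bound over an $\epsilon$-net of rank-$r$ matrices of cardinality $d^{O(dr)}$, with the discretization error absorbed because the net is polynomially fine in $d$. The only cosmetic differences are that the paper normalizes $\|W\|_F=\|Z\|_F=1$ (avoiding your dyadic layering) and bounds the variance via $\sum_{ij}W_{ij}^2Z_{ij}^2\le |W|_\infty^2\sum_{ij}Z_{ij}^2$ rather than your Cauchy--Schwarz step; also note that in the paper's symmetric sampling model $\delta_{ij}=\delta_{ji}$, which you should fold in by merging symmetric pairs (the paper does this explicitly).
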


\begin{proof}
Since both LHS and RHS are bilinaer in both $W$ and $Z$, without loss of generality we assume the Frobenius norms of $W$ and $Z$ are all equal to 1. Note that in this case we should expect $|W|_\infty \ge 1/d$. 

Let $\delta_{i,j}$ be the indicator variable for $\Omega$, we know
$$
\inner{\Po(W, Z} = \sum_{i,j} \delta_{i,j} W_{i,j}Z_{i,j},
$$
and in expectation it is equal to $p \inner{W,Z}$. Let $Q = \sum_{i,j} (\delta_{i,j} - p)W_{i,j}Z_{i,j}$. We can then view $Q$ as a sum of independent entries (note that $\delta_{i,j} = \delta_{j,i}$, but we can simply merge the two terms and the variance is at most a factor 2 larger). The expectation $\E[Q] = 0$. Each entry in the sum is bounded by $|W|_\infty|Z|_\infty$, and the variance is bounded by
\begin{align*}
\Var[Q] & \le 
p \sum_{i,j} (W_{i,j} Z_{i,j})^2\\
& \le p \max_{i,j} |W_{i,j}|^2 \sum_{i,j} Z_{i,j}^2 \\
& \le p |W|_{\infty}^2.
\end{align*}

Similarly, we also know $\Var[Q] \le p |Z|_{\infty}^2$ and hence $\Var[Q] \le p |W|_{\infty}|Z|_{\infty}$.

Now we can apply Bernstein's inequality, with probability at most $\eta$,

$$
|Q-\E[Q]| \ge |W|_\infty|Z|_\infty \log 1/\eta + \sqrt{p |W|_{\infty}|Z|_{\infty} \log (1/\eta)}.
$$

By Proposition \ref{prop:epsnet}, there is a set $\Gamma$ of size $d^{O(dr)}$ such that for any rank $r$ matrix $X$, there is a matrix $\hat{X}\in \Gamma$ such that $\|X-\hat{X}\|_F \le 1/d^3$. When $W$ and $Z$ come from this set, we can set $\eta = d^{-Cdr}$ for a large enough constant $C$. By union bound, with high probability
$$
|Q-\E[Q]| \le O(|W|_\infty|Z|_\infty dr \log d + \sqrt{pdr |W|_{\infty}|Z|_{\infty}\log d}).
$$

When $W$ and $Z$ are not from this set $\Gamma$, let $\hat{W}$ and $\hat{Z}$ be the closest matrix in $\Gamma$, then we know $|\inner{\Po(W),\Po(Z)} - p \inner{W,Z} - (\inner{\Po(\hat{W}),\Po(\hat{Z})} - p \inner{\hat{W},\hat{Z}})| \le O(1/d^3) \ll |W|_\infty|Z|_\infty dr \log d$. Therefore we still have 
$$
|\inner{\Po(W),\Po(Z)} - p \inner{W,Z}| \le O(|W|_\infty|Z|_\infty dr \log d + \sqrt{pdr|W|_{\infty}|Z|_{\infty}\Norm{W}_F\Norm{Z}_F\log d}).
$$

\end{proof}

Next Theorem shows $\Po (XX^T)X$ is roughly equal to $pXX^TX$, this is one of the major terms in the gradient.

\begin{theorem}\label{thm:concentration_2}
When $p \ge \frac{C\nu^6 r\log^2 d}{d\epsilon^2}$ for a large enough constant $C$, 
With high probability over the randomness of $\Omega$, for any matrix $X \in \R^{d\times r}$ such that $\|X_i\| \le \nu\sqrt{\frac{1}{d}} \|X\|_F$, we have 	\begin{equation}
	\|\Po(XX^{\top})X - pXX^TX\|_F\le p\eps \|X\|_F^3
 	\end{equation}

 	\end{theorem}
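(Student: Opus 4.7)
The plan is to prove $\|\Po(XX^\top)X - p\,XX^\top X\|_F \le p\epsilon\|X\|_F^3$ uniformly over the incoherent set by combining a pointwise Bernstein-type tail bound with an $\epsilon$-net argument, following the standard ``partial observation $\approx$ scaled full observation'' template used throughout the paper. By homogeneity of both sides under $X\mapsto cX$, I may assume $\|X\|_F = 1$, so the incoherence assumption reads $\|X_i\|\le \nu/\sqrt d$ and the target becomes $\|M(X)\|_F\le p\epsilon$, where $M(X) := \Po(XX^\top)X - pXX^\top X$. Writing $A := XX^\top$, the key decomposition is
\begin{equation}
M(X) = \sum_{(i,j)\in[d]\times[d]} (\delta_{ij} - p)\, A_{ij}\, e_i X_j^\top,\nonumber
\end{equation}
where $\delta_{ij} = \mathbf{1}[(i,j)\in \Omega]$; the coupling $\delta_{ij}=\delta_{ji}$ only costs constant factors after grouping ordered pairs.

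For a fixed $X$, I would bound $\|M(X)\|_F$ row by row: the $i$-th row is $m_i = \sum_j (\delta_{ij}-p)A_{ij}X_j\in \R^r$, a sum of (essentially) independent vectors. Incoherence supplies exactly the three ingredients vector Bernstein needs: the per-entry bound $|A_{ij}| = |\iprod{X_i,X_j}|\le \nu^2/d$, the per-summand norm $\|A_{ij}X_j\|\le L := \nu^3/d^{3/2}$, and the row-sum
\begin{equation}
\sum_j A_{ij}^2 = X_i^\top (X^\top X) X_i \le \|X^\top X\|\,\|X_i\|^2 \le \|X\|_F^2 \cdot \nu^2/d = \nu^2/d,\nonumber
\end{equation}
which yields variance proxy $V_i := \sum_j p A_{ij}^2\|X_j\|^2 \le p\nu^4/d^2$. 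Bernstein at failure level $e^{-t}$ then gives $\|m_i\|\lesssim \sqrt{V_i t}+Lt$, and summing the squared row norms yields $\|M(X)\|_F^2\lesssim d\,(V_i t + L^2 t^2)$ — i.e.\ roughly $\sqrt{p\nu^4 t/d}+\nu^3 t/d$ for $\|M(X)\|_F$.

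To upgrade to a bound uniform in $X$, construct an $\epsilon_0$-net $\mathcal N$ in Frobenius norm for the compact set $\{X : \|X\|_F\le 1,\ \|X_i\|\le \nu/\sqrt{d}\}$; standard volume bounds give $|\mathcal N|\le (3/\epsilon_0)^{dr}$. Union-bound the pointwise estimate over $\mathcal N$ with $t = C_1 dr\log d$, which covers every $\hat X\in\mathcal N$ with overwhelming probability. For a general incoherent $X$, find its nearest neighbor $\hat X\in\mathcal N$ and control $\|M(X)-M(\hat X)\|_F$ using that $M(\cdot)$ is a degree-$3$ polynomial in the entries of $X$ and thus Lipschitz with $\poly(d)$ constant on the unit Frobenius ball; taking $\epsilon_0 = 1/\poly(d)$ makes the discretization error at most $p\epsilon/2$. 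Undoing the initial rescaling then proves the theorem.

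The main obstacle is squeezing the exact exponents $\nu^6 r \log^2 d$ out of Bernstein: the linear term $Lt$ with $L\sim \nu^3/d^{3/2}$ and $t\sim dr\log d$ has to be balanced against the subgaussian term $\sqrt{V_i t}$ with $V_i\sim p\nu^4/d^2$, and whichever dominates must yield the stated sample-complexity $p\ge C\nu^6 r\log^2 d/(d\epsilon^2)$. One also has to verify that the Lipschitz estimate for $M(X)-M(\hat X)$ is uniform on the incoherent set and that the symmetrized coupling $\delta_{ij}=\delta_{ji}$ doesn't inflate the variance beyond constants; both are routine but need to be spelled out.
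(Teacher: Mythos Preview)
Your outline has the right skeleton (vector Bernstein per row, then an $\epsilon$-net over $X$), but the aggregation step ``summing the squared row norms yields $\|M(X)\|_F^2\lesssim d\,(V_i t + L^2 t^2)$'' is the fatal loss. To survive the union bound over a net of size $(d/\epsilon_0)^{O(dr)}$ you must take $t\asymp dr\log d$, and then your bound reads
\[
\|M(X)\|_F^2 \;\lesssim\; d\cdot\frac{p\nu^4}{d^2}\cdot dr\log d \;+\; d\cdot\frac{\nu^6}{d^3}\cdot (dr\log d)^2 \;=\; p\,\nu^4 r\log d \;+\; \nu^6 r^2\log^2 d .
\]
Forcing either term below $p^2\epsilon^2$ gives $p\gtrsim \nu^4 r\log d/\epsilon^2$ or $p\gtrsim \nu^3 r\log d/\epsilon$ --- both with \emph{no} $1/d$. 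The theorem, and every downstream application in the paper, needs $p\asymp \poly(\nu,r,\log d)/d$; your argument as written would require $p=\Omega(1)$.

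The missing idea is that you cannot afford to evaluate each row at its tail and multiply by $d$; you must exploit that the rows are independent and that $\sum_i \|m_i\|^2$ concentrates around its \emph{mean} $\sum_i \E\|m_i\|^2 \lesssim p\nu^6/d$. The paper does this with a two-level argument: first, vector Bernstein shows each $\sqrt{Z_i}:=\|m_i\|$ is subgaussian with variance proxy $O(\E[Z_i]\log d)$ (after a truncation to $\bar Z_i$ justified by the high-probability event $|S_i|\le 2pd$); second, the Hsu--Kakade--Zhang bound for sums of squares of independent subgaussians controls $\sum_i \bar Z_i$ at confidence $e^{-t}$ by $O\bigl(\E[Z_i]\cdot\log d\cdot(d+\sqrt{dt}+t)\bigr)$. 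With $t\asymp dr\log d$ this is $O(p\nu^6 r\log^2 d/d)$, which is exactly what gives $p\ge C\nu^6 r\log^2 d/(d\epsilon^2)$. Your decoupling and net-perturbation remarks are fine; it is only the row-aggregation that needs to be replaced.
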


\begin{proof}
	 	
Without loss of generality we assume $\|X\|_F = 1$. Let $\delta_{i,j}$ be the indicator variable for $\Omega$, we first prove 
the result when $\delta_{i,j}$ are independent, then we will use standard techniques to show the same argument works for $\delta_{i,j} = \delta_{j,i}$. 

Note that
$$
[\Po(XX^{\top})X]_i = \sum_{j} \delta_{i,j} \inner{X_i,X_j} X_j,
$$
whose expectation is equal to
$$
[pXX^TX]_i = p \sum_{j} \inner{X_i,X_j} X_j.
$$

We know $\|X_i\| \le \nu\sqrt{\frac{1}{d}}$, therefore each term is bounded by $\nu^3 (1/d)^{3/2}$. Let $Z_i$ be a random variable that is equal to $\|\Po(XX^{\top})X]_i - [pXX^TX]_i\|^2$, then it is easy to see $\E[Z_i] \le pd \nu^6 (r/d)^3 = p\nu^6/d^2$. and the variance $\Var[Z_i] = \E[Z_i^2] - \E[Z_i]^2 \le pd \nu^{12}(1/d)^6 + 2\E[Z_i]^2 \le 3\E[Z_i]^2$ (as long as $p > 1/d$). Our goal now is to prove $\sum_{i=1}^d Z_i \le p^2\epsilon^2$ for all $X$.

Let $\bar{Z}_i$ be a truncated version of $Z_i$. That is, $\bar{Z}_i = Z_i$ when $Z_i \le [2pd \nu^3 (1/d)^{3/2}]^2$, and $\bar{Z}_i = [2pd \nu^3 (1/d)^{3/2}]^2$ otherwise. It's not hard to see $\bar{Z}_i$ has smaller mean and variance compared to $Z_i$. Also, by vector's Bernstein's inequality (Lemma~\ref{lem:vecbernstein}), we know 
$$
\Pr[\sqrt{\bar{Z}_i} \ge  t] \le d\exp\left(-\frac{t^2}{\frac{3p\nu^6}{d^2} + 3t\frac{\nu^3}{d^{3/2}}}\right).
$$


Notice that this is only relevant when $t \le O(p\nu^3d^{-1/2})$ (because otherwise the probability is 0) and in that regime the variance term always dominates. Therefore $\bar{Z}_i$ is the square of a subgaussian random variable.

By the Bernstein's inequality, we know the moments of $\sqrt{\bar{Z}_i} - \E[\sqrt{\bar{Z}_i}]$ are dominated by a Gaussian distribution with variance $O(\E[\bar{Z}_i] \log d)$.

Now we can use the concentration bound for quadratics of the subgaussian random variables\cite{hsu2012tail}: we know that with probability $\exp(-t)$,
$$
\sum_{i=1}^d \bar{Z}_i \le O(\E[Z_i^2]\cdot(\log d)\cdot (d + 2\sqrt{dt} + 2t)).
$$

this means with probability $\exp(-Cdrlog d)$ with some large constant $C$, we know $\sum_{i=1}^d \bar{Z}_i \le O(p\nu^6r\log^2 d/d)$. The probability is low enough for us to union bound over all $X$ in a standard $\epsilon$-net such that every other $X$ is within distance $(\epsilon/d)^6$. Therefore we know with high probability for all $X$ in the $\epsilon$-net we have $\sum_{i=1}^d \bar{Z}_i \le O(p\nu^6r\log^2 d/d)$, which is smaller than $p^2 \epsilon^2$ when $p \ge \frac{C\nu^6 r\log^{1.5} d}{d\epsilon^2}$ for a large enough constant $C$.

For any $\hat{X}$ that is not in the $\epsilon$-net, let $X$ be the closest point of $X$ in the net, then $\|X-\hat{X}\|_F\le 1/d^6$, therefore the bound of $\hat{X}$ clearly follows from the bound of $X$. 

Now to convert sum of $\bar{Z}_i$ to sum of $Z_i$, notice that with high probability there are at most $2pd$ entries in $\Omega$ for every row. When that happens $Z_i$ is always bounded by $[2pd \nu^3 (1/d)^{3/2}]^2$ so $Z_i = \bar{Z}_i$. Let event $1$ be $\sum_{i=1}^d \bar{Z}_i \le p^2\epsilon^2$ for all $X$, and let event $2$ be that there are at most $2pd$ entries per row, we know with high probability both event happens, and in that case $\sum_{i=1}^d Z_i \le p^2\epsilon^2$ for all $X$.

\paragraph{Handling $\delta_{i,j} = \delta_{j,i}$} First notice that the diagonal entries $\delta_{i,i}$'s cannot change the Frobenius norm by more than $O(\nu^3(1/d)^{3/2}\cdot \sqrt{d}) \le p\epsilon$ so we can ignore the diagonal terms. Now for independent terms $\delta_{i,j}$, let $\gamma_{j,i} = \delta_{i,j}$, then by union bound both $\delta_{i,j}$ and $\gamma_{i,j}$ satisfy the equation, and by triangle's inequality $(\delta_{i,j}+\gamma_{i,j})/2$ also satisfies the inequality. Let $\tau_{i,j}$ be the true indicator of $\Omega$ (hence $\tau_{i,j} = \tau_{j,i}$), and $\tau'_{i,j}$ be an independent copy, we know $(\tau_{i,j}+\tau'_{i,j})/2$ has the same distribution as $(\delta_{i,j}+\gamma_{i,j})/2$ (for off-diagonal entries), therefore with high probability the equation is true for $(\tau_{i,j}+\tau'_{i,j})/2$. The Theorem then follows from the standard Claim below for decoupling (note that $\sup_{\|X\|_F = 1} \|\Po(XX^T)X-pXX^TX\|_F$ is a norm for the indicator variables of $\Omega$):

\begin{claim}
Let $X,Y$ be two iid random variables, then
$$
\Pr[\|X\| \ge t] \le 3\Pr[\|X+Y\| \ge \frac{2t}{3}].
$$
\end{claim}

\begin{proof}
Let $X,Y,Z$ be iid random variables then,
\begin{align*}
\Pr[X \ge t] & = \Pr[\|(X+Y)+(X+Z) - (Y+Z)\| \ge 2t]\\
& \le  \Pr[\|X+Y\| \ge 2t/3]+\Pr[\|X+Z\| \ge 2t/3]+\Pr[\|Y+Z\| \ge 2t/3]\\
& \le  3\Pr[\|X+Y\| \ge \frac{2t}{3}].
\end{align*}
\end{proof}

\end{proof}

Finally we argue that random sampling of a matrix gives a nice spectral approximation. This is a standard Lemma that is used in arguing about the $\Po(M)X$ term in the gradient ($\Po(M-XX^T)X$).

\begin{lemma}\label{lem:concentration-3}
	Suppose $W\in \R^{d\times d}$ satisfies that $|W|_{\infty}\le \frac{\nu}{d} \norm{W}_F$, then with high probability $(1-d^{-10})$ over the choice of $\Omega$,  
	\begin{equation}
	\|\Po(W) - pW\|\le \epsilon p \|W\|_F\mper\nonumber
	\end{equation}
	where $\epsilon = O(\nu \sqrt{\log d/(pd)})$. 
\end{lemma}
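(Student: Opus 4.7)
\textbf{Proof proposal for Lemma~\ref{lem:concentration-3}.} The plan is a direct application of the matrix Bernstein inequality to the mean-zero random matrix $S := \Po(W) - pW = \sum_{i,j}(\delta_{ij}-p)W_{ij}E_{ij}$, where $\delta_{ij} = \mathbf{1}[(i,j)\in \Omega]$ and $E_{ij} = e_i e_j^\top$. The key subtlety is that $\delta_{ij}=\delta_{ji}$, so to obtain a sum of independent summands I would group the $(i,j)$ and $(j,i)$ terms for $i<j$ and treat the diagonal terms $(i,i)$ separately:
\[
S = \sum_{i<j}(\delta_{ij}-p)\bigl(W_{ij}E_{ij} + W_{ji}E_{ji}\bigr) + \sum_{i}(\delta_{ii}-p)W_{ii}E_{ii}.
\]
Each summand $X_{ij}$ is mean-zero and independent across different unordered pairs.

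Next I would compute the two quantities needed by matrix Bernstein. The uniform bound is $\|X_{ij}\| \le \max(|W_{ij}|,|W_{ji}|) \le |W|_\infty \le \tfrac{\nu}{d}\|W\|_F =: R$. For the variance, using $E_{ij}E_{ji}=e_ie_i^\top$ and $E_{ij}E_{ij}=0$ for $i\ne j$, one gets
\[
\E[X_{ij}X_{ij}^\top] = p(1-p)\bigl(W_{ij}^2 e_i e_i^\top + W_{ji}^2 e_j e_j^\top\bigr),
\]
so summing over $i<j$ (plus the diagonal contribution) yields a diagonal matrix whose $i$-th entry is $p(1-p)\sum_j W_{ij}^2 = p(1-p)\|W_{i,:}\|^2$. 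Under the incoherence hypothesis $|W|_\infty \le \tfrac{\nu}{d}\|W\|_F$ we have $\|W_{i,:}\|^2 \le d\,|W|_\infty^2 \le \tfrac{\nu^2}{d}\|W\|_F^2$, so
\[
\Bigl\|\sum \E[X_{ij}X_{ij}^\top]\Bigr\| \le \frac{p\nu^2}{d}\|W\|_F^2 =: \sigma^2,
\]
and the symmetric bound on $\sum \E[X_{ij}^\top X_{ij}]$ holds identically.

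Matrix Bernstein then gives $\Pr[\|S\|\ge t] \le 2d\exp\!\bigl(-\tfrac{t^2/2}{\sigma^2 + Rt/3}\bigr)$. Setting $t = \epsilon p\|W\|_F$ with $\epsilon = C\nu\sqrt{\log d/(pd)}$ for a sufficiently large constant $C$, and using the assumption $pd\ge \log d$ (implicit so that $\epsilon \le \nu$), the variance term $\sigma^2$ dominates $Rt/3$; the exponent becomes $-\Omega(\epsilon^2 pd/\nu^2) = -\Omega(\log d)$, which yields failure probability at most $d^{-10}$ by choosing $C$ large.

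The only mildly nontrivial step is the bookkeeping to turn the symmetry-coupled sum into a sum of genuinely independent matrices before invoking Bernstein; once that is done, the incoherence hypothesis controls the variance proxy cleanly through the row-norm bound $\|W_{i,:}\|^2 \le \tfrac{\nu^2}{d}\|W\|_F^2$, and the final calculation is mechanical. No $\epsilon$-net argument over $W$ is needed because the claim is about a single fixed matrix $W$.
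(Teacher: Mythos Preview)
Your proposal is correct and follows essentially the same approach as the paper's proof: both apply matrix (non-commutative) Bernstein to the sum $\sum_{i,j}(\delta_{ij}-p)W_{ij}E_{ij}$, bounding the per-term norm by $|W|_\infty\le \nu\|W\|_F/d$ and the variance proxy by $p\nu^2\|W\|_F^2/d$ via the row/column-norm estimate. Your explicit grouping of $(i,j)$ and $(j,i)$ into a single independent summand is in fact cleaner than the paper, which simply remarks that the symmetry $\delta_{ij}=\delta_{ji}$ only affects constants.
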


\begin{proof}
	Without loss of generality we assume $\|W\|_F = 1$. The proof follows simply from application of Bernstein inequality. We view $\Po(W)$ as 
	\begin{equation}
	\Po(W) = \sum_{i,j\in [d]^2} s_{ij}W_{ij}\delta_{ij}\nonumber
	\end{equation}
	where $\delta_{ij}\in \R^{d\times d}$ is the indicator matrix for entry $(i,j)$, and $s_{ij}\in \{0,1\}$ are independent Bernoulli variable with probability $p$ of being 1. 
	Then we have that $\E[\Po(W)] = pW$ and $\|s_{ij}W_{ij}\delta_{ij}\|\le \frac{\nu}{d}\|W\|_F$. Moreover, we compute the variance by 
	\begin{align}
	\sum_{i,j\in [d]^2}\E[s_{ij}W_{ij}^2\delta_{ij}^{\top}\delta_{ij}] & 	= \sum_{i,j\in [d]^2}\E[s_{ij}W_{ij}^2\delta_{jj}] \nonumber\\
	& = \sum_{j\in [d]}p\left(\sum_{i\in d}W_{ij}^2\right)\delta_{jj} 
	\end{align}
	Therefore 
	\begin{equation}
	\Norm{\sum_{i,j\in [d]^2}\E[s_{ij}W_{ij}^2\delta_{ij}^{\top}\delta_{ij}]}\le \frac{p\nu^2}{d}\nonumber
	\end{equation}
	Similarly we can control $	\Norm{\sum_{i,j\in [d]^2}\E[s_{ij}W_{ij}^2\delta_{ij}\delta_{ij}^{\top}]}$ by $p\nu^2/d$ (again notice that although $\delta_{i,j} = \delta_{j,i}$ the bounds here are correct up to constant factors). Then it follows from non-commutative Bernstein inequality~\cite{2010arXiv1004.3821I} that 
 	$$\Pr_{\Omega}\left[\|\Po(W)-p(W)\|\ge \epsilon p\right]\le d\exp(-2\epsilon^2pd/\nu^2)\mper$$
\end{proof}

\paragraph{Concentration Lemmas for Noise Matrix $N$}

Next we will state the concentration lemmas that are necessary when observed matrix is perturbed by Gaussian noise. The proof of these Lemmas are really exactly the same (in fact even simpler) than the corresponding Theorem that we have just proven. The first Lemma is used in the same settings as Theorem~\ref{thm:rank1_incohenrence_concentration}.

\begin{lemma} \label{lem:concentration-noise1}
Let $N$ be a random matrix with independent Gaussian entries $N(0,\sigma^2)$. With high probability over the support $\Omega$ and the Gaussian $N$, for any low rank matrix $W$, we have
$$
|\inner{\Po(N),\Po(W)}| \le O(\sigma|Z|_\infty dr \log d + \sqrt{pd^2 r \sigma^2|W|_\infty \|W\|_F\log d}
$$
\end{lemma}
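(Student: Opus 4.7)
\textbf{Proof proposal for Lemma~\ref{lem:concentration-noise1}.}

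My plan is to exploit the fact that, conditional on $\Omega$, the random scalar $\inner{\Po(N),\Po(W)} = \sum_{(i,j)\in\Omega}N_{ij}W_{ij}$ is a mean-zero Gaussian with variance $\sigma^2\|\Po(W)\|_F^2$. This reduces the bound to controlling the conditional variance $\|\Po(W)\|_F^2$ in terms of $p\|W\|_F^2$ and $|W|_\infty$, and then uniformizing over all rank-$r$ matrices via the same $\epsilon$-net argument used in the proof of Theorem~\ref{thm:rank1_incohenrence_concentration}.

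First, I would fix $W$ and apply a standard Gaussian tail bound conditional on $\Omega$: with probability at least $1-\eta$,
\[
|\inner{\Po(N),\Po(W)}| \le O\bigl(\sigma\|\Po(W)\|_F\sqrt{\log(1/\eta)}\bigr).
\]
Next, I would control $\|\Po(W)\|_F^2=\sum_{i,j}\delta_{ij}W_{ij}^2$ via Bernstein's inequality, exactly as in Theorem~\ref{thm:rank1_incohenrence_concentration}: the summands are bounded by $|W|_\infty^2$ and the total variance is at most $p|W|_\infty^2\|W\|_F^2$, giving with probability $1-\eta$
\[
\|\Po(W)\|_F^2 \le 2p\|W\|_F^2 + O\bigl(|W|_\infty^2\log(1/\eta)\bigr).
\]
Chaining these two bounds yields
\[
|\inner{\Po(N),\Po(W)}|\le O\Bigl(\sigma\sqrt{p\log(1/\eta)}\cdot\|W\|_F + \sigma|W|_\infty\log(1/\eta)\Bigr).
\]

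To make the estimate uniform in $W$, I would reuse the $\epsilon$-net of rank-$r$ matrices from Proposition~\ref{prop:epsnet}: there is a set $\Gamma$ with $|\Gamma|\le d^{O(dr)}$ such that every rank-$r$ matrix lies within Frobenius distance $d^{-3}$ of some element of $\Gamma$. Setting $\eta=d^{-Cdr}$ for a large constant $C$ and union-bounding over $\Gamma$ gives $\log(1/\eta) = O(dr\log d)$, which reproduces the first term $\sigma|W|_\infty dr\log d$ in the claim. For the second term, I would use the crude bound $\|W\|_F\le d|W|_\infty$ to write
\[
\sigma\sqrt{pdr\log d}\cdot\|W\|_F \le \sigma\sqrt{p\,d^2r\,|W|_\infty\|W\|_F\log d},
\]
which matches the stated square-root term.

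The main technical step to check is the net-to-arbitrary-$W$ transfer: for an arbitrary rank-$r$ matrix $W$ with closest net point $\hat W$, I need $|\inner{\Po(N),\Po(W-\hat W)}|$ to be negligible compared to the target bound. This follows from a standard bound $\|\Po(N)\|_F\le O(\sigma d)$ (valid with high probability since $\Po(N)$ has at most $d^2$ independent Gaussian entries of variance $\sigma^2$), combined with Cauchy--Schwarz: $|\inner{\Po(N),\Po(W-\hat W)}|\le \|\Po(N)\|_F\|W-\hat W\|_F \le O(\sigma/d^2)$, which is dwarfed by either term in the claimed bound. The only subtlety, and the place I expect to spend real care, is the interplay between the two levels of randomness: the Gaussian step conditions on $\Omega$, while the Bernstein step treats $\Omega$ as random, so I would formally do a union bound over the joint event and verify that the same net $\Gamma$ works for both concentration statements simultaneously.
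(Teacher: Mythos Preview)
Your argument is correct and reaches the stated bound, but it is organized differently from the paper's proof. The paper simply observes that $\inner{\Po(N),\Po(W)}=\sum_{i,j}\delta_{ij}N_{ij}W_{ij}$ is a sum of independent mean-zero terms in the joint randomness of $(\Omega,N)$ and applies Bernstein's inequality exactly as in Theorem~\ref{thm:rank1_incohenrence_concentration}, then union-bounds over the same $\epsilon$-net. In contrast, you decouple the two sources of randomness: first condition on $\Omega$ to reduce to a single Gaussian with variance $\sigma^2\|\Po(W)\|_F^2$, and then separately control $\|\Po(W)\|_F^2$ by a Bernstein bound over $\Omega$. Your route has the advantage that the Gaussian step needs no truncation of the $N_{ij}$'s (which the paper's one-line appeal to Bernstein implicitly requires, since Gaussians are unbounded); it also produces the intermediate bound $\sigma\sqrt{p\,dr\log d}\,\|W\|_F$, which is actually tighter than the stated square-root term before you weaken it via $\|W\|_F\le d|W|_\infty$. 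The paper's route is shorter because it reuses Theorem~\ref{thm:rank1_incohenrence_concentration} verbatim with $N$ in place of $Z$. Both approaches use the identical $\epsilon$-net device for uniformity in $W$, and your treatment of the net-to-arbitrary transfer and of the joint union bound over the two random events is sound.
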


\begin{proof}
The proof is exactly the same as Theorem~\ref{thm:rank1_incohenrence_concentration} as $|\inner{\Po(N),\Po(W)}|$ is a sum of independent entries that follows from the same Bernstein's inequality.
\end{proof}

Next we show that random sampling entries of a Gaussian matrix gives a matrix with low spectral norm.

\begin{lemma}\label{lem:concentration-noise2}
Let $N$ be a random Gaussian matrix with independent Gaussian entries $N(0,\sigma^2)$, with high probability over the choice of $\Omega$ and $N$, we have
$$
\|\Po(N)\| \le \epsilon p \sigma d,
$$
where $\epsilon = O(\sqrt{\log d/pd})$.
\end{lemma}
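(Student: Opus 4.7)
The plan is to apply the matrix Bernstein inequality to $\Po(N)$, after first truncating the Gaussian entries so that every summand has bounded operator norm. This parallels the proof of \prettyref{lem:concentration-3}, with the only extra work being to handle the unbounded Gaussian tails.

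\textbf{Step 1 (Truncation).} Set $T := C\sigma\sqrt{\log d}$ for a sufficiently large constant $C$. Since each $N_{ij}\sim \mathcal{N}(0,\sigma^2)$, a union bound gives $|N_{ij}|\le T$ for all $i,j$ with probability at least $1-d^{-10}$. On this event $\Po(N)=\Po(\tilde N)$, where $\tilde N_{ij}:=N_{ij}\,\Ind{|N_{ij}|\le T}$; by symmetry of the Gaussian, $\E[\tilde N_{ij}]=0$ and $\E[\tilde N_{ij}^{\,2}]\le \sigma^2$. It therefore suffices to bound $\|\Po(\tilde N)\|$ unconditionally.

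\textbf{Step 2 (Decomposition).} Write $\Po(\tilde N) = \sum_{i\le j} Y_{ij}$ where
\begin{equation*}
Y_{ij} = s_{ij}\,\tilde N_{ij}\,\bigl(e_i e_j^\top + e_j e_i^\top\bigr)\ (i<j),\qquad Y_{ii} = s_{ii}\,\tilde N_{ii}\, e_i e_i^\top,
\end{equation*}
and $s_{ij}\in\{0,1\}$ is the Bernoulli-$p$ indicator of $(i,j)\in\Omega$. Grouping $(i,j)$ and $(j,i)$ into a single summand is what lets us treat the $Y_{ij}$'s as independent in spite of the symmetry constraint $N_{ij}=N_{ji}$. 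Each $Y_{ij}$ is symmetric, mean zero, and satisfies $\|Y_{ij}\|\le 2T=O(\sigma\sqrt{\log d})$.

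\textbf{Step 3 (Matrix Bernstein).} For $i<j$ we have $Y_{ij}^2 = s_{ij}\,\tilde N_{ij}^{\,2}\,(e_i e_i^\top + e_j e_j^\top)$, so
\begin{equation*}
\Bignorm{\sum_{i\le j}\E[Y_{ij}^2]} \le p\sigma^2\cdot\Bignorm{\sum_{i\le j}\bigl(e_ie_i^\top + e_je_j^\top\bigr)} = O(p\sigma^2 d).
\end{equation*}
Applying the matrix Bernstein inequality (as in \prettyref{lem:concentration-3}) yields
\begin{equation*}
\Pr\bigl[\|\Po(\tilde N)\|\ge t\bigr]\le 2d\exp\!\left(\frac{-t^2/2}{O(p\sigma^2 d) + Tt/3}\right).
\end{equation*}
Choosing $t=C'\sigma\sqrt{pd\log d}$ for a sufficiently large $C'$ makes the variance term dominate (since $pd\gg \log d$ in the regime of interest) and drives the failure probability below $d^{-10}$. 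Combining with Step 1 gives $\|\Po(N)\|\le O(\sigma\sqrt{pd\log d}) = O(\epsilon p\sigma d)$ with $\epsilon=O(\sqrt{\log d/(pd)})$, as claimed.

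The only real subtlety is the truncation in Step 1, which is necessary because otherwise the unbounded Gaussian tails prevent a direct application of matrix Bernstein; the pairing of $(i,j)$ with $(j,i)$ in Step 2 handles the symmetry of $N$. Both issues affect the final bound by at most constant factors, and the remaining computation is routine.
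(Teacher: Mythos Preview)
Your proposal is correct and follows essentially the same approach as the paper, which simply says ``the proof follows from the same argument as \prettyref{lem:concentration-3}'' (i.e., matrix Bernstein). The truncation in Step~1 is the natural extra ingredient needed to pass from bounded entries to Gaussians, and your pairing of $(i,j)$ with $(j,i)$ in Step~2 is exactly how the symmetry of $\Omega$ (and of $N$) is handled; the remainder of the argument mirrors the paper's proof of \prettyref{lem:concentration-3}.
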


\begin{proof}
Again the proof follows from the same argument as Lemma~\ref{lem:concentration-3}.
\end{proof}

\section{Auxiliary Lemmas}
\label{sec:toolbox}

\begin{lemma}\label{lem:vecbernstein}[Bernstein inequality, c.f.~\cite{2010arXiv1004.3821I}]
	Let $v_i$'s be independent random vectors and $v = \sum_{i=1}^n v_i$. Suppose $\sigma^2 = \E[\sum_{i=1}^n \|v_i\|^2]$ and for all $i$ $\|v_i\| \le R$ with probability 1, then
	\[\Pr[\|v\| > t] \le d\exp(- t^2/(3\sigma^2 + 3tR)).\]
\end{lemma}

\begin{proposition}\label{prop:2-6norm}
	Let $a_1,\dots, a_r \ge 0$, $C\ge 0$. Then $C^4(a_1^2+\dots+a_r^2)\ge a_1^6+\dots+a_r^6$ implies that $a_1^2+\dots+a_r^2\le C^2r$ and that $\max a_i \le Cr^{1/6}$. 
\end{proposition}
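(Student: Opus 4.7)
The proposition has two parts: bounding $\sum a_i^2$ and bounding $\max a_i$. The plan is to prove the sum bound first (via a power-mean-type inequality), and then deduce the max bound by combining the sum bound with the hypothesis.

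For the first part, I will apply the power mean inequality (or equivalently, Hölder) to the sequence $(a_i^2)_{i=1}^r$. Specifically, since $t \mapsto t^3$ is convex on $[0,\infty)$, Jensen's inequality gives
\[
\frac{1}{r}\sum_{i=1}^r (a_i^2)^3 \;\ge\; \left(\frac{1}{r}\sum_{i=1}^r a_i^2\right)^3,
\]
that is, $\sum_{i=1}^r a_i^6 \ge (\sum_{i=1}^r a_i^2)^3/r^2$. Combining with the hypothesis $C^4\sum a_i^2 \ge \sum a_i^6$ yields $C^4\sum a_i^2 \ge (\sum a_i^2)^3/r^2$, i.e., $(\sum a_i^2)^2 \le C^4 r^2$, and taking square roots gives $\sum a_i^2 \le C^2 r$.

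For the second part, let $m = \max_i a_i$. Then $m^6 \le \sum_i a_i^6 \le C^4 \sum_i a_i^2 \le C^4 \cdot C^2 r = C^6 r$, where the first inequality uses $a_i \ge 0$, the second is the hypothesis, and the third is the bound just proved. Taking sixth roots gives $m \le C r^{1/6}$, as desired.

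There is no real obstacle; the only mildly non-obvious step is recognizing that the hypothesis has the shape of a power-mean comparison between the $\ell^2$ and $\ell^6$ norms of $(a_i)$, which one reverses via Jensen to get the sum bound before recovering the max bound.
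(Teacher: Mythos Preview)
Your proof is correct and follows essentially the same approach as the paper: both derive the key inequality $\sum a_i^6 \ge (\sum a_i^2)^3/r^2$ and combine it with the hypothesis to bound $\sum a_i^2$, then plug back to bound $\max a_i$. The only cosmetic difference is that you obtain this inequality in one step via Jensen on $t\mapsto t^3$, whereas the paper takes two Cauchy--Schwarz steps through $\sum a_i^4$; your route is slightly more direct.
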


\begin{proof}
	By Cauchy-Schwarz inequality, we have, 
	\begin{align}
	\Paren{\sum_{i=1}^r a_i^2} \Paren{	\sum_{i=1}^r a_i^6 } \ge \Paren{\sum_{i=1}^r a_i^4}^2 \ge \Paren{\frac{1}{r}\Paren{\sum_{i=1}^r a_i^2}^2}^2 \nonumber
	\end{align}
	Using the assumption and equation above we have that  $a_1^2+\dots+a_r^2\le C^2r$. This implies with the condition that $a_1^6+\dots+a_r^6\le C^6r$, which implis that $\max a_i \le Cr^{1/6}$.
\end{proof}

\begin{proposition} \label{prop:epsnet}
For any $\zeta \in (0,1)$, there is a set $\Gamma$ of rank $r$ $d\times d$ matrices, such that for any rank $r$ $d\times d$ matrix $X$ with Frobenius norm at most $1$, there is a matrix $\hat{X} \in \Gamma$ with $\|X-\hat{X}\|_F \le \zeta$. The size of $\Gamma$ is bounded by $(d/\zeta)^{O(dr)}$.
\end{proposition}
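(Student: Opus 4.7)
The plan is the standard factorization + $\epsilon$-net argument. Any rank-$r$ matrix $X\in\R^{d\times d}$ with $\|X\|_F\le 1$ admits a factorization $X=UV^\top$ with $U,V\in\R^{d\times r}$ and $\max\{\|U\|_F,\|V\|_F\}\le 1$ (e.g., take the truncated SVD $X=U_0\Sigma V_0^\top$ and set $U=U_0\Sigma^{1/2}$, $V=V_0\Sigma^{1/2}$; then $\|U\|_F^2=\|V\|_F^2=\|\Sigma\|_*\le \|\Sigma\|_F = \|X\|_F \le 1$, and in any case the factors can be rescaled to have Frobenius norm at most $1$). So it suffices to approximate each factor in Frobenius norm.

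Let $\epsilon := \zeta/3$, and let $\mathcal{N}_\epsilon$ be an $\epsilon$-net (in Frobenius norm) of the unit Frobenius ball of $\R^{d\times r}$. A standard volume-covering argument gives $|\mathcal{N}_\epsilon|\le (3/\epsilon)^{dr}=(9/\zeta)^{dr}$. Define
\[
\Gamma \;:=\; \bigl\{\hat U\hat V^\top : \hat U,\hat V\in\mathcal{N}_\epsilon\bigr\},
\]
so that $|\Gamma|\le |\mathcal{N}_\epsilon|^2\le (9/\zeta)^{2dr}=(d/\zeta)^{O(dr)}$.

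Given any rank-$r$ $X=UV^\top$ with $\|X\|_F\le 1$ and factors normalized as above, pick $\hat U,\hat V\in\mathcal{N}_\epsilon$ with $\|U-\hat U\|_F,\|V-\hat V\|_F\le \epsilon$. Writing $UV^\top-\hat U\hat V^\top=(U-\hat U)V^\top+\hat U(V-\hat V)^\top$ and using $\|AB^\top\|_F\le \|A\|_F\|B\|$ together with $\|V\|\le \|V\|_F\le 1$ and $\|\hat U\|\le \|\hat U\|_F\le 1$, we get
\[
\|X-\hat U\hat V^\top\|_F \;\le\; \|U-\hat U\|_F\|V\|+\|\hat U\|\|V-\hat V\|_F \;\le\; 2\epsilon \;\le\; \zeta.
\]
Hence $\Gamma$ is the required set, of size $(d/\zeta)^{O(dr)}$. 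The only mild subtlety is normalizing the factors of $X$ so that both lie in the unit ball; this is routine as explained above, so there is no real obstacle.
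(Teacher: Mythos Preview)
Your approach is essentially the same as the paper's: factorize the rank-$r$ matrix and take a product of $\epsilon$-nets over the factors. The paper uses the three-factor SVD $X=UDV$ (netting $U\in\R^{d\times r}$ with $\|U\|_F\le\sqrt r$, diagonal $D\in\R^{r\times r}$ with $\|D\|_F\le 1$, and $V\in\R^{r\times d}$ with $\|V\|_F\le\sqrt r$), whereas you use the two-factor form $X=UV^\top$; both routes give the same $(d/\zeta)^{O(dr)}$ bound.

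There is one slip in your normalization step: the inequality ``$\|\Sigma\|_*\le\|\Sigma\|_F$'' is backwards (nuclear norm dominates Frobenius norm), so your balanced factorization $U=U_0\Sigma^{1/2}$, $V=V_0\Sigma^{1/2}$ only yields
\[
\|U\|_F^2=\|V\|_F^2=\trace(\Sigma)=\|X\|_*\le\sqrt{r}\,\|X\|_F\le\sqrt{r},
\]
i.e.\ $\|U\|_F,\|V\|_F\le r^{1/4}$, not $\le 1$. In fact no rescaling can do better: $\|X\|_*=\min_{UV^\top=X}\|U\|_F\|V\|_F$, and $\|X\|_*$ can equal $\sqrt r$ when $\|X\|_F=1$. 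This is harmless for the conclusion: simply net the Frobenius ball of radius $r^{1/4}$ instead of $1$ and shrink $\epsilon$ by a factor $r^{1/4}$; the resulting size is still $(Cr^{1/4}/\epsilon)^{2dr}=(d/\zeta)^{O(dr)}$ since $r\le d$.
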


\begin{proof}
Standard construction of $\epsilon$-net shows that there is a set $P \subset \R^d$ of size $(d/\epsilon)^{O(d)}$ such that for any $\|u\|\le 1$, there is a $\hat{u} \in P$ such that $\|u-\hat{u}\| \le \epsilon$. Such construction can also be applied to matrices and Frobenius norm as that is the same as vectors and $\ell_2$ norm.

Here we let $\epsilon = 0.1\zeta$, and construct three sets $P_1,P_2,P_3$ where $P_1$ is an $\epsilon$-net for $d\times r$ matrices with Frobenius norm at most $\sqrt{r}$, $P_2$ is an $\epsilon$-net for $r\times r$ diagonal matrices whose Frobenius norm is bounded by $1$, and $P_3$ is an $\epsilon$-net for $r\times d$ matrices with Frobenius norm at most $\sqrt{r}$.

Now we define $\Gamma = \{\hat{U}\hat{D}\hat{V}|\hat{U}\in P_1, \hat{D}\in P_2, \hat{V}\in P_3\}$. Clearly the size of $\Gamma$ is as promised. For any rank $r$ $d\times d$ matrix $X$, suppose its Singular Value Decomposition is $UDV$, we can find $\hat{U} \in P_1$, $\hat{D}\in P_2$ and $\hat{V} \in P_3$ that are $\epsilon$ close to $U,D,V$ respectively. Therefore $\hat{U}\hat{D}\hat{V} \in \Gamma$ and it is easy to check
$$
\|UDV - \hat{U}\hat{D}\hat{V}\|_F \le 8\epsilon \le \zeta.
$$
\end{proof}

\end{document}